\definecolor{darkblue}{rgb}{0,0.22,0.66}
\definecolor{linkgray}{rgb}{0.36, 0.36, 0.36}
\setlist[enumerate]{label={\arabic*.}, leftmargin=2em, labelwidth=*, labelsep=0.5em,itemsep=4pt}
\newcommand{\myMFabc}[4]{\expandafter#1\csname#3#4\endcsname{{#2{#4}}}}
\newcommand{\myMFcmd}[4]{\expandafter#1\csname#3#4\endcsname{{#2{\csname#4\endcsname}}}}
\newcommand{\MFabc}[3][\newcommand]{
    \def\doOld##1##2{\forcsvlist{\myMFabc{#1}{##1}{##2}}{#3}}
    \providecommand{\do}{do}
    \RenewDocumentCommand \do { >{\SplitList{,}} m } { \doOld##1 }
    \docsvlist{#2}
}
\newcommand{\MFcmd}[3][\newcommand]{
    \def\doOld##1##2{\forcsvlist{\myMFcmd{#1}{##1}{##2}}{#3}}
    \providecommand{\do}{do}
    \RenewDocumentCommand \do { >{\SplitList{,}} m } { \doOld##1 }
    \docsvlist{#2}
}
\newcommand{\bmzero}{{\bm{0}}}
\let\one\bbone
\newcommand{\hatbm}[1]{\widehat{\bm{#1}}}
\newcommand{\tildebm}[1]{\widetilde{\bm{#1}}}
\newcommand{\bmcal}[1]{\bm{\mathcal{#1}}}
\newcommand{\caltilde}[1]{\mathcal{\widetilde{#1}}}
\newcommand{\calhat}[1]{\mathcal{\widehat{#1}}} 
\newcommand{\scrtilde}[1]{\widetilde{\mathscr{#1}\mspace{1mu}\mspace{-1mu}}}
\newcommand{\scrhat}[1]{\mathscr{\widehat{#1}\mspace{1mu}\mspace{-1mu}}}
\newcommand{\bmcalhat}[1]{\bm{\mathcal{\widehat{#1}}}}
\newcommand{\bmcaltilde}[1]{\bm{\mathcal{\widetilde{#1}}}}
\newcommand{\actdef}[1]{\expandafter\def\csname#1\endcsname{{\ensuremath{\mathtt{#1}}}}}
\forcsvlist{\actdef}{ReLU, LReLU, LeakyReLU, ELU, GELU, SiLU, Softplus, dGELU, dSiLU, dSoftplus, Tanh, Sigmoid, Arctan, Softsign, SRS, dSRS, Swish, dSwish, Mish, dMish, SELU, CELU, dSELU}
\newlength{\myLength}
\let\ts\intercal
\def\ts{\textsf{T}}
\newcommand{\din}{{d_\tn{in}}}
\newcommand{\dout}{{d_\tn{out}}}
\definecolor{mylinenumbercolor}{HTML}{BEBEBE}
\let\tilde\widetilde
\let\epsilon\varepsilon
\let\tn\textnormal
\let\cdots\customcdots
\let\dots\cdots
\let\myforall\forall
\def\forall{{\myforall\, }}
\let\myexists\exists
\def\exists{{\myexists\, }}
\definecolor{mygray}{RGB}{230,230,230}
\definecolor{myorange}{HTML}{ff7f0e}
\let\cite\citep
\definecolor{orcidlogocol}{HTML}{A6CE39}
\tikzset{
	orcidlogo/.pic={
		\fill[orcidlogocol] svg{M256,128c0,70.7-57.3,128-128,128C57.3,256,0,198.7,0,128C0,57.3,57.3,0,128,0C198.7,0,256,57.3,256,128z};
		\fill[white] 
		svg{M86.3,186.2H70.9V79.1h15.4v48.4V186.2z}
		svg{M108.9,79.1h41.6c39.6,0,57,28.3,57,53.6c0,27.5-21.5,53.6-56.8,53.6h-41.8V79.1z 
			M124.3,172.4h24.5c34.9,0,42.9-26.5,42.9-39.7
			c0-21.5-13.7-39.7-43.7-39.7h-23.7V172.4z}
		svg{M88.7,56.8c0,5.5-4.5,10.1-10.1,10.1
			c-5.6,0-10.1-4.6-10.1-10.1c0-5.6,4.5-10.1,
			10.1-10.1C84.2,46.7,88.7,51.3,88.7,56.8z};
	}
}
\newcommand{\@OrigHeightRecip}{0.00390625}
\newlength{\@curXheight}
\DeclareRobustCommand\ORCID[1]{%
	\texorpdfstring{%
		\setlength{\@curXheight}{\fontcharht\font`X}%
		\href{https://orcid.org/#1}{%
			\mbox{%
				\begin{tikzpicture}[yscale=-\@OrigHeightRecip*\@curXheight,
					xscale=\@OrigHeightRecip*\@curXheight,
					transform shape]
					\pic{orcidlogo};
				\end{tikzpicture}%
			}%
		}%
	}{}%
}
\def \@fpsadddefault {%
	\edef \@fps {\@fps\csname fps@\@captype \endcsname}%
}
\crefname{hypothesis}{Hypothesis}{Hypotheses}
\crefname{fact}{Fact}{Facts}
\title{Structured and Balanced Multi-component\\ 
and Multi-layer Neural Networks
}
\author{Shijun Zhang\thanks{Corresponding author. Department of Applied Mathematics, Hong Kong Polytechnic University  (\href{mailto:shijun.zhang@polyu.edu.hk}{shijun.zhang@polyu.edu.hk})
}
\and Hongkai Zhao\thanks{Department of Mathematics, Duke University (\href{mailto:zhao@math.duke.edu}{zhao@math.duke.edu})}
\and Yimin Zhong\thanks{Department of Mathematics and Statistics, Auburn University (\href{mailto:yimin.zhong@auburn.edu}{yimin.zhong@auburn.edu}})
\and Haomin Zhou\thanks{School of Mathematics,
 Georgia Institute of Technology (\href{mailto:hmzhou@math.gatech.edu}{hmzhou@math.gatech.edu})}
 }
\begin{document}

\maketitle

\begin{abstract}
    In this work, we propose a balanced multi-component and multi-layer neural network (MMNN) structure to accurately and efficiently approximate functions with complex features, in terms of both degrees of freedom and computational cost. The main idea is inspired by a multi-component approach, in which each component can be effectively approximated by a single-layer network, combined with a multi-layer decomposition strategy to capture the complexity of the target function. Although MMNNs can be viewed as a simple modification of fully connected neural networks (FCNNs) or multi-layer perceptrons (MLPs) by introducing balanced multi-component structures, they achieve a significant reduction in training parameters, a much more efficient training process, and improved accuracy compared to FCNNs or MLPs. Extensive numerical experiments demonstrate the effectiveness of MMNNs in approximating highly oscillatory functions and their ability to automatically adapt to localized features.
    Our code and implementations are available at \href{https://github.com/ShijunZhangMath/MMNN/}{GitHub}.
\end{abstract}

\begin{keywords}
structured decomposition, function compositions, 
        deep neural networks,
        rectified linear unit, Fourier analysis
\end{keywords}

\begin{MSCcodes}
65Y10, 65Y20, 68W20, 68W25
\end{MSCcodes}

\section{Introduction}
\label{sec:intro}

The key use of neural networks is to approximate an input-to-output relation, i.e., a mapping or a function in mathematical terms. In this work, we continue our study of numerical understanding of neural network approximation of functions from representation to learning dynamics. In our earlier study \cite{ZZZZ-23}, we demonstrated that a one-hidden-layer (also known as a two-layer or shallow) network is essentially a ``low-pass filter" when approximating a function in practice. Due to the strong correlation among the family of activation functions (parameterized by the weight and bias), such as \ReLU\  (rectified linear unit), the Gram matrix, the element of which is the pairwise correlation (inner product) of the activation functions, has a fast spectral decay. If initialized randomly, the eigenvectors of the Gram matrix correspond to generalized Fourier modes from low frequency to high frequency ordered corresponding to decreasing eigenvalues. Due to the ill-conditioning of the representation, no matter how wide a one-hidden-layer network is, it can only learn and approximate smooth functions or sample low-frequency modes effectively and stably (with respect to noise or machine round-off errors). 

In this work, we propose a balanced multi-component and multi-layer neural network (MMNN) structure based on our previous understanding of a one-hidden-layer network. First, we show that a multi-layer network with a multi-component structure, each of which can be approximated well and effectively by a one-hidden-layer network, can overcome the limitation of a shallow network by smooth decomposition and transformation. Compared to a fully connected neural network of a similar structure, our proposed MMNN is much more effective in terms of representation, training, and accuracy in approximating functions, especially for functions containing complex features, e.g., high-frequency modes. The key idea of MMNNs is to view a linear combination of activation functions as randomly parameterized basis functions, called a \emph{component},  as a whole to represent a smooth function. Each layer has multiple components all sharing the common basis functions with different linear combinations. The number of components, called \emph{rank}, is typically much smaller than the layer's width and increases to enhance the flexibility of decomposition when dealing with more complex functions. These components are combined and composed (through layers) in a structured and balanced way in terms of network width, rank, and depth to approximate a complicated function effectively. 
Another important feature we used in practice is that weights and biases inside each activation function are randomly assigned and fixed during the optimization while the linear combination weights of activation functions in each component are trained. This leads to more efficient training processes motivated by our finding that a one-hidden-layer neural network can be trained effectively to approximate a smooth function well using random basis functions. We also demonstrate interesting learning dynamics based on Adam optimizer~\cite{DBLP:journals/corr/KingmaB14}, which is crucial for the successful and efficient training of MMNNs. An important remark is that a balanced and holistic approach needs to consider both representation and optimization as well as their interplay altogether.

The structure of this paper is as follows. Section~\ref{sec:MMNN} introduces and details the design of MMNNs. Section~\ref{sec:comparison} provides a comparison between FCNNs and MMNNs. In Section~\ref{sec:decomposition}, we present a mathematical framework for smooth decomposition and transformation based on the MMNN architecture, showing that each component can be effectively approximated by a single-hidden-layer network. Section~\ref{sec:experiments} presents extensive numerical experiments to validate our analysis and demonstrate the capability of MMNNs in approximating complex functions. Additional insights and implementation guidelines are discussed in Section~\ref{sec:further:discussion}. Finally, Section~\ref{sec:conclusion} concludes the paper with final remarks.




\section{Multi-component and multi-layer neural network (MMNN)}
\label{sec:MMNN}

In this section, we present a novel network architecture called the Multi-component and Multi-layer Neural Network (MMNN). 
Let's begin with some notations. Let $\R$ represent the set of real numbers. The indicator (or characteristic) function of a set $A$, denoted by $\one_{A}$, is a function that takes the value $1$ for elements in $A$ and $0$ for elements not in $A$. Vectors and matrices are denoted by bold lowercase and uppercase letters, respectively.
We use slicing notation for a vector $\bmx = (x_1, \cdots, x_d) \in \R^d$, where $\bmx{[n:m]}$ denotes a slice of $\bmx$ from its $n$-th to the $m$-th entries for any $n, m \in \{1, 2, \cdots, d\}$ with $n \le m$, and $\bmx{[n]}$ denotes the $n$-th entry of $\bmx$. For example, if $\bmx = (x_1, x_2, x_3) \in \R^3$, then $(5\bmx){[2:3]} = (5x_2, 5x_3)$ and $(6\bmx + 1){[3]} = 6x_3 + 1$. A similar notation is used for matrices. For instance, $\bmA[:, i]$ refers to the $i$-th column of $\bmA$, whereas $\bmA[i, :]$ indicates the $i$-th row of $\bmA$. Additionally, $\bmA[i, n:m]$ corresponds to $(\bmA[i, :])[n:m]$, extracting the entries from the $n$-th to the $m$-th in the $i$-th row.

Later in this section, we introduce the architecture of MMNNs in Section~\ref{sec:MMNN:architecture}. Following this, in Section~\ref{sec:MMNN:learning:stategy}, we outline the learning strategy of MMNN and highlight its advantages over other methods. 

\subsection{Architecture of MMNNs}
\label{sec:MMNN:architecture}

In this section, we introduce the architecture of our Multi-component and Multi-layer Neural Network (MMNN). Each layer of the MMNN is a (shallow) neural network of the form
\begin{equation*}
    \bmh(\bmx)=\bmA \sigma(\bmW\bmx +\bmb)+\bmc
\end{equation*}
to approximate a vector-valued function 
$\bmf:\R^\din\to\R^\dout$
where $\bmW \in \R^{n\times \din}, \bmA\in \R^{\dout\times n}$, and $n$ is the width of this network.
Here, \(\sigma:\mathbb{R}\to\mathbb{R}\) represents the activation function that can be applied elementwise to vector inputs. Throughout this paper, the activation function is chosen as \ReLU, unless otherwise specified.
One can also write it in a more compact form, 
\begin{equation}\label{eq:shallow}
    \bmh=\bmA \sigma(\bmW\bmx +\bmb)+\bmc
    =\bmtildeA\left[\begin{matrix}
        \sigma(\bmtildeW\bmtildex) \\
         1
    \end{matrix}\right],
\end{equation}
where 
\begin{equation*}
    \bmtildeW=\left[\begin{matrix}
        \bmW, \bmb
    \end{matrix}\right],\quad 
    \bmtildeA=\left[\begin{matrix}
        \bmA, \bmc
    \end{matrix}\right],\quad
    \bmtildex=\left[\begin{matrix}
        \bmx\\  1
    \end{matrix}\right]
    .
\end{equation*}
We call each element of $\bmh$,  i.e., $\bmh{[i]}=\tilde{\bm{A}}{[i,:]}\cdot\left[\begin{matrix}
        \sigma(\bmtildeW\bmtildex) \\
         1
    \end{matrix}\right]$ for $i=1,2,\cdots,d_{\tn{out}}$, a component. Here are a few key features of $\bmh$:
\begin{enumerate}
    \item Each component is viewed as a linear combination of basis functions $\sigma(\bmW{[i,:]}\cdot\bmx+\bmb[i]),\, i=1,2,\cdots,n$, which is a function in $\bmx$, as a whole.
    \item Different components of $\bmh$ share the same set of basis with different coefficients $\tilde{\bm{A}}[i,:]$.
    \item Only $\bmtildeA$ is trained while $\bmtildeW$ is randomly assigned and fixed. 
    \item The output dimension $\dout$ and network width $n$ can be tuned according to the intrinsic dimension and complexity of the problem.
\end{enumerate}
In comparison, each layer in a typical deep FCNN takes the form $\sigma(\bmtildeW\bmtildex)$, and each hidden neuron is individually a function of the input $\bmx$ or each point $\bmx\in \R^{d_{\tn{in}}}$ is mapped to $\R^{n}$, where $n$ is the layer width. All weights $\bmtildeW$ are training parameters. In MMNNs, each layer is composed of multiple components $\bmtildeA\sigma(\bmtildeW\bmtildex)$. Each component is a linear combination of randomly parameterized hidden neurons $\sigma(\bmtildeW \bmtildex)$, which can be more effectively and stably trained through $\bmtildeA$ as a smooth decomposition/transformation. Typically the number of components $d_{\tn{out}}$, the dimension of intermediate feature space, is (much) smaller than the layer width $n$, the number of random neurons (or basis functions) in the MMNN. On one hand, the intermediate feature space is compressed, and on the other hand, there are diverse random basis whose linear combinations can have enough representation power in the feature space.





An MMNN is a multi-layer composition of $\bmh_i$, i.e., $\bmh:\R^\din \mapsto\R^\dout$
\begin{equation*}
    \bmh=\bmh_m\circ \cdots \circ \bmh_2\circ \bmh_1,
\end{equation*}
where each $\bmh_i:\R^{d_{i-1}}\mapsto\R^{d_{i}}$ is a multi-component shallow network defined in~\eqref{eq:shallow} of width $n_i$, where 
\begin{equation*}
    d_0=\din,\qquad d_1,\cdots, d_{m-1}\ll   n_i,\qquad d_m=\dout.
\end{equation*}
The width of this MMNN is defined as $\max\{n_i: i=1,2,\cdots,m-1\}$, the rank as $\max\{d_i: i=1,2,\cdots,m-1\}$, and the depth as $m$.
MMNNs are designed to reduce the dimensionality of the intermediate feature space, thereby simplifying optimization while largely preserving the network's expressive power.
To simplify, we denote a network with width $w$, rank $r$, and depth $l$ using the compact notation $(w,r,l)$.
See 
Figure~\ref{fig:MMNN:vs:FC}(a)
for an illustration of an MMNN of size $(4,2,2)$.
In contrast, an FCNN $\bmphi$ can be expressed in the following composition form
	\begin{equation*}
		\bmphi =\calbmL_L\circ\sigma\circ
		\calbmL_{L-1}\circ 
		\ \cdots \  \circ 
		\sigma\circ
		\calbmL_1\circ\sigma\circ\calbmL_0,
	\end{equation*}
 where $\calbmL_i$ is an affine linear map given by $\calbmL_i(\bmy)=\bmW_i\cdot \bmy+\bmb_i$. Readers are referred to 
 Figure~\ref{fig:MMNN:vs:FC}(b)
 for an illustration and a comparison with the MMNN.

We clarify the structural differences between FCNNs and MMNNs, omitting bias terms for notational simplicity.
In MMNNs, each layer is defined by a composition
\[
\bmh: \mathbb{R}^r \to \mathbb{R}^r, \quad \bmh(\bmx) = \bmA\sigma(\bmW\bmx),
\]
while in standard FCNNs, layers are typically written as
\[
\bmh: \mathbb{R}^n \to \mathbb{R}^n, \quad \bmh(\bmx) = \sigma(\bmW\bmx),
\]
where \( r \ll n \), and \( r \) denotes the MMNN rank, an internal dimensionality that helps regulate network complexity.
The core principle behind MMNNs is to limit the dimensionality of intermediate space, which helps streamline the optimization process while maintaining the model's expressive power.


One might suggest that the product \(\bmA_j \bmW_{j+1}\) in MMNNs can be collapsed into a single matrix, making MMNNs a special case of FCNNs. However, this overlooks two critical distinctions:

\begin{enumerate}
    \item \textbf{Asymmetric Parameter Roles:} In MMNNs, \(\bmW\) is randomly initialized and fixed while \(\bmA\) is learnable. This asymmetry is central to our representation and training strategy and cannot be replicated by simply reinterpreting MMNNs as FCNNs with a low rank factorization of the weight matrix $\bmW$ which needs to be learned fully posing a challenging task for the optimization.
    
    \item \textbf{Architectural Interventions:} In practice, modern networks often employ techniques such as batch normalization, dropout, and residual connections, which fundamentally alter the layer-wise composition. For instance,
    \begin{itemize}
        \item With batch normalization layers $B_i$'s:
        \[
        \bmh_m \circ B_{m-1} \circ \bmh_{m-1} \circ \cdots \circ B_1 \circ \bmh_1;
        \]
        \item With dropout layers $D_i$'s:
        \[
        \bmh_m \circ D_{m-1} \circ \bmh_{m-1} \circ \cdots \circ D_1 \circ \bmh_1;
        \]
        \item With residual connections:
        \[
        \bmh_m \circ (\bmI + \bmh_{m-1}) \circ \cdots \circ (\bmI + \bmh_2)\circ   \bmh_1.
        \]
    \end{itemize}
    In these scenarios, the clean compositional structure required to merge adjacent matrices (e.g., \(\bmA_j\) and \(\bmW_{j+1}\)) is disrupted. Thus, MMNNs remain distinct in both form and training philosophy.
\end{enumerate}
For very deep MMNNs, one can borrow ideas from ResNets \cite{7780459} to address the gradient vanishing issue, making training more efficient. Incorporating this idea, we propose a new architecture given by a multi-layer composition of $\bmI+\bmh_i$, i.e., $\bmh:\R^\din \mapsto \R^\dout$
\begin{equation*}
    \bmh = \bmh_m \circ (\bmI + \bmh_{m-1}) \circ \cdots \circ (\bmI + \bmh_3) \circ (\bmI + \bmh_2) \circ \bmh_1,
\end{equation*}
where each $\bmh_i: \R^{d_{i-1}} \mapsto \R^{d_i}$ is a multi-component shallow network defined in \eqref{eq:shallow} with width $n_i$, 
\begin{equation*}
    d_0=\din,\qquad d_1=\cdots=d_{m-1}=r\ll   n_i,\qquad d_m=\dout,
\end{equation*}
and $\bmI$ is the identity map. We call this architecture ResMMNN.
See 
Figure~\ref{fig:MMNN:vs:FC}(c)
for an illustration of a ResMMNN of size (4,2,3).

The above definition of ResMMNNs requires \(d_1 = \cdots = d_{m-1} = r\). If this condition does not hold, we can alternatively define ResMMNNs via
\begin{equation*}
    \bmh = (\bmI \oplus \bmh_m) \circ (\bmI \oplus \bmh_{m-1}) \circ \cdots \circ (\bmI \oplus \bmh_3) \circ (\bmI \oplus \bmh_2) \circ (\bmI \oplus \bmh_1),
\end{equation*}
where \(\oplus\) is an operation defined as follows. For any functions \( \bmf:\mathbb{R}^d \mapsto \mathbb{R}^{d_\bmf} \) and \( \bmg:\mathbb{R}^d \mapsto \mathbb{R}^{d_\bmg} \), the \(\oplus\) operation is given by
\[
\bmf \oplus \bmg \coloneqq (\tildebmf + \tildebmg)[1:d_\bmg],
\]
where
\[
\tildebmf = \begin{bmatrix}
   \bmf \\  
   \bmzero 
\end{bmatrix}
\in \mathbb{R}^{\max\{d_\bmf,d_\bmg\}}
\quad \tn{and}\quad \tildebmg = \begin{bmatrix}
   \bmg \\  
   \bmzero 
\end{bmatrix}
\in \mathbb{R}^{\max\{d_\bmf,d_\bmg\}}.
\]





\begin{figure}
    \centering
    \begin{subfigure}[c]{0.98\textwidth}
    \centering                \includegraphics[width=0.758025\textwidth]{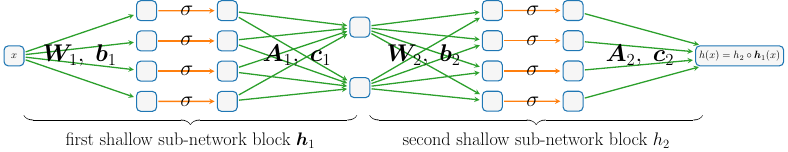}
    \subcaption{MMNN of size $(4,2,2)$, i.e., width $4$, rank 2, and depth $2$.}
    		\label{subfig:MMNN}
    \end{subfigure}\\
    \vspace{6pt}
    \begin{subfigure}[c]{0.98\textwidth}
    \centering                \includegraphics[width=0.64\textwidth]{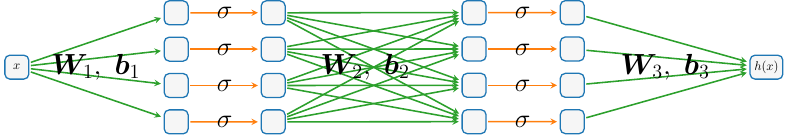}
    \subcaption{FCNN of size (4, --, 2), i.e., width $4$ and depth $2$.}
    		 \label{subfig:fc}
    \end{subfigure}
    \\
    \vspace{6pt}
    \begin{subfigure}[c]{0.98\textwidth}
    \centering                \includegraphics[width=0.9985\textwidth]{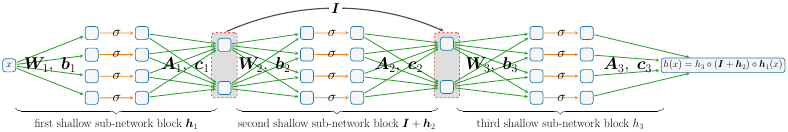}
    \subcaption{ResMMNN of size $(4,2,3)$, i.e., width $4$, rank 2, and depth $3$.}
    		\label{subfig:ResMMNN}
    \end{subfigure}
\caption{Illustrations of $\sigma$-activated MMNN, FCNN, and ResMMNN. 
}
    \label{fig:MMNN:vs:FC}
\end{figure}

\subsection{Learning strategy of MMNNs}
\label{sec:MMNN:learning:stategy}

Our learning strategy is motivated by the following basic principle: a function can be decomposed in a multi-component and multi-layer structure each component of which can be approximated and trained effectively using a one-hidden-layer network, which is a linear combination of random basis functions (e.g., of the form $\sigma(\bmW_i\cdot \bmx + \bmb_i)$, see Section~\ref{sec:decomposition}).
Therefore, optimizing the linear combination weights of the random basis functions, namely $\bmA_i$'s and $\bmc_i$'s, is both computationally efficient and sufficiently expressive.
On the other hand, optimizing the weights (orientations of the basis functions) $\bmW_i$'s and biases $\bmb_i$'s to make the basis functions more adaptive to fine-tune features of the target function, which would require capturing high-frequency information by a single layer network, leads to not only significantly more parameters to optimize but also difficulties in training as shown in~\cite{ZZZZ-23}.
Specifically, for each layer of an MMNN, we fix the activation function parameters (\(\bm{W}_i\)'s and \(\bm{b}_i\)'s) as per PyTorch's default setting during the training process. This entails initializing both weights and biases uniformly from the distribution \(\mathcal{U}(-\sqrt{k}, \sqrt{k})\), where \(k = \frac{1}{\text{in\_features}}\).\footnote{It is noteworthy that this initialization approach is similar to the widely used Xavier initialization \cite{pmlr-v9-glorot10a}, which draws weights from the distribution \(\mathcal{U}(-\sqrt{k}, \sqrt{k})\) with \(k = \frac{\sqrt{6}}{\text{in\_features} + \text{out\_features}}\) and sets the bias to $\bmzero$.}
The whole training process optimizes all \(\bmA_i\)'s and \(\bmc_i\)'s simultaneously using the Adam optimizer \cite{DBLP:journals/corr/KingmaB14}. Note that it is important to have a uniform sampling of orientations $\bmW_i$ and biases $\bmb_i$ for the random basis functions to be able to approximate an arbitrary smooth function well. Unless stated otherwise, parameter initialization adheres to the default settings provided by PyTorch in our experiments.


To demonstrate the advantages of our training approach (labeled S1), we conduct a comparison with the typical strategy in deep neural networks, denoted as Strategy S2, which uses the default PyTorch initialization and optimizes all parameters during training. In our tests, we select an oscillatory target function $f(x)=\cos(36\pi x^2)-0.6\cos(12\pi x^2)$ and use fairly compact networks. The tests are performed on a total of $1000$ uniform samples in $[-1,1]$ with a mini-batch size of $100$ and a learning rate for epoch $k$ set at $0.001 \times 0.9^{\lfloor k/400 \rfloor}$ for $k = 1, 2, \cdots, 20000$, where $\lfloor \cdot \rfloor$ denotes the floor operation. The Adam optimizer \cite{DBLP:journals/corr/KingmaB14} is applied throughout the training process.
\begin{table}[htbp!]
	\centering  
 \setlength{\tabcolsep}{0.68em} 
 \renewcommand{\arraystretch}{1.15}
\caption{
Comparison of test errors averaged over the last 100 epochs.
}
	\label{tab:stategy:error:comparison}
	\resizebox{0.95\textwidth}{!}{ 
		\begin{tabular}{ccccccccc} 
			\toprule
			    network &  (width, rank, depth) & {\#parameters (trained / all)}  &      {test error (MSE)} 
    & test error (MAX) & training time \\
			\midrule
			 \rowcolor{mygray}
			  MMNN1 (S1) & (400, 20, 6) &  40501 / 83301 & 
$2.01 \times 10^{-5}$  &  $4.36 \times 10^{-2}$
    & 23.9s / 1000 epochs \\	
   			 MMNN1 (S2) & (400, 20, 6) & 83301 / 83301 & 
$4.26 \times 10^{-5}$  &  $4.71 \times 10^{-2}$
   & 30.2s / 1000 epochs
   \\
		\rowcolor{mygray}	 MMNN2 (S1) & (590, 28, 6) &  83331 / 170061 & 
$\bm{1.39 \times 10^{-5}}$  &  $\bm{2.80 \times 10^{-2}}$
   & 25.2s / 1000 epochs\\			
			\bottomrule
		\end{tabular} 
	}
\end{table} 




\begin{figure}
    \centering	
        \begin{subfigure}[c]{0.32438\textwidth}
    \centering            \includegraphics[width=0.985\textwidth]{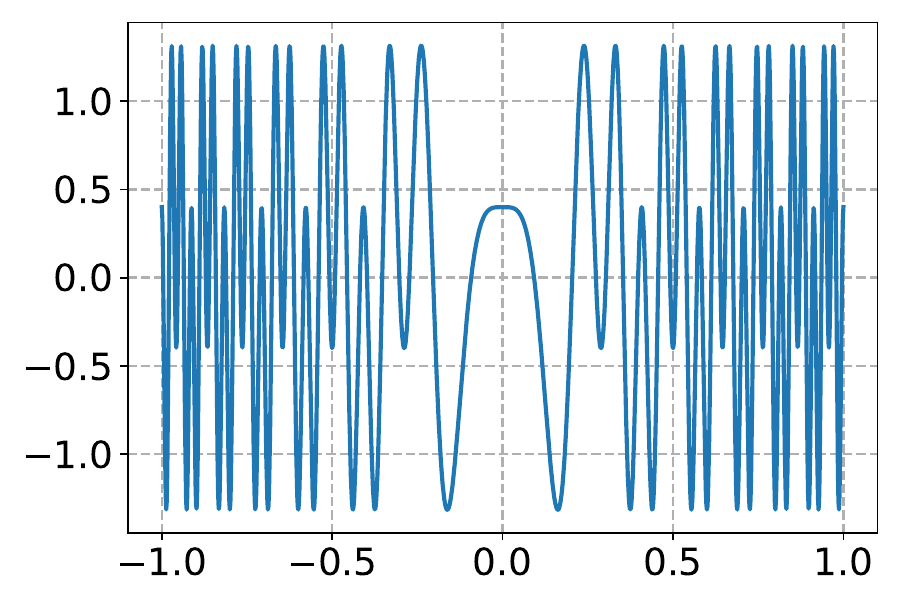}
    \end{subfigure}
    \hfill
\begin{subfigure}[c]{0.32438\textwidth}
	\centering            \includegraphics[width=0.9285\textwidth]{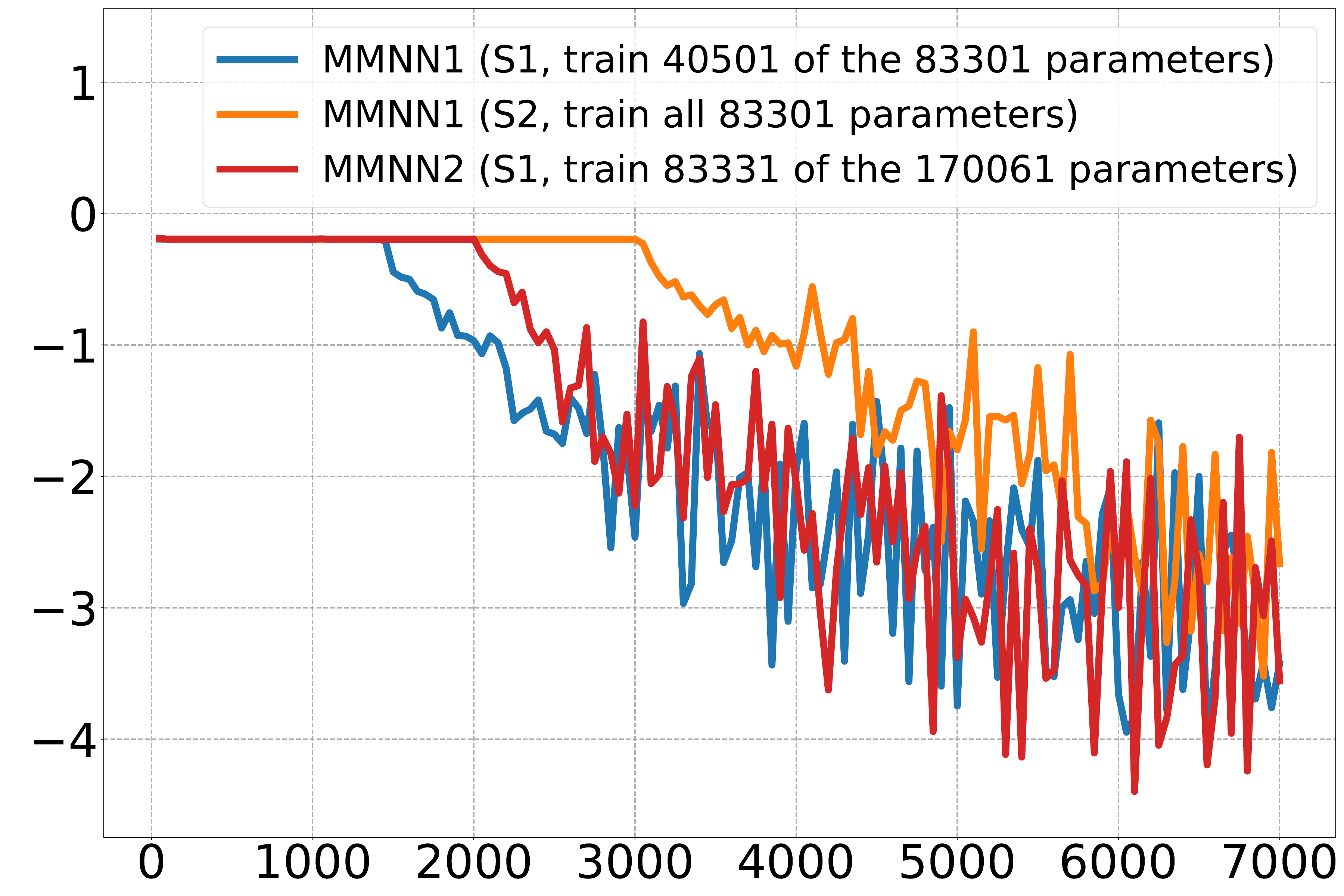}
\end{subfigure}
\hfill
\begin{subfigure}[c]{0.32438\textwidth}
    \centering            \includegraphics[width=0.9285\textwidth]{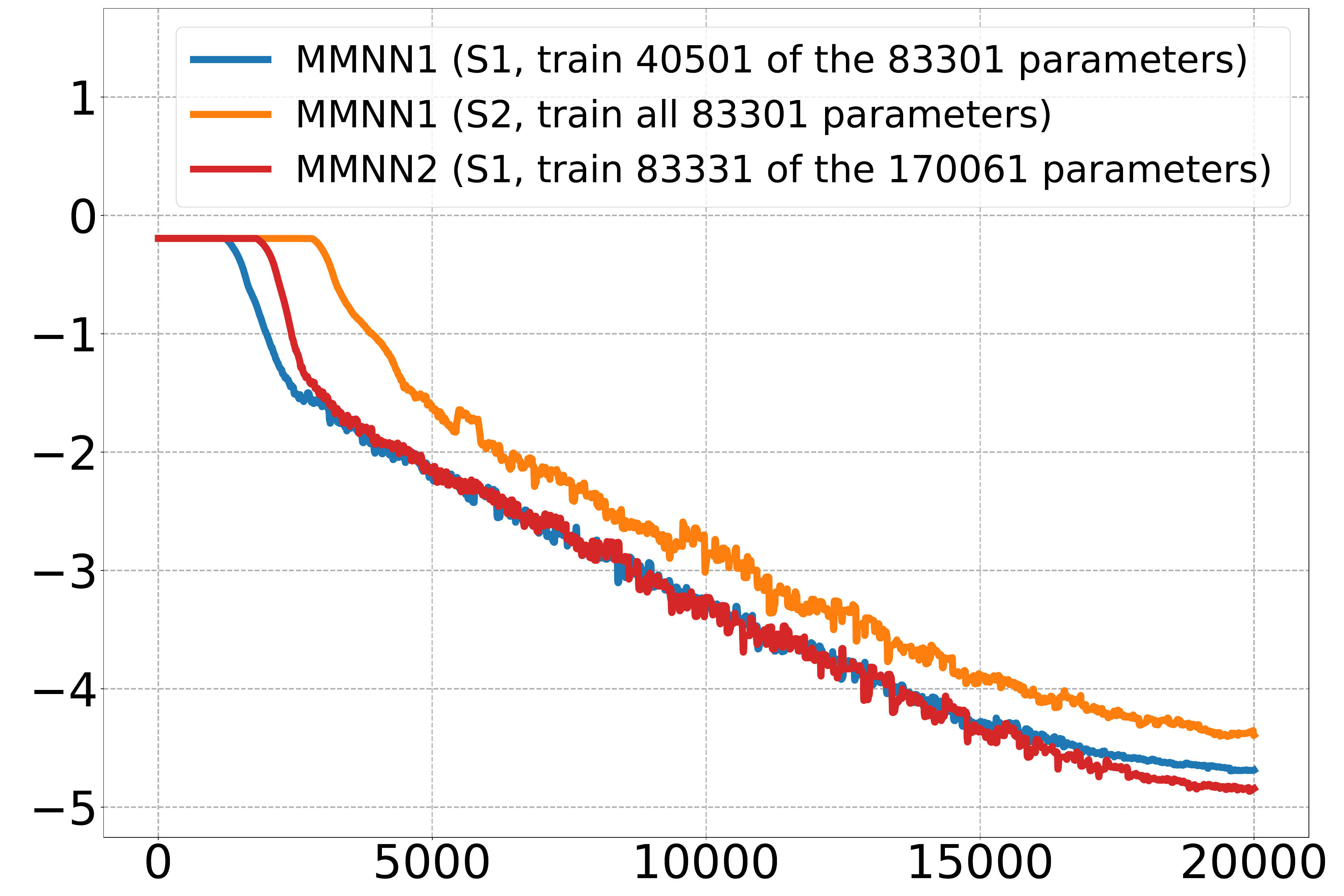}
\end{subfigure}
\caption{Left: target function $f(x)=\cos(36\pi x^2)-0.6\cos(12\pi x^2)$. Middle: base-10 logarithm of test errors vs. epoch. Right: 
base-10 logarithm of ``test-error-aver'' vs. epoch, where ``test-error-aver'' for epoch \( k \) is calculated by averaging the errors in epochs \(\max\{1, k-100\}\) to \(\min\{k+100, \#\text{epochs}\}\).
}
    \label{fig:S1:vs:S2:errors}
\end{figure}

As illustrated in Table~\ref{tab:stategy:error:comparison} and Figure~\ref{fig:S1:vs:S2:errors},  our learning strategy S\(1\) is significantly more effective than strategy S\(2\) with comparable accuracy. 
There are two main advantages of S1. First, S1 requires training only about half the number of parameters compared to S2, which results in time savings. Second, S1 converges more quickly and performs significantly better when the training is not sufficient. 
We would like to note that in certain specific cases, S2 may outperform S1, particularly when the network size is relatively small and S2 is well-trained. This is expected since S\(2\) trains all parameters, whereas S\(1\) only trains a subset.
Based on our experience, S1 is more effective in practice, particularly for sufficiently large networks.

\section{MMNNs versus FCNNs}\label{sec:comparison}

In Section~\ref{sec:MMNN}, we outlined the distinctions between MMNNs and FCNNs regarding their representation and learning approaches. Here, we evaluate their numerical performance for approximating oscillatory functions in Section~\ref{sec:1D2Dfunction} and solving PDEs in Section~\ref{sec:PDE}.
To ensure a fair comparison, we use networks with a similar number of parameters and ensure that all networks have sufficient parameters to learn the target function. Typically, when training an FCNN, all parameters are optimized. For a thorough comparison, we will employ two learning strategies for MMNNs as detailed in Section~\ref{sec:MMNN:learning:stategy}: S1 and S2. S1 involves training approximately half the number of parameters of the MMNN, while S2 involves training all parameters.

\subsection{Oscillatory function approximation}\label{sec:1D2Dfunction}

We consider  a one-dimensional function \( f_1(x) = \cos(20\pi |x|^{1.4}) + 0.5\cos(12\pi |x|^{1.6}) \) and a two-dimensional function 
\begin{equation*}
	f_{2}(x_1,x_2)= \sum_{i=1}^2\sum_{j=1}^2 a_{ij} \sin(s b_i x_i+s c_{i,j} x_i x_j) \cos (s b_j x_j + s d_{i,j} x_i^2),
\end{equation*}
 where $s=2$ and
\begin{equation*}
	(a_{i,j})=	\begin{bmatrix*}
		0.3 & 0.2 \\ 0.2 & 0.3
	\end{bmatrix*}, \qquad 
	(b_{i})=	\begin{bmatrix*}
		2\pi \\ 4\pi
	\end{bmatrix*},
	\qquad
	(c_{i,j})=	\begin{bmatrix*}
		2\pi & 4\pi \\ 8\pi & 4\pi
	\end{bmatrix*}, \quad 
	(d_{i,j})=	\begin{bmatrix*}
		4\pi & 6\pi \\ 8\pi & 6\pi
	\end{bmatrix*}.
\end{equation*}
Refer to Figures~\ref{fig:vsFCNN:f1D} and~\ref{fig:vsFCNN:f2D} for illustrations of \( f_1 \) and \( f_2 \), respectively.

\begin{figure}
\begin{minipage}[b]{0.302\linewidth}
    \centering	
    \includegraphics[width=0.925\textwidth]{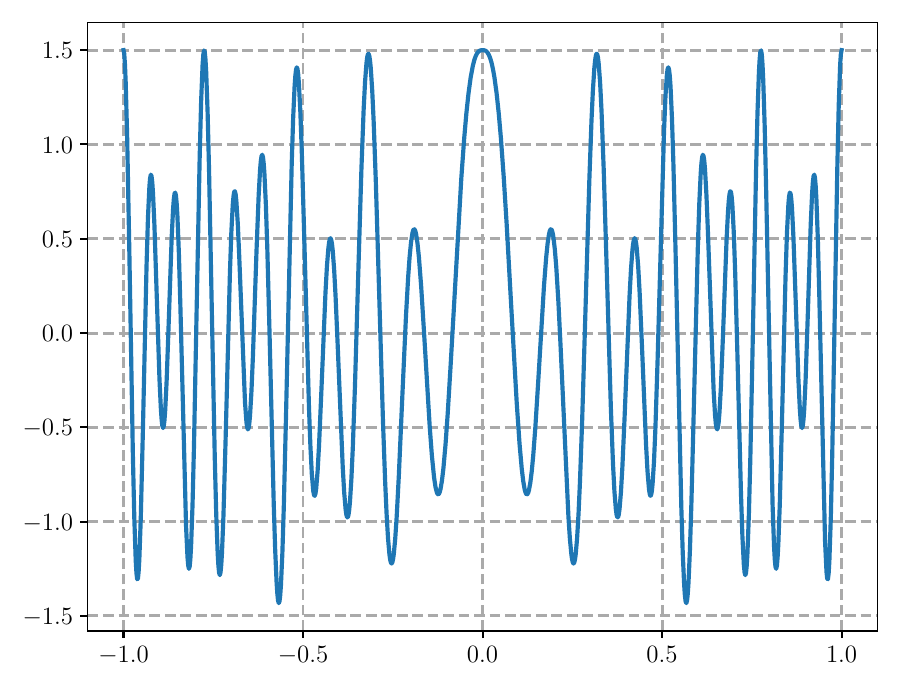}
        \caption{ Plot of $f_1$.}
    	\label{fig:vsFCNN:f1D}
    \end{minipage}
    \hfill
\begin{minipage}[b]{0.57\linewidth}
    \centering	
    \begin{subfigure}[c]{0.46\textwidth}
    \centering            \includegraphics[height=0.80\textwidth]{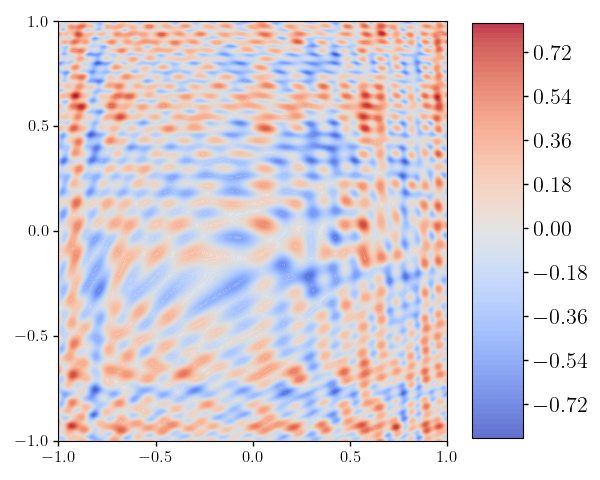}
    \end{subfigure}
    \hfill
    \begin{subfigure}[c]{0.46\textwidth}
    \centering            \includegraphics[height=0.80\textwidth]{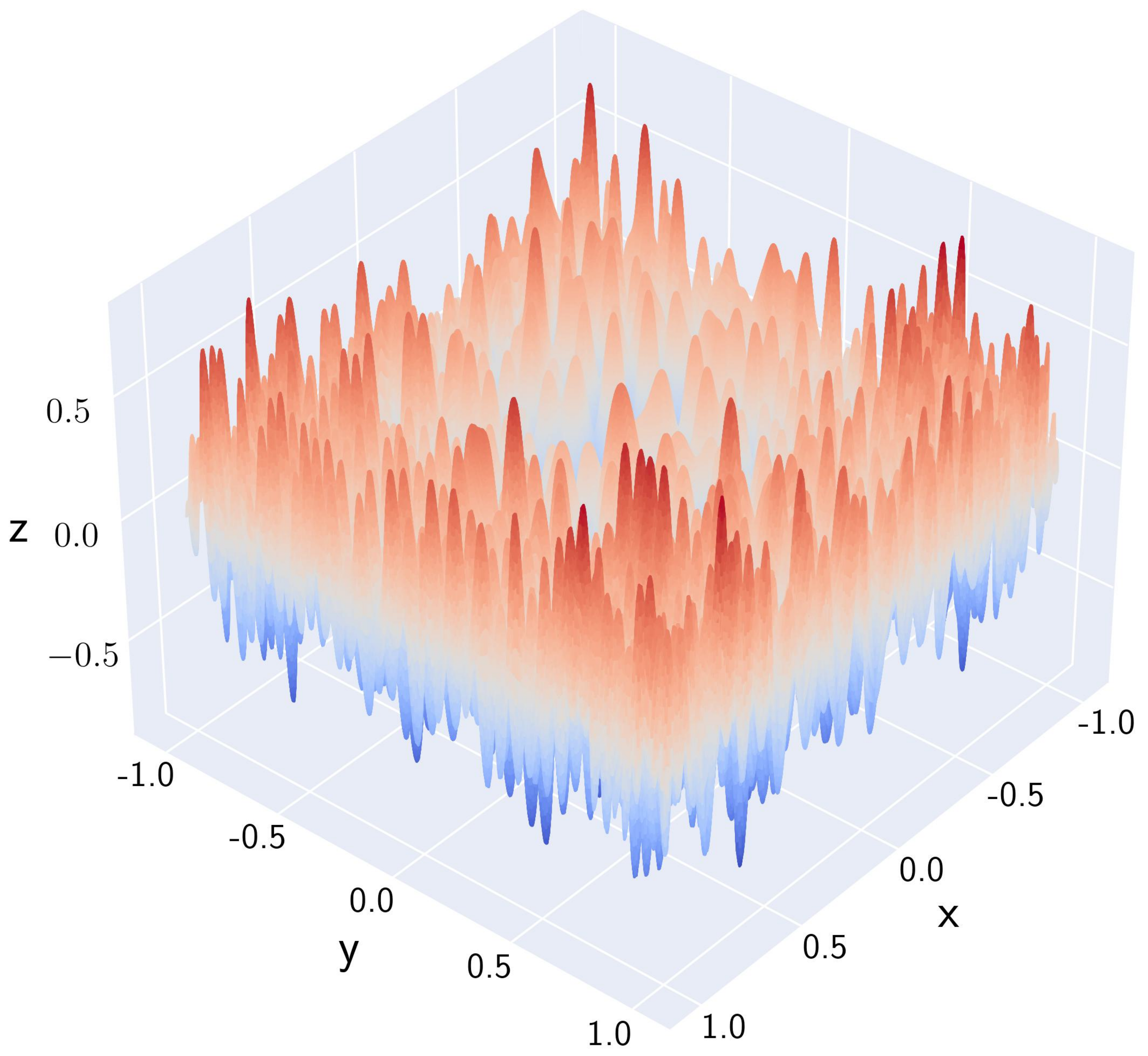}
    \end{subfigure}
    \caption{Plots of $f_2$.}
    	\label{fig:vsFCNN:f2D}
    \end{minipage}
\end{figure}

Large network sizes (see Table~\ref{tab:vsFCNN:error:comparison}) are selected to ensure that all networks possess sufficient parameters to learn the target functions.\footnote{FCNNs perform poorly if the network size is small. For a fair comparison, we choose relatively large network sizes for FCNNs and MMNNs, where both perform reasonably well.
}
For training the one-dimensional function, we sample a total of 1000 data points on a uniform grid within \([-1,1]\), using a mini-batch size of 100 and a learning rate of \(0.001 \times 0.9^{\lfloor k/400 \rfloor}\) for epochs \(k = 1, 2, \cdots, 20000\).
For training the two-dimensional function, we sample a total of \(600^2\) data points on a uniform grid within \([-1,1]^2\), using a mini-batch size of 1000 and a learning rate of \(0.001 \times 0.9^{\lfloor k/16 \rfloor}\) for epochs \(k = 1, 2, \cdots, 800\).  
The Adam optimizer is employed for both functions.

\begin{table}[htbp!]
	\centering  
 \setlength{\tabcolsep}{0.568em} 
 \renewcommand{\arraystretch}{1.15}
\caption{
Comparison of test errors averaged over the last 100 epochs.
}
	\label{tab:vsFCNN:error:comparison}
	\resizebox{0.999\textwidth}{!}{ 
		\begin{tabular}{ccccccccc} 
			\toprule
			 target function &   network &  (width, rank, depth) & {\#parameters (trained / all)}  &      {test error (MSE)} 
    & test error (MAX) & training time \\
			\midrule
			 \rowcolor{mygray}
			 $f_1$ & MMNN1 (S1) & (388, 18, 6) &  35399 / 73035 & 
$2.49 \times 10^{-6}$  &  $\bm{9.93 \times 10^{-3}}$
    & 23.3s / 1000 epochs \\	
   			$f_1$ & FCNN1-1 & (83, --,  6) & 35110 / 35110 & 
$2.43 \times 10^{-4}$  &  $1.87 \times 10^{-1}$
   & 19.5s / 1000 epochs
   \\
		\rowcolor{mygray}	$f_1$ & MMNN1 (S2) & (388, 18, 6) &  73035 / 73035 & 
$\bm{2.05 \times 10^{-6}}$  &  $1.88 \times 10^{-2}$
   & 27.4s / 1000 epochs\\			
			$f_1$ & FCNN1-2 & (120, --,  6)  & 72961 / 72961 & 
$1.73 \times 10^{-4}$  &  $1.14 \times 10^{-1}$
   & 22.3s / 1000 epochs 
   \\
			\midrule
			 \rowcolor{mygray}
			 $f_2$ & MMNN2 (S1) & (789, 36, 12) &  313630 / 637120 & 
$\bm{4.61 \times 10^{-6}}$  &  $\bm{1.55 \times 10^{-2}}$
    & 30.3s / 10 epochs \\
    			$f_2$ & FCNN2-1 & (168, --, 12) & 312985 / 312985 & 
$2.42 \times 10^{-4}$  &  $2.75 \times 10^{-1}$
   & 26.7s / 10 epochs \\
		\rowcolor{mygray}	$f_2$ & MMNN2 (S2) & (789, 36, 12)  &  637120 / 637120 & 
$6.17 \times 10^{-6}$  &  $6.05 \times 10^{-2}$
   & 35.8s / 10 epochs\\			
			$f_2$ & FCNN2-2 & (240, --, 12) & 637201 / 637201 & 
$3.28 \times 10^{-5}$  &  $1.39 \times 10^{-1}$
   & 29.3s / 10 epochs \\
			\bottomrule
		\end{tabular} 
	}
\end{table}

\begin{figure}
    \centering	
     \begin{subfigure}[b]{0.442\textwidth}
    \centering            \includegraphics[width=0.9015\textwidth]{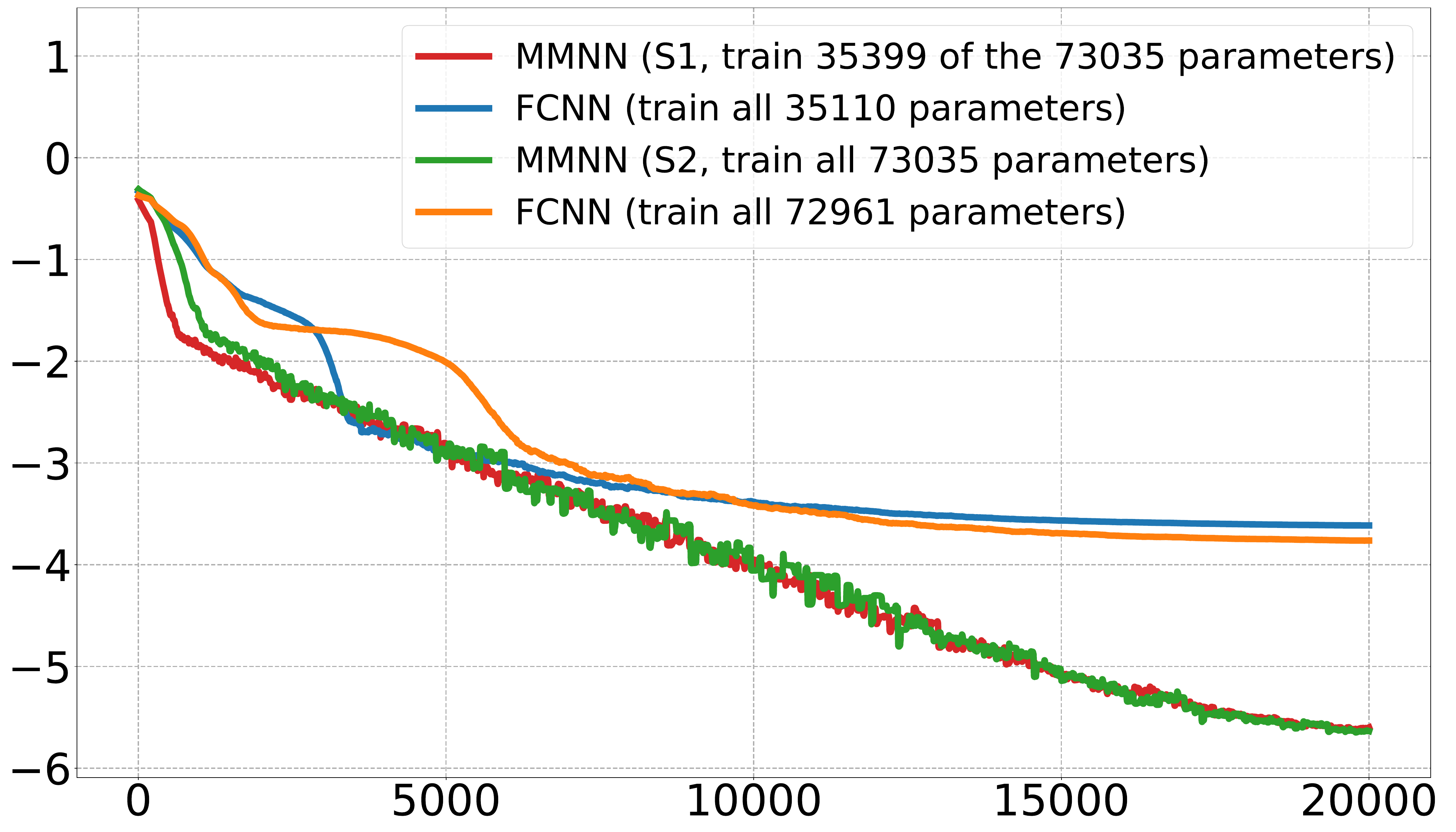}
    \subcaption{$f_1$.}
    \end{subfigure}
    \hspace{14pt}
    \begin{subfigure}[b]{0.442\textwidth}
    \centering            \includegraphics[width=0.9015\textwidth]{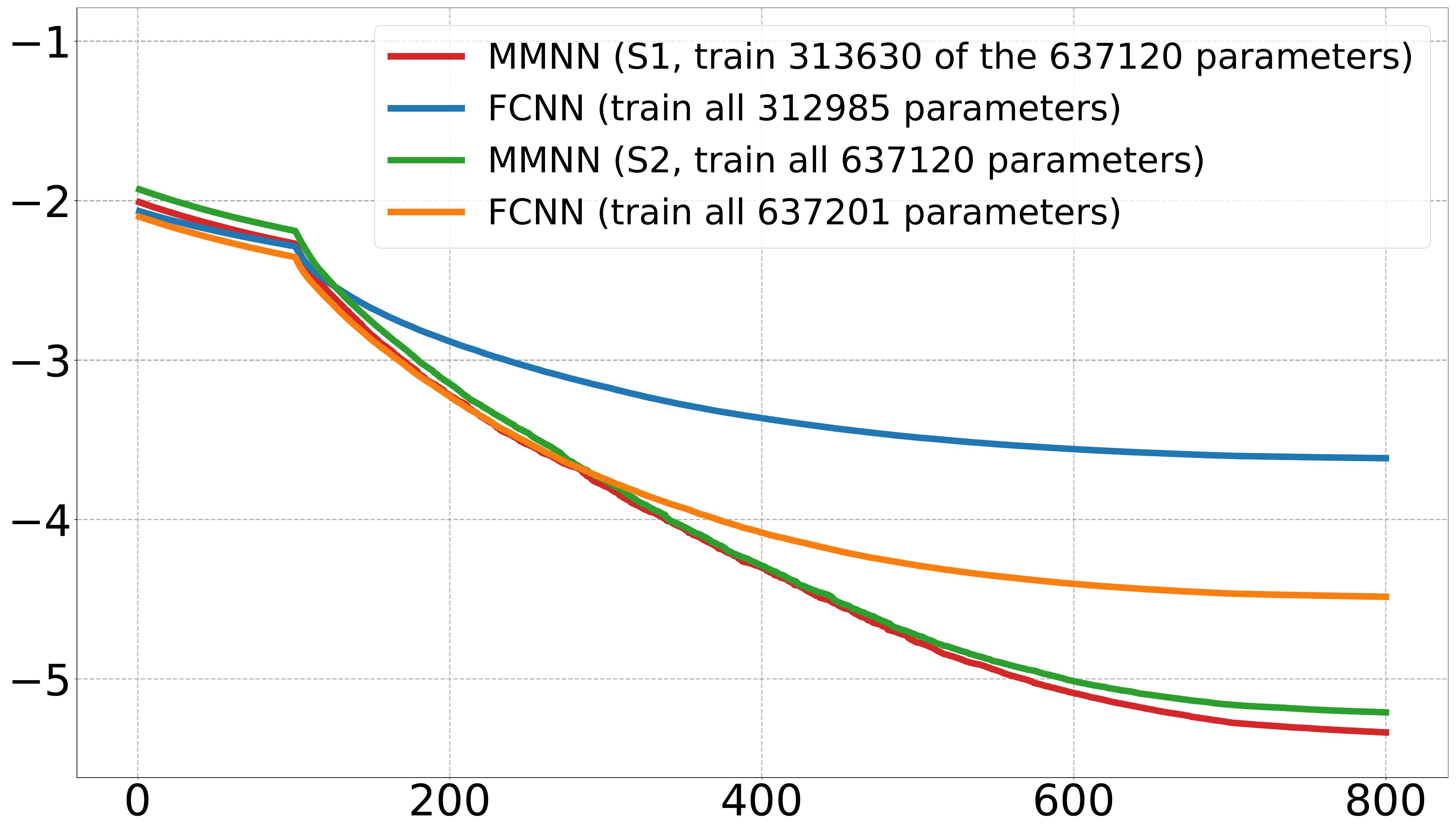}
    \subcaption{$f_2$.}
    \end{subfigure}
    \\
    \begin{subfigure}[b]{0.24311\textwidth}
    \centering            \includegraphics[width=0.985\textwidth]{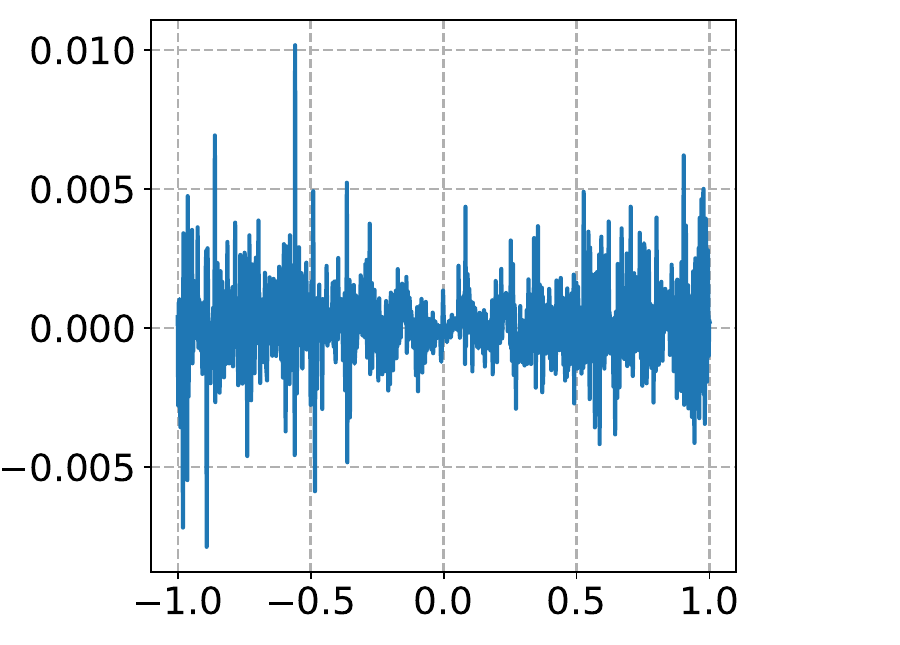}
    \subcaption{MMNN1 (S1).}
    \end{subfigure}
        \hfill
             \begin{subfigure}[b]{0.24311\textwidth}
    \centering            \includegraphics[width=0.985\textwidth]{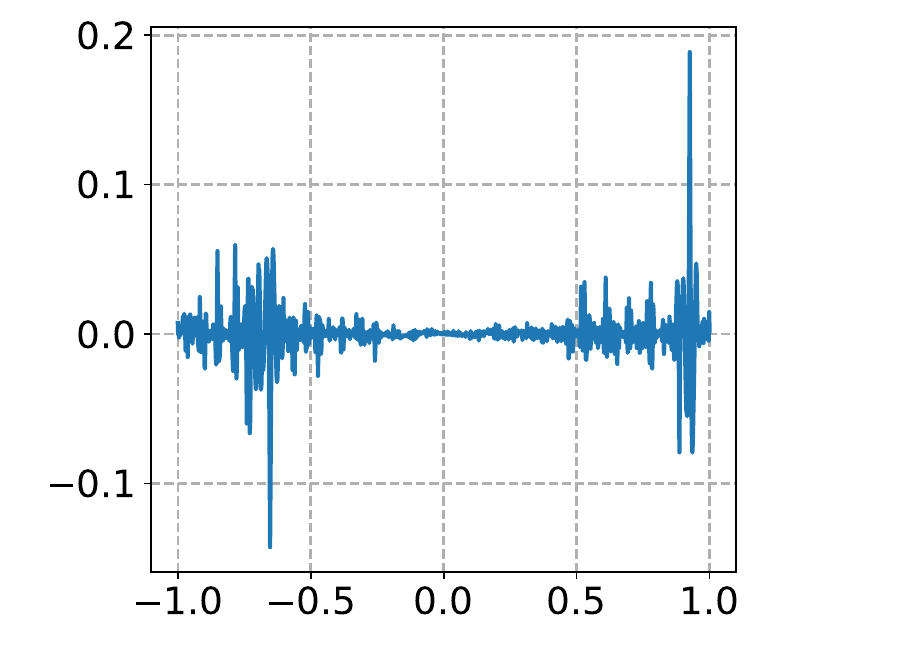}
    \subcaption{FCNN1-1.}
    \end{subfigure}
    \hfill
         \begin{subfigure}[b]{0.24311\textwidth}
    \centering            \includegraphics[width=0.985\textwidth]{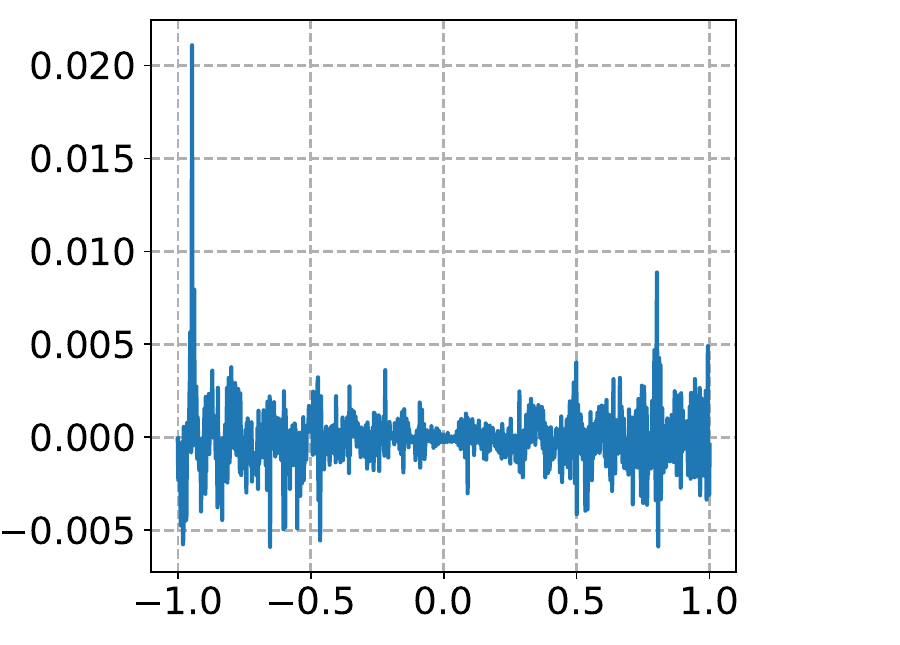}
    \subcaption{MMNN1 (S2).}
    \end{subfigure}
    \hfill
    \begin{subfigure}[b]{0.24311\textwidth}
    \centering            \includegraphics[width=0.985\textwidth]{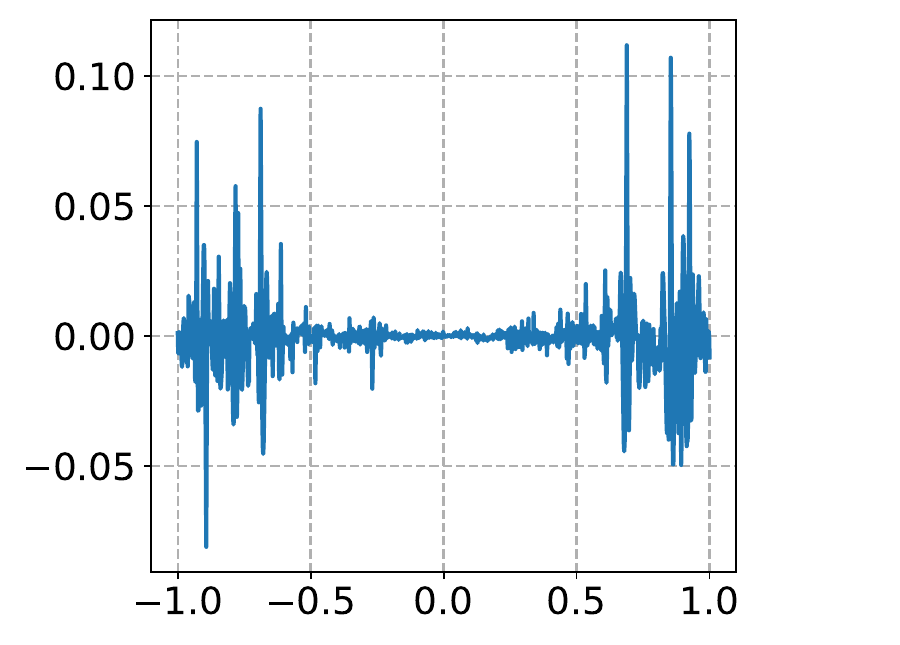}
    \subcaption{FCNN1-2.}
    \end{subfigure}\hfill
 \\[6pt]
    \begin{subfigure}[b]{0.24311\textwidth}
    \centering            \includegraphics[width=0.998055\textwidth]{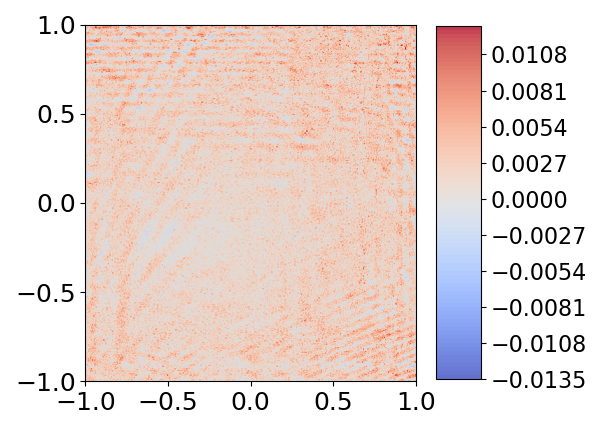}
    \subcaption{MMNN2 (S1).}
    \end{subfigure}
        \hfill
              \begin{subfigure}[b]{0.24311\textwidth}
    \centering            \includegraphics[width=0.985\textwidth]{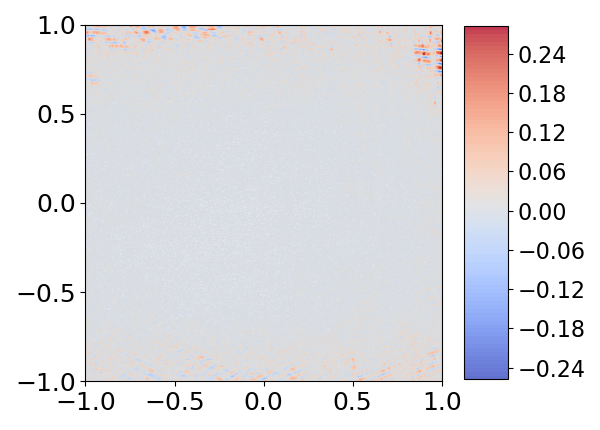}
    \subcaption{FCNN2-1.}
    \end{subfigure}
    \hfill
         \begin{subfigure}[b]{0.24311\textwidth}
    \centering            \includegraphics[width=0.998055\textwidth]{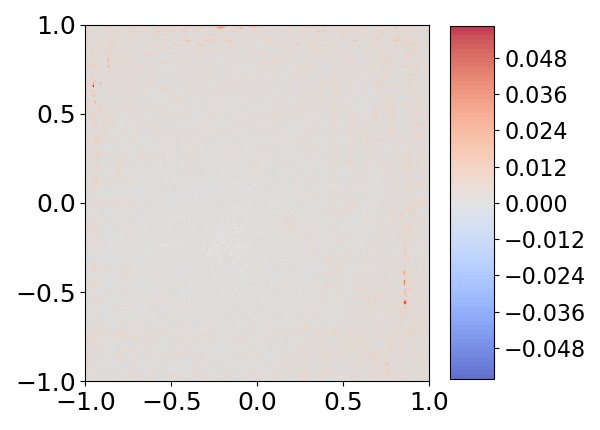}
    \subcaption{MMNN2 (S2).}
    \end{subfigure}
    \hfill
    \begin{subfigure}[b]{0.24311\textwidth}
    \centering            \includegraphics[width=0.998055\textwidth]{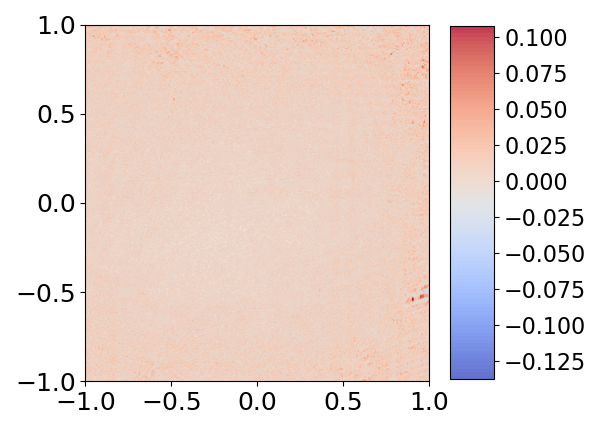}
    \subcaption{FCNN2-2.}
    \end{subfigure}
    \caption{First row: base-10 logarithm of ``test-error-aver'' vs. epoch, where ``test-error-aver'' for epoch \( k \) is calculated by averaging the errors in epochs \(\max\{1, k-100\}\) to \(\min\{k+100, \#\text{epochs}\}\).
    All errors shown on the $y$-axis are in base-10 logarithmic scale.
    Second row: differences between learned networks and $f_1$. 
    Third row: differences between learned networks and $f_2$.}
  \label{fig:vsFCNN:errors:and:NN:fun:diff}
\end{figure}

As illustrated in 
Table~\ref{tab:vsFCNN:error:comparison} and Figure~\ref{fig:vsFCNN:errors:and:NN:fun:diff}, MMNNs outperform FCNNs when both have the same depth and a comparable number of parameters, particularly for relatively oscillatory target functions. 
Moreover, as indicated in Table~\ref{tab:vsFCNN:error:comparison}, the training time for MMNN (S1) is similar to that of FCNN, while MMNN (S2) takes a bit more time.
We remark that the primary advantage of MMNNs lies in capturing high-frequency components.
As we can see from Figure~\ref{fig:vsFCNN:errors:and:NN:fun:diff}, the differences between network approximations and the corresponding target functions show that FCNNs approximate high-frequency parts of the target functions poorly. In contrast, the approximation errors for MMNNs, especially with the S1 learning strategy, are more evenly distributed across the entire domain, indicating their effectiveness in capturing high-frequency components. The Adam optimizer \cite{DBLP:journals/corr/KingmaB14} is applied throughout the training process.



\subsection{Solving partial differential equations}
\label{sec:PDE}

Next, we compare the performance of MMNNs and FCNNs for solving partial differential equations (PDEs).  
We consider a classical example: the two-dimensional Poisson equation with zero Dirichlet boundary conditions:
\[
-\Delta u(x, y) = f(x, y), \quad (x, y) \in (-1,1)^2,\quad u|_{\partial \Omega} = 0,
\]
where the source term is given by
\[
f(x,y) = -113\pi^2\sin(7\pi x)\sin(8\pi y) - 117\pi^2\sin(6\pi x)\sin(9\pi y).
\]
It is easy to verify that the exact solution is
\[
u(x,y) = \sin(7\pi x)\sin(8\pi y) + \sin(6\pi x)\sin(9\pi y).
\]

 We approximate the solution \( u(x, y) \) by a neural network \( u_\bmtheta(x, y) \), where \( \bmtheta \) denotes the network parameters and solve the Poisson equation using the Physics-Informed Neural Network (PINN) \cite{RAISSI2019686} formulation.
The PDE residual is defined as \(\mathcal{R}(x, y) = -\Delta u_\bmtheta(x, y) - f(x, y)\), where \(\Delta u_\bmtheta = \frac{\partial^2 u_\bmtheta}{\partial x^2} + \frac{\partial^2 u_\bmtheta}{\partial y^2}\), and the neural network is trained to minimize the loss function \(\mathcal{L}_{\text{total}} = \lambda_{\text{PDE}}\mathcal{L}_{\text{PDE}} + \lambda_{\text{BC}}\mathcal{L}_{\text{BC}}\), combining the PDE loss \(\mathcal{L}_{\text{PDE}} = \frac{1}{N_r} \sum_{i=1}^{N_r} \mathcal{R}(x_i, y_i)^2\) for collocation points \((x_i, y_i) \in (-1,1)^2\) and the boundary loss \(\mathcal{L}_{\text{BC}} = \frac{1}{N_b} \sum_{j=1}^{N_b} u_\bmtheta(x_j, y_j)^2\) for boundary points \((x_j, y_j) \in \partial \Omega\).

To compare FCNNs and MMNNs in terms of representation accuracy, we avoid soft boundary condition enforcement by including the penalty term \(\lambda_{\text{BC}}\mathcal{L}_{\text{BC}}\) in the loss, which requires careful tuning of \(\lambda_{\text{BC}}\). 
We adopt a hard-constraint formulation that guarantees the boundary condition is satisfied exactly. Specifically, we define the network output as
\[
u_{\bm{\theta}}(x, y) = h_{\bm{\theta}}(x, y) \cdot \cos\left(\frac{\pi x}{2}\right)\cos\left(\frac{\pi y}{2}\right),
\]
where \( h_{\bm{\theta}}(x, y) \) is modeled by either an FCNN or an MMNN. This construction ensures that \( u_{\bm{\theta}}(x, y) = 0 \) on the boundary \( \partial \Omega \) for all values of \( \bm{\theta} \).

We select an MMNN of size (301, 16, 6), another MMNN of size (503, 20, 6), and an FCNN of size (100, --, 6), denoted as MMNN1, MMNN2, and FCNN, respectively, for simplicity. 
Since \ReLU{} is not differentiable, we use the sine function as the activation function.
We sample \(100^2\) data points for \(f(x,y)\) on a uniform grid in \((-1,1)^2\), using a mini-batch size of 2000 and setting \(\lambda_{\text{PDE}} = 0.001\). 
We adopt the Adam optimizer, with the learning rate set to 
\(\lfloor k/200 \rfloor / 800\) for \(k < 16000\), and to 
\(0.001 \times 0.9^{\lfloor (k - 16000)/1600 \rfloor}\) for 
\(k \geq 16000\), where \(k = 1, 2, \dots, 160000\) denotes the training epoch.
We note that for \( k < 16000 \), we use an increasing learning rate to facilitate warm-up (see, e.g., \cite{NEURIPS2020_288cd256,NEURIPS2024_ca98452d}), which enhances training performance.

Our experiments reveal that initial parameters significantly impact training, particularly for FCNNs. To ensure experimental reliability, we repeated the experiments with 16 different seeds. As demonstrated in Figures~\ref{fig:PDE:error:comparison}, \ref{fig:PDE:fun:diff}, and Table~\ref{tab:PDE:error:comparison}, both MMNNs surpass the FCNN in solving PDEs with PINNs, even with comparable depth and total (or training) parameters.
We note that using the same seed may yield different outcomes across various code environments. Our tests consistently show that MMNNs always succeed, whereas FCNNs fail with high probability (increasing FCNN width may improve the likelihood of success).

\begin{figure}
    \centering	
            \begin{subfigure}[c]{0.3253012\textwidth}
    \centering            \includegraphics[width=0.975859598055\textwidth]{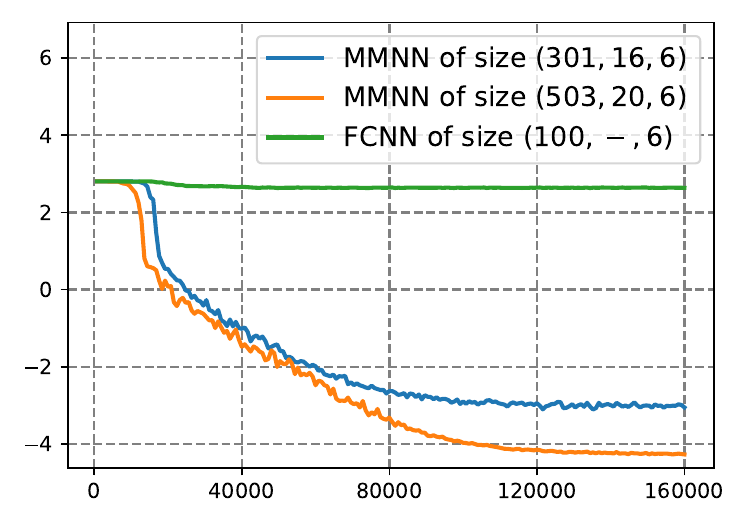}
    \subcaption{Training error.}
    \end{subfigure}
    \hfill
    \begin{subfigure}[c]{0.3253012\textwidth}
    \centering            \includegraphics[width=0.975859598055\textwidth]{figures/PDE/PDE_error_train.pdf}
    \subcaption{Test error (MSE).}
    \end{subfigure}
    \hfill 
 \begin{subfigure}[c]{0.3253012\textwidth}
    \centering            \includegraphics[width=0.975859598055\textwidth]{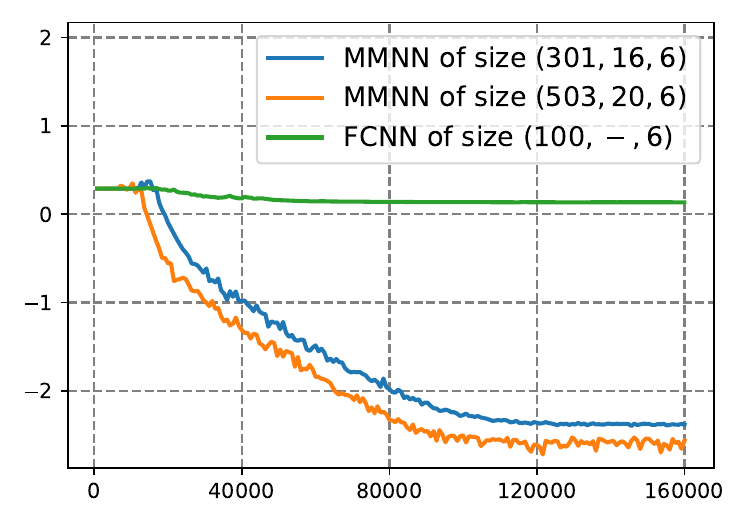}
    \subcaption{Test error (MAX).}
    \end{subfigure}
\caption{Average errors across 16 seeds versus epoch. The training error corresponds to the PDE loss, while the test error (MSE or MAX) quantifies the difference between the learned network and the true solution. All errors shown on the $y$-axis are in base-10 logarithmic scale.}
\label{fig:PDE:error:comparison}
\end{figure}

\begin{figure}
    \centering	
    \begin{subfigure}[b]{0.24311\textwidth}
    \centering            \includegraphics[width=0.998055\textwidth]{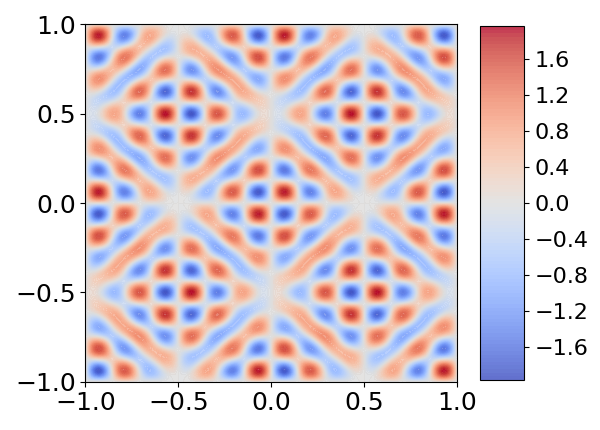}
    \subcaption{Truth.}
    \end{subfigure}
        \hfill
              \begin{subfigure}[b]{0.24311\textwidth}
    \centering            \includegraphics[width=0.985\textwidth]{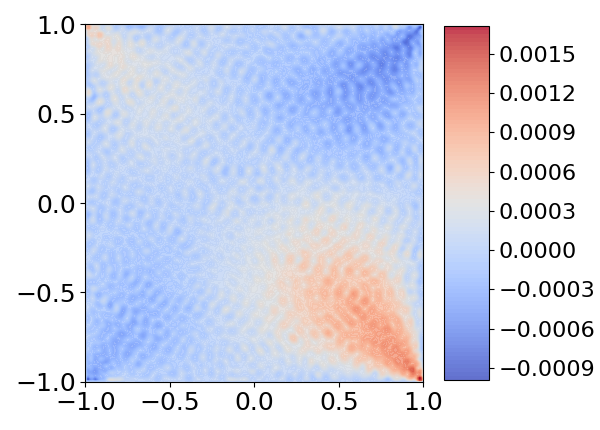}
    \subcaption{MMNN1.}
    \end{subfigure}
    \hfill
         \begin{subfigure}[b]{0.24311\textwidth}
    \centering            \includegraphics[width=0.998055\textwidth]{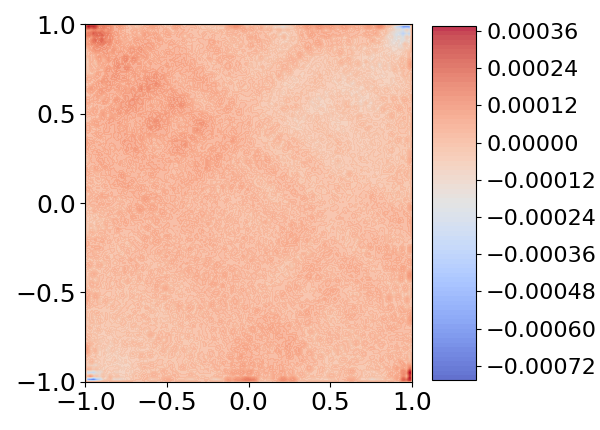}
    \subcaption{MMNN2.}
    \end{subfigure}
    \hfill
    \begin{subfigure}[b]{0.24311\textwidth}
    \centering            \includegraphics[width=0.998055\textwidth]{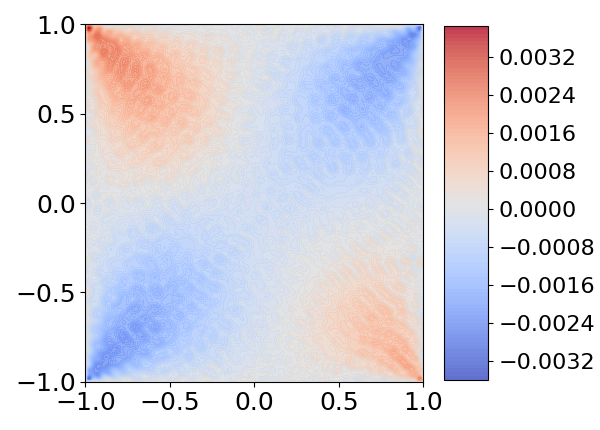}
    \subcaption{FCNN.}
    \end{subfigure}
\caption{Comparison of three networks: (a) true solution; (b, c, d) differences between the true solution and predictions from learned networks. For each network, we select the best trained model from 16 seeds.}
  \label{fig:PDE:fun:diff}
\end{figure}

\begin{table}[h]
	\centering  
 \setlength{\tabcolsep}{0.868em} 
 \renewcommand{\arraystretch}{1.15}
\caption{Comparison of test errors. }
	\label{tab:PDE:error:comparison}
	\resizebox{0.95\textwidth}{!}{ 
		\begin{tabular}{ccccccccc} 
			\toprule
            {\#parameters (trained / all)}
            &\multicolumn{2}{c}{24462 / \textbf{50950}} &\multicolumn{2}{c}{\textbf{50904} / 105228} &\multicolumn{2}{c}{\textbf{50901} / \textbf{50901}}   \\
            \cmidrule(lr){2-3}
            \cmidrule(lr){4-5}
            \cmidrule(lr){6-7}
        \rowcolor{mygray}   
        &\multicolumn{2}{c}{MMNN1 of size (301, 16, 6)} &\multicolumn{2}{c}{MMNN2 of size (503, 20, 6)} &\multicolumn{2}{c}{FCNN of size (100,  --, 6)}
        
        \\
            \cmidrule(lr){2-3}
            \cmidrule(lr){4-5}
            \cmidrule(lr){6-7}
        {seed for randomness} &         MSE 
    &  MAX &    MSE 
    &  MAX &    MSE 
    &  MAX 
    \\
  \midrule

$0$   &  $ 2.60 \times 10^{-6} $  &  $ 6.22 \times 10^{-3} $  &  $ 1.77 \times 10^{-8} $  &  $ 3.38 \times 10^{-3} $  &  $ 4.95 \times 10^{-1} $  &  $ 1.94 \times 10^{0} $ 
 \\ 

\rowcolor{mygray}$1$   &  $ 2.18 \times 10^{-7} $  &  $ 2.08 \times 10^{-3} $  &  $ 9.32 \times 10^{-9} $  &  $ 6.76 \times 10^{-4} $  &  $ \bm{7.95 \times 10^{-7}} $  &  $ 3.35 \times 10^{-3} $ 
 \\ 

 $2$   &  $ 1.18 \times 10^{-6} $  &  $ 4.18 \times 10^{-3} $  &  $ 2.38 \times 10^{-8} $  &  $ 1.55 \times 10^{-3} $  &  $ 5.01 \times 10^{-1} $  &  $ 1.94 \times 10^{0} $ 
 \\ 

\rowcolor{mygray}$3$   &  $ 3.19 \times 10^{-6} $  &  $ 5.57 \times 10^{-3} $  &  $ 6.68 \times 10^{-9} $  &  $ 1.19 \times 10^{-3} $  &  $ 5.03 \times 10^{-1} $  &  $ 1.94 \times 10^{0} $ 
 \\ 

 $4$   &  $ 5.34 \times 10^{-7} $  &  $ 3.12 \times 10^{-3} $  &  $ 2.35 \times 10^{-8} $  &  $ 2.89 \times 10^{-3} $  &  $ 4.99 \times 10^{-1} $  &  $ 1.94 \times 10^{0} $ 
 \\ 

\rowcolor{mygray}$5$   &  $ 1.76 \times 10^{-6} $  &  $ 5.15 \times 10^{-3} $  &  $ 1.78 \times 10^{-8} $  &  $ 2.51 \times 10^{-3} $  &  $ 4.96 \times 10^{-1} $  &  $ 1.94 \times 10^{0} $ 
 \\ 

 $6$   &  $ 4.92 \times 10^{-7} $  &  $ 2.61 \times 10^{-3} $  &  $ 9.68 \times 10^{-9} $  &  $ 4.65 \times 10^{-3} $  &  $ 5.04 \times 10^{-1} $  &  $ 1.94 \times 10^{0} $ 
 \\ 

\rowcolor{mygray}$7$   &  $ 1.97 \times 10^{-6} $  &  $ 5.34 \times 10^{-3} $  &  $ 3.87 \times 10^{-8} $  &  $ 3.37 \times 10^{-3} $  &  $ 5.01 \times 10^{-1} $  &  $ 1.94 \times 10^{0} $ 
 \\ 

 $8$   &  $ 4.98 \times 10^{-7} $  &  $ 2.44 \times 10^{-3} $  &  $ 4.04 \times 10^{-8} $  &  $ 9.83 \times 10^{-4} $  &  $ 8.20 \times 10^{-7} $  &  $ \bm{3.22 \times 10^{-3}} $ 
 \\ 

\rowcolor{mygray}$9$   &  $ 2.57 \times 10^{-6} $  &  $ 4.10 \times 10^{-3} $  &  $ \bm{1.69 \times 10^{-9}} $  &  $ \bm{6.66 \times 10^{-4}} $  &  $ 5.00 \times 10^{-1} $  &  $ 1.94 \times 10^{0} $ 
 \\ 

 $10$   &  $ 2.47 \times 10^{-6} $  &  $ 6.15 \times 10^{-3} $  &  $ 2.72 \times 10^{-8} $  &  $ 3.88 \times 10^{-3} $  &  $ 5.02 \times 10^{-1} $  &  $ 1.94 \times 10^{0} $ 
 \\ 

\rowcolor{mygray}$11$   &  $ 1.03 \times 10^{-6} $  &  $ 3.86 \times 10^{-3} $  &  $ 2.02 \times 10^{-8} $  &  $ 1.31 \times 10^{-3} $  &  $ 5.03 \times 10^{-1} $  &  $ 1.94 \times 10^{0} $ 
 \\ 

 $12$   &  $ 1.61 \times 10^{-6} $  &  $ 5.13 \times 10^{-3} $  &  $ 4.38 \times 10^{-9} $  &  $ 1.16 \times 10^{-3} $  &  $ 1.60 \times 10^{-5} $  &  $ 1.12 \times 10^{-2} $ 
 \\ 

\rowcolor{mygray}$13$   &  $ \bm{9.35 \times 10^{-8}} $  &  $ \bm{1.52 \times 10^{-3}} $  &  $ 3.85 \times 10^{-8} $  &  $ 1.07 \times 10^{-2} $  &  $ 2.86 \times 10^{-2} $  &  $ 4.39 \times 10^{-1} $ 
 \\ 

 $14$   &  $ 7.42 \times 10^{-7} $  &  $ 3.41 \times 10^{-3} $  &  $ 1.51 \times 10^{-8} $  &  $ 1.77 \times 10^{-3} $  &  $ 5.01 \times 10^{-1} $  &  $ 1.95 \times 10^{0} $ 
 \\ 

\rowcolor{mygray}$15$   &  $ 6.58 \times 10^{-6} $  &  $ 6.25 \times 10^{-3} $  &  $ 2.74 \times 10^{-8} $  &  $ 1.23 \times 10^{-3} $  &  $ 6.85 \times 10^{-6} $  &  $ 7.97 \times 10^{-3} $ 
 \\

 \bottomrule
		\end{tabular} 
	}
\end{table}

\section{Multi-component and multi-layer decomposition}
\label{sec:decomposition}

It has been shown in \cite{ZZZZ-23} that a one-hidden-layer neural network acts as a low-pass filter and cannot effectively represent or learn high-frequency features. Using mathematical construction, we demonstrate that MMNNs, which are composed of one-hidden-layer neural networks, can overcome this difficulty by decomposing complexity through their components and/or depth.
We emphasize that the decomposition is highly non-unique. Our construction is ``man-made" which can be different from the one by computer through an optimization (learning) process.
Our discussion begins with one-dimensional construction in Section~\ref{sec:decomposition:1D} and later extends to higher dimensions in Section~\ref{sec:decomposition:highD}.



\subsection{One dimensional construction}
\label{sec:decomposition:1D}

We begin with a two-component decomposition in the one-dimensional as both an illustration and an example in Section~\ref{sec:1D:two:component:decom}. Later in Section~\ref{sec:1D:general:multi:component:decom}, we introduce the general multi-component decomposition. Finally in Section~\ref{sec:1D:decom:examples}, we use concrete examples for demonstration.

\subsubsection{Two-component decomposition}
\label{sec:1D:two:component:decom}

We demonstrate a simple ``divide and conquer'' strategy for an example function 
$f(x)=\cos(2n\pi x)$, a high frequency Fourier mode when $n$ is large. 
Define
\begin{equation*}
     f_2:(u,v)\in [-1,1]^2 \mapsto 
    \cos\big(n\pi  (u+1)\big)+ \cos\big(n\pi  (v-1)\big)\in \R.
\end{equation*}
and 
$\bmf_1=\begin{bsmallmatrix}
    f_{1,1}\\  f_{1,2}\end{bsmallmatrix}:[-1,1]\mapsto[-1,1]^2$, where the components $f_{1,1}$ and $f_{1,2}$ are given by
\begin{equation}\label{eq:component:f11}
	f_{1,1}(x)=\ReLU(2x)-1=\begin{cases}
		-1 & \tn{for}\  x\in [-1,0),\\
		2x-1 & \tn{for}\  x\in [0,1],
	\end{cases}
\end{equation}
and
\begin{equation}\label{eq:component:f12}
	f_{1,2}(x)=-\ReLU(-2x)+1=\begin{cases}
		2x+1 & \tn{for}\  x\in [-1,0),\\
		1  & \tn{for}\  x\in [0,1].
	\end{cases}
\end{equation}
Then for any $x\in [-1,1]$ we have 
\begin{equation*}
	\begin{split}
	    f(x)&=\cos\Big(n\pi  \cdot\ReLU(2x)\Big)+\cos\Big(-n\pi  \cdot\ReLU(-2x)\Big)
     \\& =\cos\Big(n\pi  \big(f_{1,1}(x)+1\big)\Big)+\cos\Big(n\pi  \big(f_{1,2}(x)-1\big)\Big)
 =
 f_2\circ \bmf_1(x).
	\end{split}
\end{equation*}
Through this decomposition and piecewise linear transformation, which can be approximated easily by a single layer of {\ReLU} network, one only needs to approximate a function that is smoother than the original $f$: $\bm{f}_1$ is simplified, while $f_2$ is reduced to half of the frequency of the original target function $f$. 

We observe that this decomposition approach is universally applicable for any function $f:[-1,1]\mapsto\mathbb{R}$. Specifically, the decomposition is defined as
\begin{equation*}
    f_2:(u,v)\in[-1,1]^2\mapsto 
    f\big(\tfrac{u+1}{2}\big)+ f\big(\tfrac{v-1}{2}\big)-f(0)\in\R.
\end{equation*}
and 
$\bmf_1=\begin{bsmallmatrix}
    f_{1,1}\\  f_{1,2}\end{bsmallmatrix}:[-1,1]\mapsto[-1,1]^2$,
where $f_{1,1}$ and $f_{1,2}$ are given in \eqref{eq:component:f11} and \eqref{eq:component:f12}.
Hence, for any $x\in [-1,1]$, we achieve the following reconstruction of $f(x)$:
\begin{equation*}
	\begin{split}
	    f(x)&=f\Big(\tfrac{\ReLU(2x)}{2}\Big)+ f\Big(\tfrac{-\ReLU(-2x)}{2}\Big)-f(0)
     \\& =f\Big(\tfrac{f_{1,1}(x)+1}{2}\Big)+ f\Big(\tfrac{f_{1,2}(x)-1}{2}\Big)-f(0)
 =
 f_2\circ \bmf_1(x)
	\end{split}
\end{equation*}
demonstrating a structured decomposition that allows the function to be expressed through the composition of a smoother function with a piecewise (component-wise) transformation and rescaling.

\subsubsection{General multi-component decomposition}
\label{sec:1D:general:multi:component:decom}

Now we extend to a general multi-component adaptive decomposition, a ``divide and conquer" strategy, that can distribute the complexity of a target function evenly to multiple components.

Given a sequence $x_0<x_1<\cdots<x_n$ where the target function is defined on the interval $[x_0,x_n]$, we will demonstrate how our new architecture allows us to partition the complexities of the function $f$ into smaller intervals $[x_{i-1},x_i]$. By rescaling each subinterval, one only needs to deal with a much smoother function in each interval. This approach enables us to effectively approximate the target function over the entire interval $[x_0,x_n]$. 

Let $\calL_i:[a_i,b_i]\to [x_{i-1},x_i]$ be the linear map with 
\begin{equation}\label{eq:1D:Li}
    \calL_i(a_i)=x_{i-1} \quad \tn{and}\quad \calL_i(b_i)=x_i.
\end{equation}
Define
\begin{equation}\label{eq:1D:fi}
    f_i=f\circ \calL_i: [a_i,b_i]\to \R. 
\end{equation}

To decompose the target function into smoother pieces, we define a piecewise linear transformation 
$\psi_i$ using a linear combination of two \texttt{ReLU} functions (or a simple single layer network),
\begin{equation}\label{eq:1D:psii}
    \psi_i(x)=s_i\cdot \ReLU\left( {x-x_{i-1}} \right)
    -s_i\cdot \ReLU\left( {x-x_{i}} \right)+a_i.
\end{equation}
Here $s_i=\frac{b_i-a_i}{x_i-x_{i-1}}$ is the ``slope'' of $\calL_i^{-1}$, which is a local rescaling. For example, $f_i$ becomes a smoother function than $f$ after stretching $[x_{i-1},x_i]$ to a larger domain $[a_i,b_i]$. See an illustration of $\psi_i(x)$ in Figure~\ref{fig:psi:i}. 

\begin{figure}
    \centering
    \includegraphics[width=0.615\textwidth]{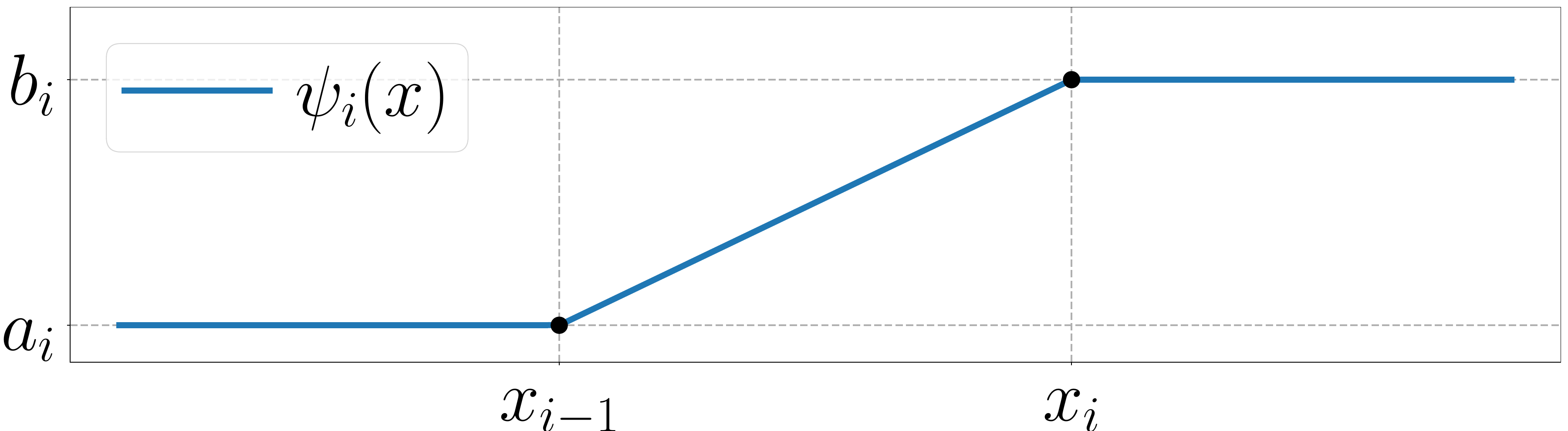}
    \caption{An illustration of $\psi_i(x)$.}
    \label{fig:psi:i}
\end{figure}


\begin{figure}
    \centering	
    \begin{subfigure}[c]{0.998\textwidth}
    \centering            \includegraphics[width=0.827998055\textwidth]{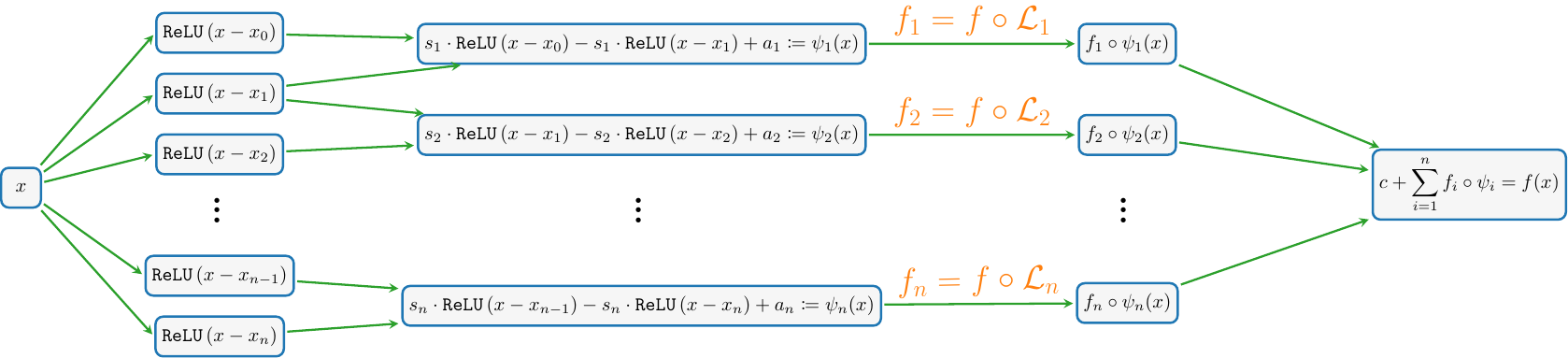}
    \subcaption{Decomposition of target function $f=c+ \sum_{i=1}^n f_i\circ \psi_i$:
    oscillatory $f$ to smooth $f_i$'s.}
    \end{subfigure}\\
    \vspace*{12pt}
        \begin{subfigure}[c]{0.998\textwidth}
    \centering            \includegraphics[width=0.827998055\textwidth]{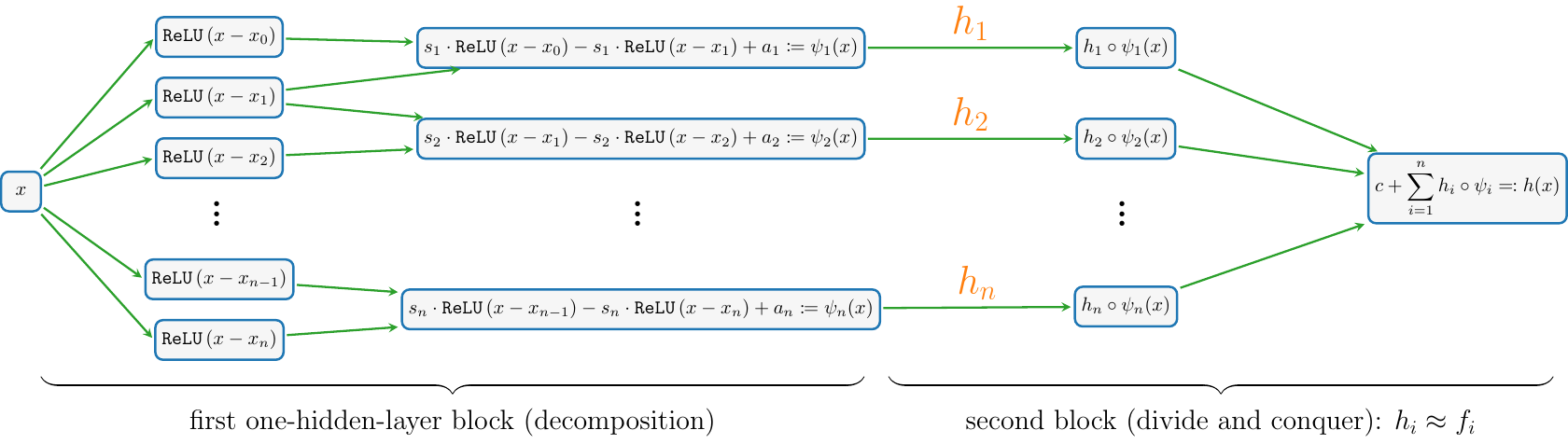}
    \subcaption{Neural network architecture of $h=c+ \sum_{i=1}^n h_i\circ \psi_i$ by using $h_i\approx f_i$.}
    \end{subfigure}
    \caption{
Visual representations of the decompositions of $f$ and $h$ are provided with $c=\sum_{i=0}^{n-1}f(x_i)$ being a constant and $s_i$ being the slope. Here, the function $f$ is dissected into several simpler functions, labeled as $f_i$. Each $f_i$ represents a simplified and more manageable segment of $f$, allowing for the straightforward application of sub-network $h_i$ to closely approximate $f_i$, even with the use of shallow networks.
    }
    	\label{fig:1D:decomposition:approx}
\end{figure}

\begin{theorem}
\label{thm:decomposition:1D}
Given \(x_0<x_1<\cdots<x_n\),
    suppose $\calL_{i}$ and $\psi_i$ are given in Equations~\eqref{eq:1D:Li} and \eqref{eq:1D:psii}, respectively. Then the target function $f:[x_0,x_n]\to\R$ has the following (smoother) decomposition ($f_i$) with a piecewise linear transformation ($\psi_i$),
    
\begin{equation*}
    \begin{split}
        f(x)=\sum_{i=1}^n f_i\circ \psi_i(x)
    -\underbrace{ \sum_{i=1}^{n-1}f(x_i)}_{\tn{constant}}\quad \tn{for any $x\in [x_{0},x_n]$,}
    \end{split}
\end{equation*}
where $f_{i}$ is given in Equation~\eqref{eq:1D:fi}.
\end{theorem}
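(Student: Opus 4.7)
The plan is to verify the identity pointwise by analyzing the piecewise linear map $\psi_i$ and then evaluating $f_i\circ\psi_i$ on each of its three linear pieces. The key observation is that $\psi_i$ is a ``clip-and-rescale'' map: on the subinterval $[x_{i-1},x_i]$ it coincides with the inverse $\mathcal{L}_i^{-1}$ (an affine stretch taking $[x_{i-1},x_i]$ onto $[a_i,b_i]$), while outside $[x_{i-1},x_i]$ it saturates to a constant, namely $a_i$ on the left and $b_i$ on the right. Consequently $f_i\circ\psi_i$ faithfully reproduces $f$ on the $i$-th subinterval and is constant elsewhere, which is exactly the kind of local ``activation'' needed for a telescoping argument.

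First I would compute $\psi_i(x)$ explicitly from its definition in \eqref{eq:1D:psii}. The two $\ReLU$ terms give three regimes: for $x\le x_{i-1}$ both $\ReLU$'s vanish so $\psi_i(x)=a_i$; for $x\in[x_{i-1},x_i]$ only the first is active, giving $\psi_i(x)=s_i(x-x_{i-1})+a_i$; and for $x\ge x_i$ both are active and the difference telescopes to $s_i(x_i-x_{i-1})+a_i=b_i$ by the definition of $s_i$. Next I would compose with $f_i=f\circ\mathcal{L}_i$ and use that $\mathcal{L}_i$ is the affine map sending $a_i\mapsto x_{i-1}$ and $b_i\mapsto x_i$ with slope $1/s_i$. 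On the middle regime this yields $\mathcal{L}_i(\psi_i(x))=x$, so $f_i(\psi_i(x))=f(x)$; on the left and right regimes it collapses to $f(x_{i-1})$ and $f(x_i)$, respectively.

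Then I would fix an arbitrary $x\in[x_0,x_n]$, locate the unique index $j$ such that $x\in[x_{j-1},x_j]$, and split the sum $\sum_{i=1}^n f_i\circ\psi_i(x)$ according to whether $i<j$, $i=j$, or $i>j$. The three pieces evaluated above give
\begin{equation*}
\sum_{i=1}^n f_i\circ\psi_i(x)=\sum_{i=1}^{j-1}f(x_i)+f(x)+\sum_{i=j+1}^{n}f(x_{i-1})=f(x)+\sum_{i=1}^{n-1}f(x_i),
\end{equation*}
where the last equality is a simple re-indexing of the right sum. Rearranging yields the claimed identity. Endpoints $x=x_j$ belong to two adjacent subintervals, but both cases give the same value $f(x_j)$ by construction, so there is no ambiguity.

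The argument is essentially routine; the only mildly delicate point is the bookkeeping in the final sum, where one must check that the contributions from indices $i\ne j$ pair up to deliver exactly one copy of $f(x_k)$ for every interior node $x_k$ with $1\le k\le n-1$, and that the endpoints $x_0,x_n$ contribute no constant term. This is the step I would write out most carefully, since an off-by-one error in the indices would spoil the telescoping and hence the theorem.
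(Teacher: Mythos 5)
Your proposal is correct and follows essentially the same route as the paper's proof: identify the three regimes of $\psi_i$ (saturate to $a_i$, coincide with $\calL_i^{-1}$, saturate to $b_i$), locate the subinterval $[x_{j-1},x_j]$ containing $x$, split the sum into $i<j$, $i=j$, $i>j$, and reindex the tail sum $\sum_{i=j+1}^{n}f(x_{i-1})$ to recover $\sum_{i=1}^{n-1}f(x_i)$. The only difference is cosmetic: you derive the three-case form of $\psi_i$ explicitly from the \ReLU\ definition and note the consistency at the shared endpoints $x_j$, both of which the paper leaves implicit.
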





\begin{proof}
By definition of $\psi_i$ in Equation~\eqref{eq:1D:psii}, 
it is easy to check
\begin{equation*}
    \psi_i(x)=\begin{cases}
        b_i &\tn{if  } x>x_{i},\\
        \calL_i^{-1}(x) &\tn{if  }x\in [x_{i-1},x_i],\\
        a_i &\tn{if  }x<x_{i-1},
    \end{cases}
    \quad \Longrightarrow \quad 
        \psi_i(x)=\begin{cases}
        b_i &\tn{if  }i\le j-1,\\
        \calL_j^{-1}(x) &\tn{if  }i= j ,\\
        a_i &\tn{if  }i\ge j+1,
    \end{cases}
\end{equation*}
for a fixed $j\in \{1,2,\cdots,n\}$ and any $x\in [x_{j-1}, x_j]$.
It follows that
\begin{equation*}
    \begin{split}
        \sum_{i=1}^n f_i\circ \psi_i(x)
        &=\sum_{i=1}^n f\circ \calL_i\circ \psi_i(x)
    \\&=\sum_{i=1}^{j-1} f\circ \calL_i\circ \psi_i(x)
    +f\circ \calL_j\circ \psi_j(x)
    +\sum_{i=j+1}^n f\circ \calL_i\circ \psi_i(x)
    \\& = \sum_{i=1}^{j-1} f\circ \calL_i(b_i)
    +f\circ \calL_j\circ\calL_j^{-1}(x) 
    +\sum_{i=j+1}^n f\circ \calL_i(a_i)
    \\& = \sum_{i=1}^{j-1} f (x_i)
    +f(x)
    +\sum_{i=j+1}^n f (x_{i-1})
    =f(x)+\underbrace{\sum_{i=1}^{n-1}f(x_i)}_{\tn{constant}}.
    \end{split}
\end{equation*}
It follows that
\begin{equation*}
    \begin{split}
        f(x)=\sum_{i=1}^n f_i\circ \psi_i(x)
    -\underbrace{\sum_{i=1}^{n-1}f(x_i)}_{\tn{constant}}\quad \tn{for any $x\in [x_{j-1},x_j]$.}
    \end{split}
\end{equation*}
Since $j$ is arbitrary, the above equation holds for all
$x=\cup_{j=1}^n[x_{j-1}, x_j]= [x_0,x_n]$. 
\end{proof}

For each smoother $f_i$, one can use a shallow network component $\phi_i$, a linear combination of random basis functions, to approximate $f_i$ well on $[a_i,b_i]$. Then
\begin{equation*}
    f(x)=\sum_{i=1}^n f_i\circ \psi_i(x)-\underbrace{\sum_{i=1}^{n-1}f(x_i)}_{\tn{constant}}
    \approx \sum_{i=1}^n \phi_i\circ \psi_i(x)-\underbrace{\sum_{i=1}^{n-1}f(x_i)}_{\tn{constant}}
   \eqqcolon h(x),
\end{equation*}
$h(x)$ is a one-hidden-layer neural network approximation of the target function $f(x)$ that can approximate a complex function better than a single layer.
See Figure~\ref{fig:1D:decomposition:approx} for an illustration. In practice, one can choose repeated decomposition using a multi-component and multi-layer network structure which is the motivation for MMNN. It is well-known that neural networks can approximate smooth functions well. For
localized rapid change/oscillation, our construction shows that a small network in terms of the width as well as the number of components and layers can achieve adaptive decomposition and deal with it rather easily. Hence, MMNN is effective in approximating a function with localized fine features. This is an important advantage in dealing with low-dimensional structures embedded in high dimensions. The most difficult situation is approximating global highly oscillatory functions, especially with diverse frequency modes, for which wider networks with more components and layers are needed to deal with both the complexity and curse of dimensions.

\subsubsection{Examples}
\label{sec:1D:decom:examples}


Here we use two examples to demonstrate the complexity decomposition strategy presented in the previous section. We start with the Runge function $f(x) = \frac{1}{25x^2+1}$ and modify it to $f(x) = \frac{1}{1000x^2+1}$, which has a localized rapid change near $0$. As an example, we use four components $n=4$, choose points $x_0, x_1, x_2, x_3, x_4$ at $-1, -0.2, 0, 0.2, 1$, and let $a_i = -1$ and $b_i = 1$ for all 
 $i$. In practice, each component is approximated by a single-layer network - a linear combination of basis functions, and trained by an optimization method, e.g., Adam. Our examples here are just a proof of concept for the decomposition of a target function into smoother components using MMNN structure in the form
\[
f(x) = \sum_{i=1}^4 f_i\circ \psi_i(x) - \underbrace{\sum_{i=1}^{3} f(x_i)}_{\text{constant}},
\]
where $f_i$ and $\psi_i$ (piecewise tranformation/rescaling) are defined as in~\eqref{eq:1D:fi} and \eqref{eq:1D:psii}, respectively. These components are illustrated in Figure~\ref{fig:Runge:F:decom}.
Each component is relatively smooth, making it easier for approximation and learning through shallow networks. This approach essentially utilizes a divide-and-conquer principle.

\begin{figure}
    \centering	
    \begin{subfigure}[c]{0.1948\textwidth}
    \centering            \includegraphics[width=0.998055\textwidth]{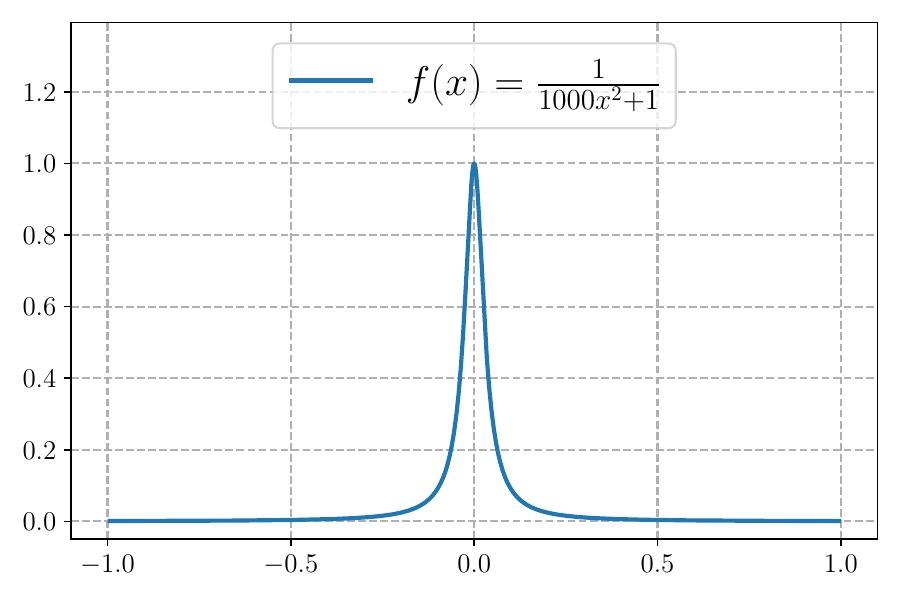}
    \end{subfigure}\hfill
    \begin{subfigure}[c]{0.1948\textwidth}
    \centering            \includegraphics[width=0.998055\textwidth]{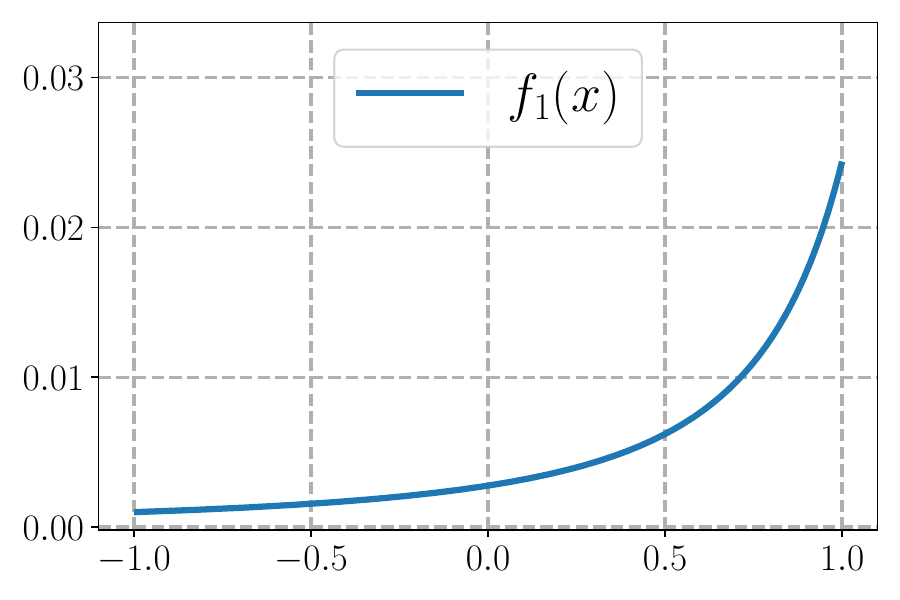}
    \end{subfigure}\hfill
        \begin{subfigure}[c]{0.1948\textwidth}
    \centering            \includegraphics[width=0.998055\textwidth]{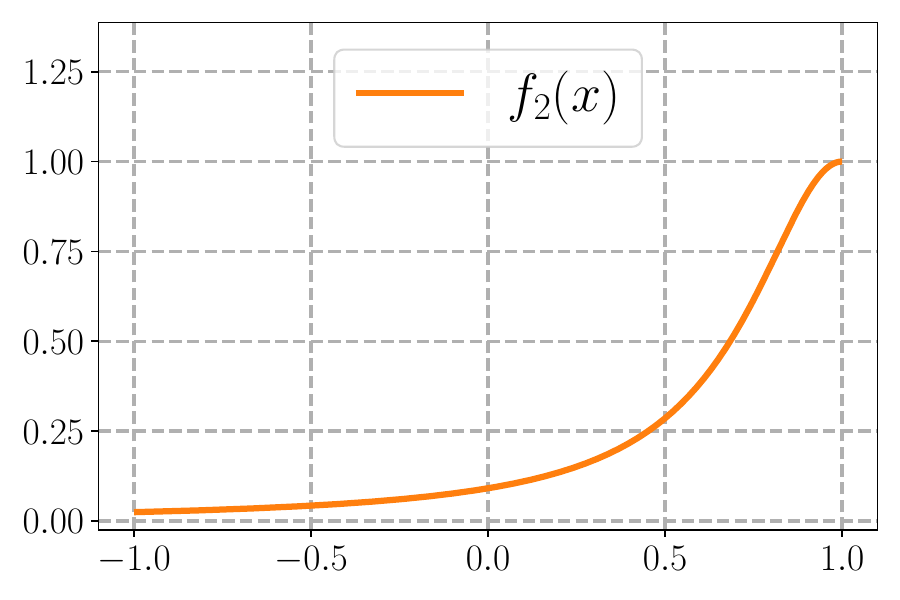}
    \end{subfigure}\hfill
    \begin{subfigure}[c]{0.1948\textwidth}
    \centering            \includegraphics[width=0.998055\textwidth]{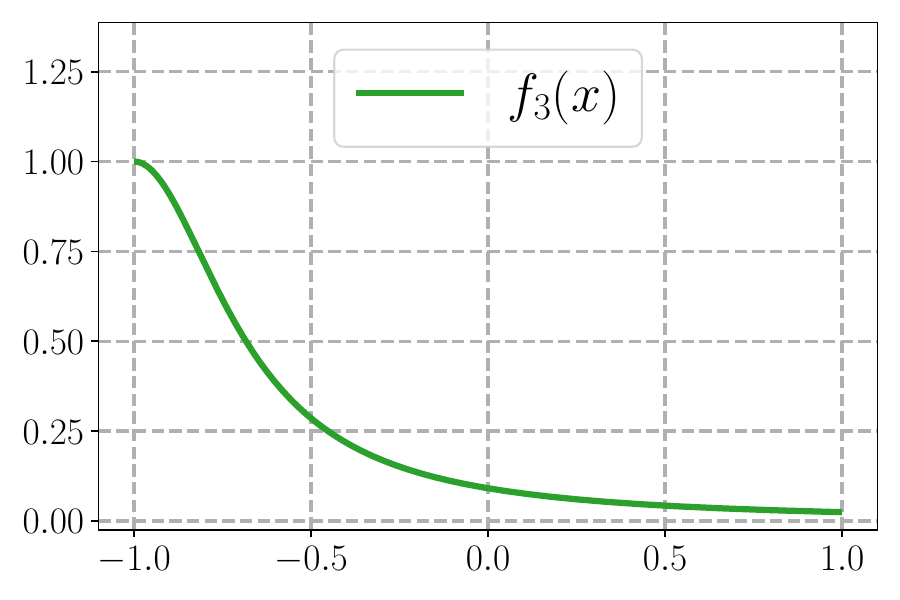}
    \end{subfigure}\hfill
        \begin{subfigure}[c]{0.1948\textwidth}
    \centering            \includegraphics[width=0.998055\textwidth]{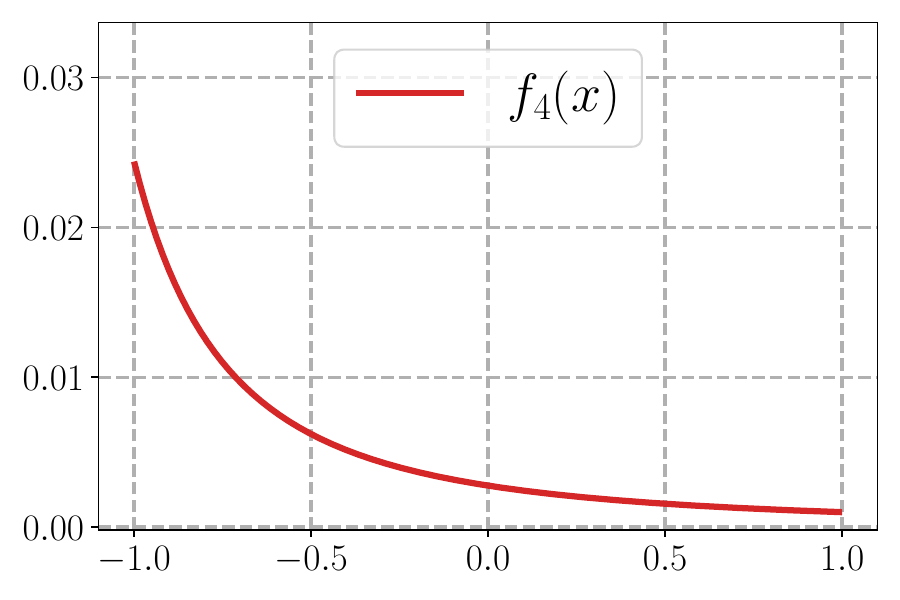}
    \end{subfigure}\\
    \begin{subfigure}[c]{0.1948\textwidth}
    \centering            \includegraphics[width=0.998055\textwidth]{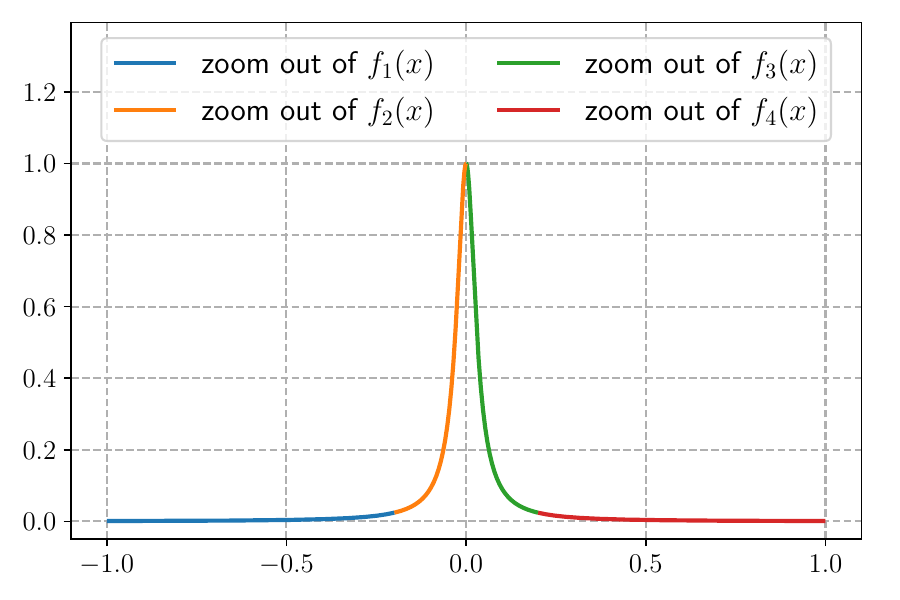}
    \end{subfigure}
        \begin{subfigure}[c]{0.1948\textwidth}
    \centering            \includegraphics[width=0.998055\textwidth]{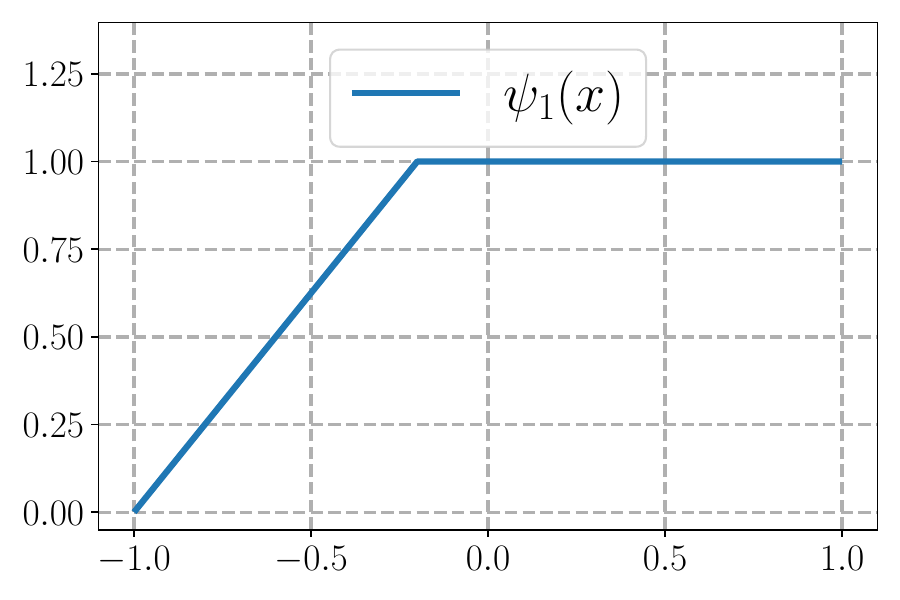}
    \end{subfigure}\hfill
        \begin{subfigure}[c]{0.1948\textwidth}
    \centering            \includegraphics[width=0.998055\textwidth]{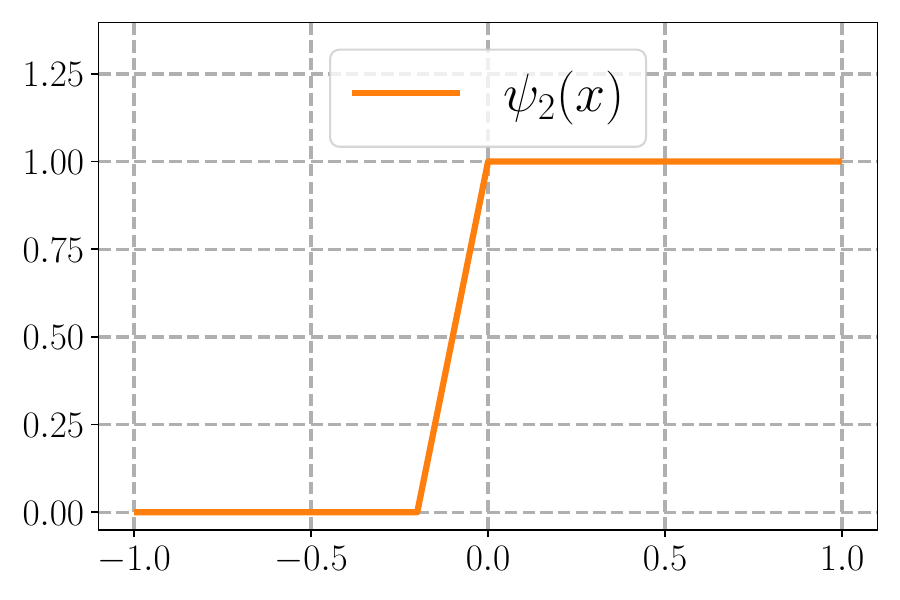}
    \end{subfigure}\hfill
    \begin{subfigure}[c]{0.1948\textwidth}
    \centering            \includegraphics[width=0.998055\textwidth]{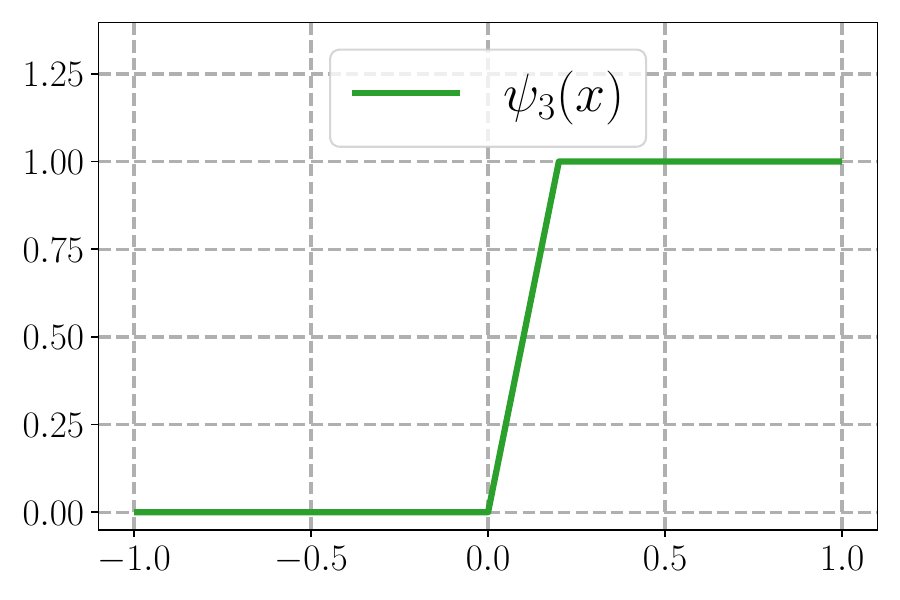}
    \end{subfigure}\hfill
        \begin{subfigure}[c]{0.1948\textwidth}
    \centering            \includegraphics[width=0.998055\textwidth]{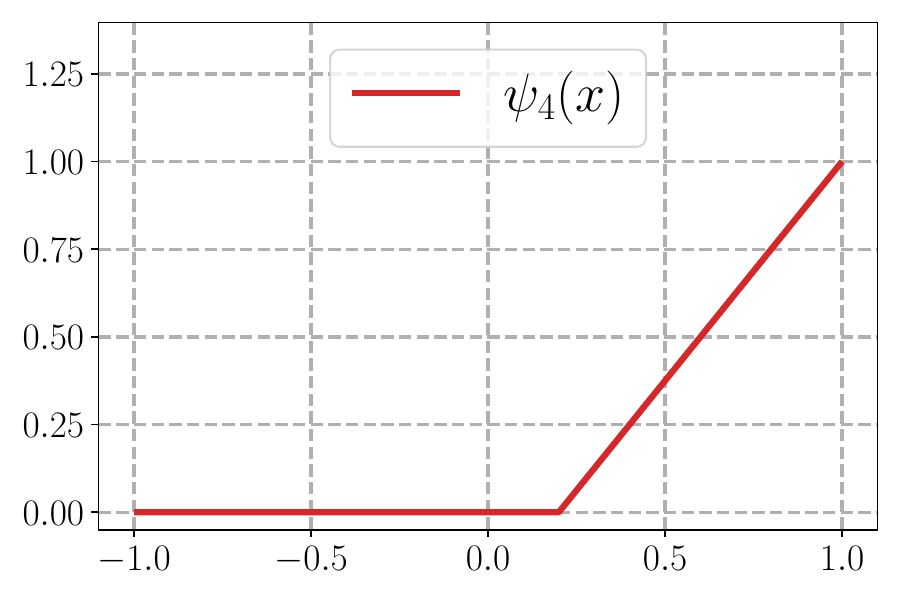}
    \end{subfigure}
    \caption{
Illustrations of $f(x)=\frac{1}{1000x^2+1}$ and its multi-component decomposition through $f_i$ and $\psi_i$ $i=1,2,3,4$, where
$f(x) = \sum_{i=1}^4 f_i\circ \psi_i(x) -  {\sum_{i=1}^{3} f(x_i)}$.
    }
    	\label{fig:Runge:F:decom}
\end{figure}

The second example is a globally oscillatory function of the form
\[
f(x) = \cos^2(6\pi x) + \sin(10\pi x^2).
\]
Again we illustrate using four components $n=4$, selecting points $x_0, x_1, x_2, x_3, x_4$ at $-1, -0.7, 0, 0.7, 1$, and setting $a_i = -1$ and $b_i = 1$ for all $i$. As shown in Figure~\ref{fig:CosSin:F:decom}, the target function $f(x)$ is decomposed into components that are less oscillatory again facilitating their approximation and learning through shallow networks.

\begin{figure}
    \centering	
    \begin{subfigure}[c]{0.1948\textwidth}
    \centering            \includegraphics[width=0.998055\textwidth]{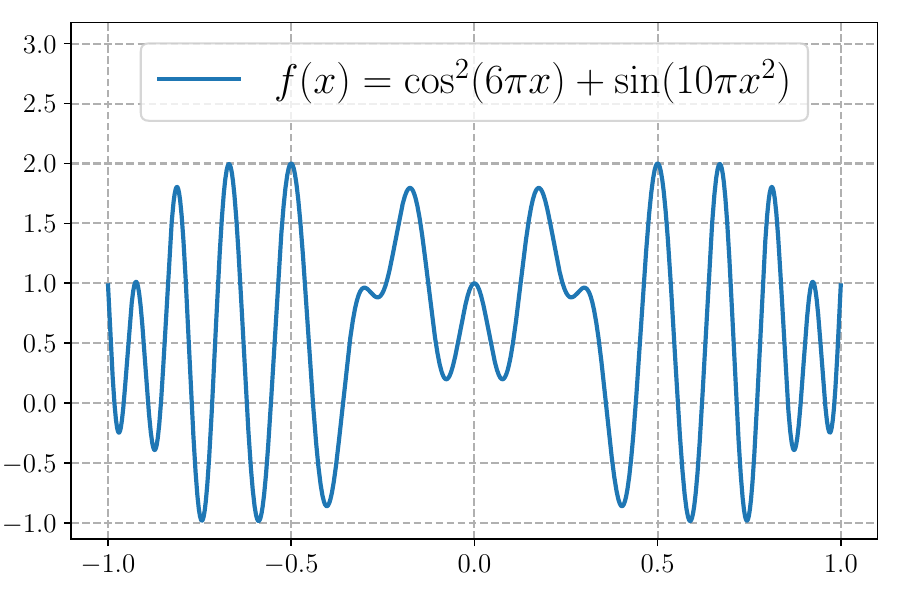}
    \end{subfigure}\hfill
        \begin{subfigure}[c]{0.1948\textwidth}
    \centering            \includegraphics[width=0.998055\textwidth]{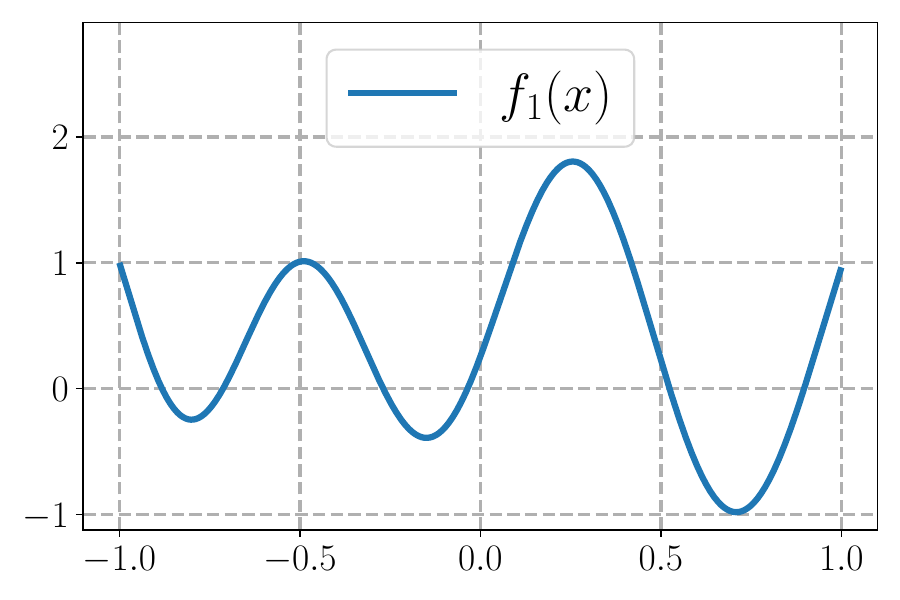}
    \end{subfigure}\hfill
        \begin{subfigure}[c]{0.1948\textwidth}
    \centering            \includegraphics[width=0.998055\textwidth]{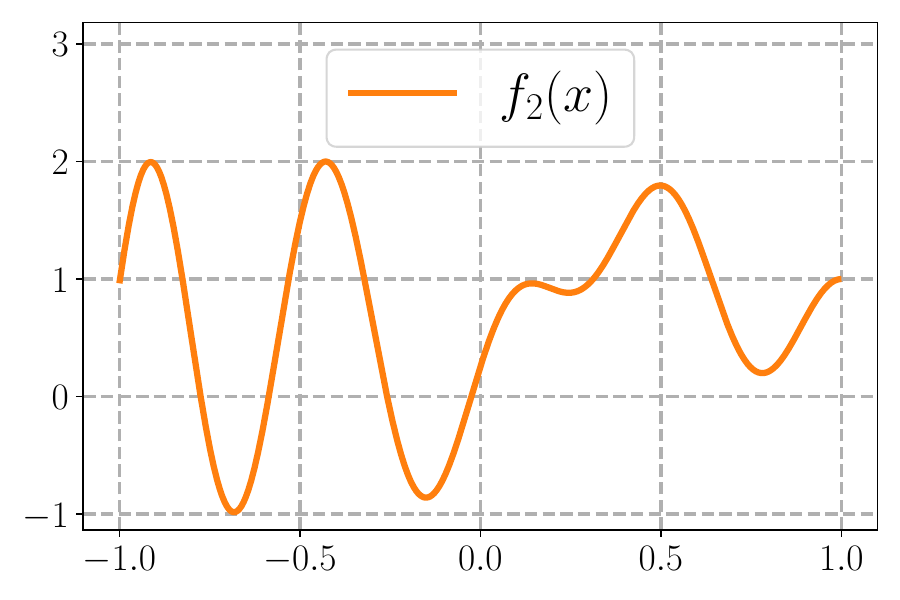}
    \end{subfigure}\hfill
    \begin{subfigure}[c]{0.1948\textwidth}
    \centering            \includegraphics[width=0.998055\textwidth]{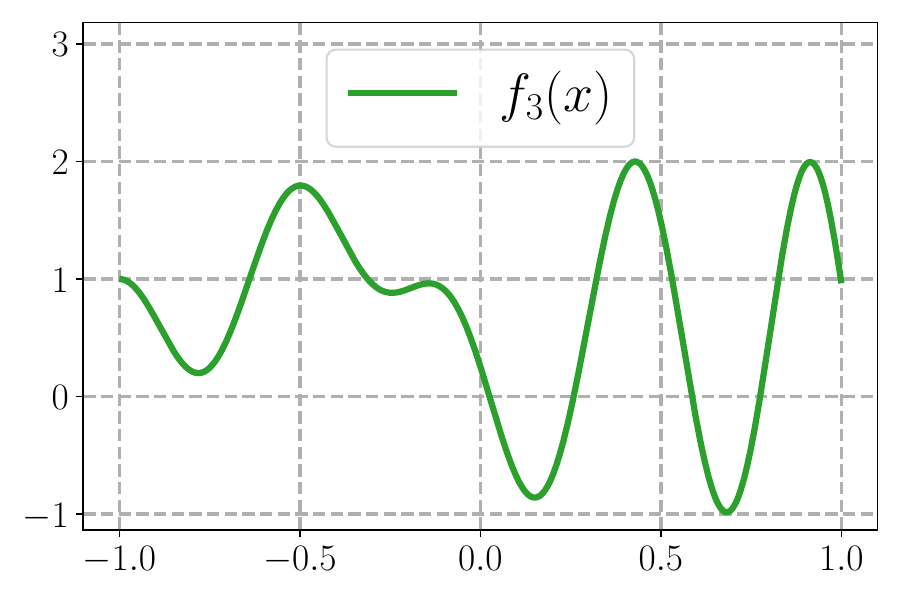}
    \end{subfigure}\hfill
        \begin{subfigure}[c]{0.1948\textwidth}
    \centering            \includegraphics[width=0.998055\textwidth]{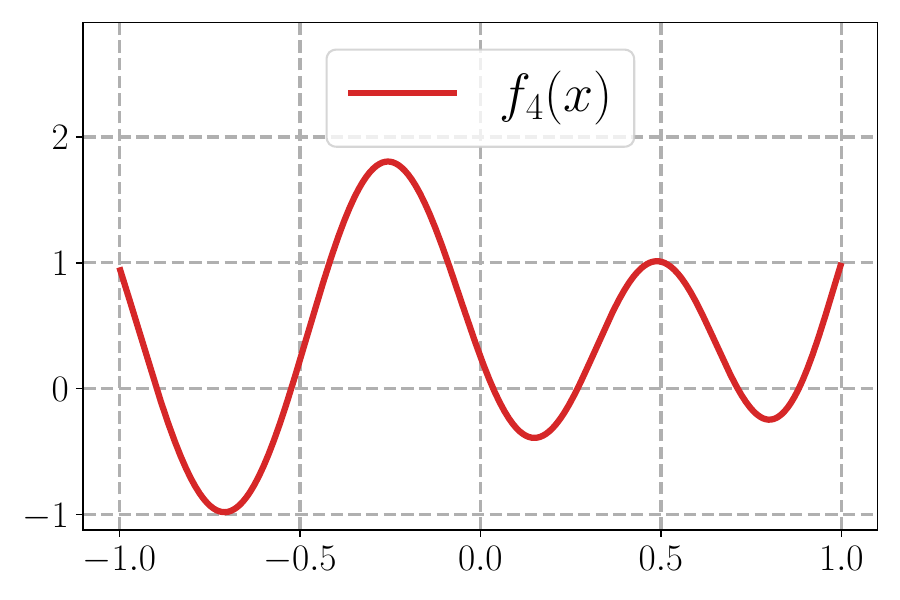}
    \end{subfigure}\\
            \begin{subfigure}[c]{0.1948\textwidth}
    \centering            \includegraphics[width=0.998055\textwidth]{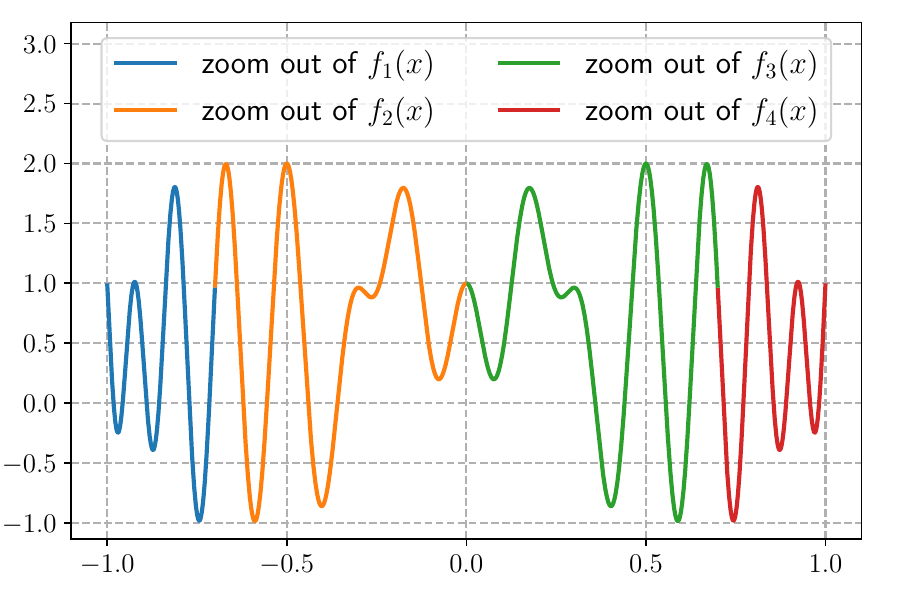}
    \end{subfigure}\hfill
        \begin{subfigure}[c]{0.1948\textwidth}
    \centering            \includegraphics[width=0.998055\textwidth]{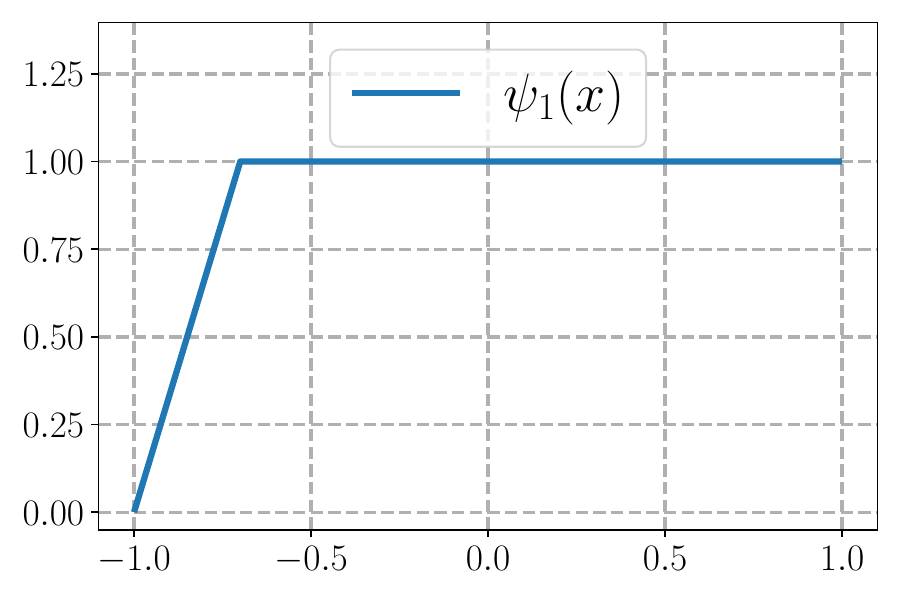}
    \end{subfigure}\hfill
        \begin{subfigure}[c]{0.1948\textwidth}
    \centering            \includegraphics[width=0.998055\textwidth]{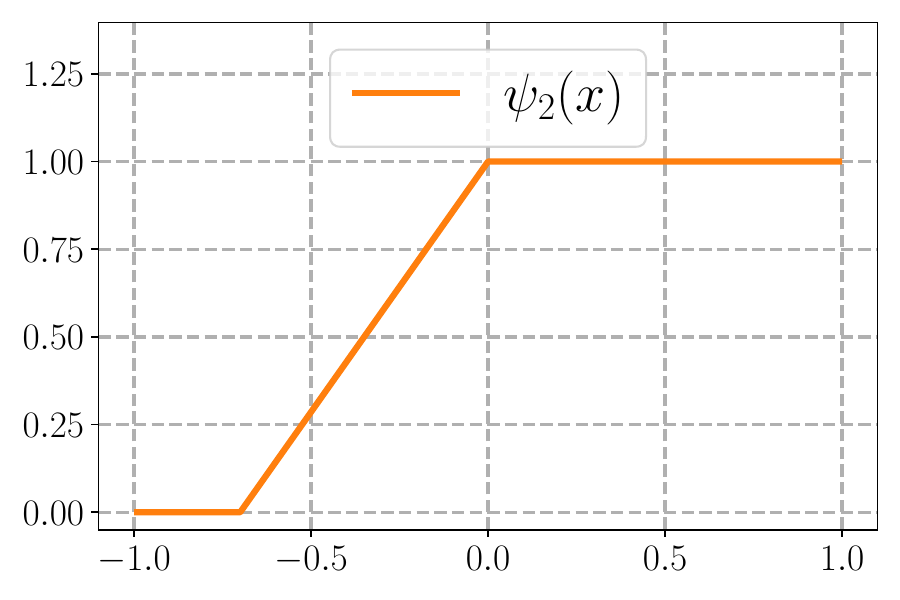}
    \end{subfigure}\hfill
    \begin{subfigure}[c]{0.1948\textwidth}
    \centering            \includegraphics[width=0.998055\textwidth]{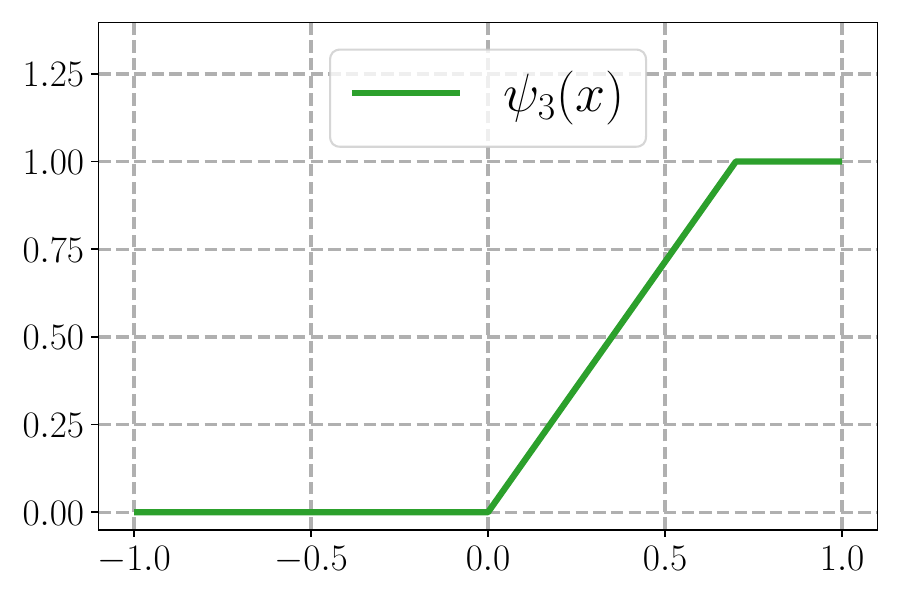}
    \end{subfigure}\hfill
        \begin{subfigure}[c]{0.1948\textwidth}
    \centering            \includegraphics[width=0.998055\textwidth]{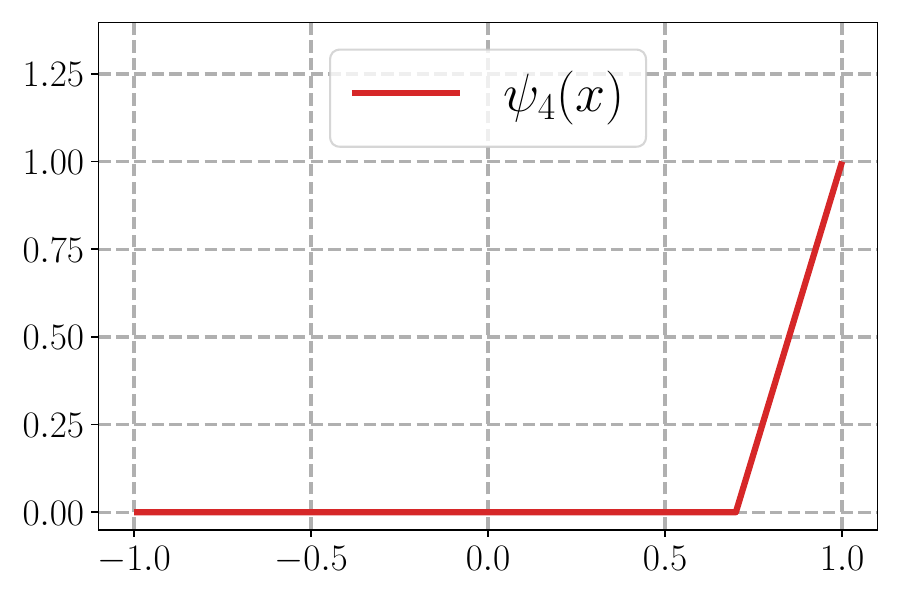}
    \end{subfigure}
    \caption{
Illustrations of $f(x)=\cos^2(6\pi x)+\sin(10\pi x^2)$ and its decomposition components $f_i$ and $\psi_i$ such that
$f(x) = \sum_{i=1}^4 f_i\circ \psi_i(x) -  {\sum_{i=1}^{3} f(x_i)}$.
    }
    	\label{fig:CosSin:F:decom}
\end{figure}

\subsection{High dimensional cases}
\label{sec:decomposition:highD}

Let us now consider the extension to multiple dimensions, using the case of two dimensions as an example, since the straightforward dimension-by-dimension strategy can be applied to any number of dimensions.

Given \(x_0<x_1<\cdots<x_n\) and \(y_0<y_1<\cdots<y_m\), dividing the domain of the function \(f(x,y)\) into small Cartesian rectangles \([x_{i-1},x_i]\times [y_{j-1}, y_j]\).
Let $\calL_{1,i}:[a_i,b_i]\to [x_{i-1},x_i]$ and $\calL_{2,j}:[c_i,d_i]\to [y_{j-1},y_j]$ be the linear maps with 
\begin{equation}
\label{eq:L:1i:2j}
\begin{cases}
        \calL_{1,i}(a_i)=x_{i-1},\\
    \calL_{1,i}(b_i)=x_i
\end{cases}
     \quad \tn{and}\quad\,\   
     \begin{cases}
        \calL_{2,j}(c_i)=y_{j-1},\\
    \calL_{2,j}(d_i)=y_j.
\end{cases}
\end{equation}
For $i=1,2,\cdots,n$ and $j=1,2,\cdots,m$, we define
\begin{equation}
\label{eq:fij}
\left\{\begin{array}{l}
    f_{i,0}(x,y)\coloneqq  f\Big(\calL_{1,i}(x),\, y\Big),
\\
    f_{0,j}(x,y)\coloneqq  f\Big(x,\, \calL_{2,j}(y)\Big),
    \\
    f_{i,j}(x,y)\coloneqq  f\Big(\calL_{1,i}(x),\, \calL_{2,j}(y)\Big)=f_{0,j}\Big(\calL_{1,i}(x),\, y\Big)=f_{i,0}\Big(x,\, \calL_{2,j}(y)\Big).
\end{array}\right.
\end{equation}
It is evident that with appropriate transformation and rescaling, \(f_{i,0}(x,y)\) is smooth in \(x\) when \(y\) is held constant, \(f_{0,j}(x,y)\) is smooth in \(y\) when \(x\) is fixed, and \(f_{i,j}(x,y)\) is smooth in both \(x\) and \(y\). 
Define
\begin{equation}\label{eq:psii:phij}
    \psi_i(x)=\begin{cases}
        b_i &\tn{if  } x>x_{i},\\
        \calL_{1,i}^{-1}(x) &\tn{if  }x\in [x_{i-1},x_i],\\
        a_i &\tn{if  }x<x_{i-1}
    \end{cases}
    \quad \tn{and}\quad 
        \phi_j(y)=\begin{cases}
        d_j &\tn{if  } y>y_{j},\\
        \calL_{2,j}^{-1}(y) &\tn{if  }y\in [y_{j-1},y_j],\\
        c_j &\tn{if  }y<y_{j-1}.
    \end{cases}
\end{equation}
The theorem below provides a decomposition of $f$ that fits into the MMNN structure.

\begin{theorem}
\label{thm:decomposition:2D}
Given \(x_0<x_1<\cdots<x_n\) and \(y_0<y_1<\cdots<y_m\),
    suppose $\calL_{1,i}, \calL_{2,j}$ and $\psi_i,\phi_j$ are given in Equations~\eqref{eq:L:1i:2j} and \eqref{eq:psii:phij}, respectively. Then the function $f:[x_0,x_n]\times [y_0, y_m]\to\R$ can be expressed as 
\begin{equation}
\label{eq:highD:decomposition}
    \begin{split}          
        f( x, \, y)
        & =\sum_{i=1}^n \sum_{j=1}^m f_{i,j}\Big( \psi_i(x), \, \phi_j(y)\Big)-
        \sum_{i=1}^n\sum_{j=1}^{m-1} f_{i,0}\Big(\psi_i(x),\, y_{j}\Big)
        \\ & \quad\   -  \sum_{i=1}^{n-1} \sum_{j=1}^m f_{0,j}\Big( x_i, \, \phi_j(y)\Big)+
        \sum_{i=1}^{n-1}\sum_{j=1}^{m-1} f (x_i,\, y_{j})
    \end{split}
\end{equation}
for all $(x,y)\in [x_0,x_n]\times [y_0,y_m]$, where $f_{i,j}$ are given in 
Equation~\eqref{eq:fij}.
\end{theorem}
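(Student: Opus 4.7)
The plan is to reduce the two-dimensional identity to two successive applications of the one-dimensional result (Theorem~\ref{thm:decomposition:1D}), treating each variable as a parameter in turn. First, for fixed $y\in [y_0,y_m]$, I would view $x\mapsto f(x,y)$ as a univariate function on $[x_0,x_n]$ and apply Theorem~\ref{thm:decomposition:1D} with the grid $x_0<\cdots<x_n$, the affine maps $\calL_{1,i}$, and the transformations $\psi_i$. Recognizing that $f(\calL_{1,i}(\cdot),y) = f_{i,0}(\cdot,y)$ by Equation~\eqref{eq:fij}, this yields
\begin{equation*}
f(x,y) = \sum_{i=1}^n f_{i,0}\bigl(\psi_i(x),\,y\bigr) - \sum_{i=1}^{n-1} f(x_i,\,y).
\end{equation*}

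Next, I would apply Theorem~\ref{thm:decomposition:1D} a second time, now in the $y$ variable, to each of the two groups of terms above. For the first group, with $x$ and $i$ held fixed, the inner 1D decomposition is applied to $y\mapsto f_{i,0}(\psi_i(x),y)$ on $[y_0,y_m]$; since $f_{i,0}(\cdot,\calL_{2,j}(\cdot)) = f_{i,j}(\cdot,\cdot)$ by Equation~\eqref{eq:fij}, this gives
\begin{equation*}
f_{i,0}\bigl(\psi_i(x),\,y\bigr) = \sum_{j=1}^m f_{i,j}\bigl(\psi_i(x),\,\phi_j(y)\bigr) - \sum_{j=1}^{m-1} f_{i,0}\bigl(\psi_i(x),\,y_j\bigr).
\end{equation*}
For the second group, one applies the 1D decomposition analogously to $y\mapsto f(x_i,y)$, using $f(x_i,\calL_{2,j}(\cdot)) = f_{0,j}(x_i,\cdot)$, to obtain
\begin{equation*}
f(x_i,\,y) = \sum_{j=1}^m f_{0,j}\bigl(x_i,\,\phi_j(y)\bigr) - \sum_{j=1}^{m-1} f(x_i,\,y_j).
\end{equation*}
Substituting both expansions into the outer decomposition and distributing the minus sign is expected to produce precisely the four-sum identity~\eqref{eq:highD:decomposition}.

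The main obstacle is purely the bookkeeping of the nested composition. In particular, one must verify that applying the inner 1D theorem to the ``remainder'' terms $f(x_i,y)$ — which are functions of $y$, not constants — correctly generates the off-diagonal $f_{0,j}(x_i,\phi_j(y))$ cross terms rather than spurious $f_{i,j}$ terms, and that the two successive layers of minus-sign corrections combine so that each of the two mixed double sums carries a minus sign while the pure boundary values $f(x_i,y_j)$ reappear with a plus sign. Once these identifications are carefully matched with the definitions in Equations~\eqref{eq:L:1i:2j}--\eqref{eq:psii:phij}, the four resulting summands line up one-for-one with those in Equation~\eqref{eq:highD:decomposition}, and the argument extends to any number of dimensions by induction on the coordinate count.
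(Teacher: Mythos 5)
Your proposal is correct and follows essentially the same route as the paper: the paper's proof likewise first expands $f(x,y)=\sum_{i=1}^n f_{i,0}\big(\psi_i(x),y\big)-\sum_{i=1}^{n-1} f(x_i,y)$ in the $x$-variable and then applies the one-dimensional decomposition in $y$ to each of the terms $f_{i,0}\big(\psi_i(x),y\big)$ and $f(x_i,y)$, combining the signs exactly as you describe. The only cosmetic difference is that the paper re-derives the one-dimensional identity inline (splitting each sum at the index of the active cell) rather than invoking Theorem~\ref{thm:decomposition:1D} as a black box.
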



\begin{proof}
Fixing $(k,j)$, for any $(x,y)\in [x_{k-1}, x_k]\times [y_{\ell-1}, y_\ell]$, we have
\begin{equation*}
    \psi_i(x)=\begin{cases}
        b_i &\tn{if  }i\le k-1,\\
        \calL_{1,k}^{-1}(x) &\tn{if  }i= k ,\\
        a_i &\tn{if  }i\ge k+1 
    \end{cases}
        \quad \tn{and}\quad 
        \phi_j(y)=\begin{cases}
        d_j &\tn{if  } j \le \ell-1,\\
        \calL_{2,\ell}^{-1}(y) &\tn{if  } j=\ell,\\
        c_j &\tn{if  } j\ge \ell+1.
    \end{cases}
\end{equation*}
It follows that 
\begin{equation*}
    \begin{split}
          &\phantom{=\;}\sum_{i=1}^n f_{i,0}\Big( \psi_i(x), \, y\Big)
        = \sum_{i=1}^n f\Big( \calL_{1,i}\circ \psi_i(x), \, y\Big)
        \\
        &=
        \sum_{i=1}^{k-1} f\Big( \calL_{1,i}\circ \psi_i(x), \, y\Big)
        +f\Big( \calL_{1,k}\circ \psi_k(x), \, y\Big)
        +
        \sum_{i=k+1}^n f\Big( \calL_{1,i}\circ \psi_i(x), \, y \Big)
        \\
        &=
        \sum_{i=1}^{k-1} f\Big( \calL_{1,i}(b_i), \, y\Big)
        +f\Big( \calL_{1,k}\circ \calL_{1,k}^{-1}(x), \, y\Big)
        +
        \sum_{i=k+1}^n f\Big( \calL_{1,i}(a_i), \, y\Big)
        \\
        &=
        \sum_{i=1}^{k-1} f( x_i, \, y)
        +f( x, \, y)
        +
        \sum_{i=k+1}^n f( x_{i-1}, \, y)
=
        f( x, \, y)
        +\sum_{i=1}^{n-1} f( x_i, \, y),
    \end{split}
\end{equation*}
implying 
\begin{equation*}
        f( x, \, y)=\sum_{i=1}^n f_{i,0}\Big( \psi_i(x), \, y\Big)-  \sum_{i=1}^{n-1} f( x_i, \, y).
\end{equation*}
For each \(i\), using the one-dimensional decomposition technique described in Section~\ref{sec:decomposition:1D}, we find the decompositions for 
\(f_{i,0}\big( \psi_i(x), \, y\big)\) and \(f( x_i, \, y)\).
We have 
{\small\begin{equation*}
    \begin{split}
          &\phantom{=\;}\sum_{j=1}^m f_{i,j}\Big( \psi_i(x), \, \phi_j(y)\Big)
        = \sum_{j=1}^m f_{i,0}\Big(\psi_i(x),\, \calL_{2,j}\circ \phi_j(y)\Big)\\
        &=
        \sum_{j=1}^{\ell-1} f_{i,0}\Big(\psi_i(x),\, \calL_{2,j}\circ \phi_j(y)\Big)
        +f_{i,0}\Big(\psi_i(x),\, \calL_{2,\ell }\circ \phi_\ell(y)\Big)
        +      \sum_{j=\ell+1}^m f_{i,0}\Big(\psi_i(x),\, \calL_{2,j}\circ \phi_j(y)\Big)
        \\
        &=
        \sum_{j=1}^{\ell-1} f_{i,0}\Big(\psi_i(x),\, \calL_{2,j}(d_j)\Big)
        +f_{i,0}\Big(\psi_i(x),\, \calL_{2,\ell }\circ \calL_{2,\ell }^{-1}(y)\Big)
        +
        \sum_{j=\ell +1}^m f_{i,0}\Big(\psi_i(x),\, \calL_{2,j}(c_j)\Big)
        \\
        &=
        \sum_{j=1}^{\ell-1} f_{i,0}\Big(\psi_i(x),\, y_j\Big)
        +f_{i,0}\Big(\psi_i(x),\, y\Big)
        +
         \sum_{j=\ell+1}^m f_{i,0}\Big(\psi_i(x),\, y_{j-1}\Big)
         \\&=f_{i,0}\Big(\psi_i(x),\, y\Big)
        +
        \sum_{j=1}^{m-1} f_{i,0}\Big(\psi_i(x),\, y_{j-1}\Big),
    \end{split}
\end{equation*}}implying 
\begin{equation}\label{eq:fi0:psiixy}
    \begin{split}
          f_{i,0}\Big(\psi_i(x),\, y\Big)
        =\sum_{j=1}^m f_{i,j}\Big( \psi_i(x), \, \phi_j(y)\Big)-
        \sum_{j=1}^{m-1} f_{i,0}\Big(\psi_i(x),\, y_{j}\Big).
    \end{split}
\end{equation}
Moreover, 
 \begin{equation*}
    \begin{split}
          &\phantom{=\;}\sum_{j=1}^m f_{0,j}\Big( x_i, \, \phi_j(y)\Big)
        = \sum_{j=1}^m f \Big(x_i,\, \calL_{2,j}\circ \phi_j(y)\Big)\\
        &=
        \sum_{j=1}^{\ell-1} f \Big(x_i,\, \calL_{2,j}\circ \phi_j(y)\Big)
        +f \Big(x_i,\, \calL_{2,\ell}\circ \phi_\ell(y)\Big)
        +
         \sum_{j=\ell+1}^mf \Big(x_i,\, \calL_{2,j}\circ \phi_j(y)\Big)
        \\
        &=
        \sum_{j=1}^{\ell-1} f \Big(x_i,\, \calL_{2,j}(d_j)\Big)
        +f \Big(x_i,\, \calL_{2,\ell }\circ \calL_{2,\ell }^{-1}(y)\Big)
        +
        \sum_{j=\ell+1}^m f \Big(x_i,\, \calL_{2,j}(c_j)\Big)
        \\
        &=
        \sum_{j=1}^{\ell-1} f (x_i,\, y_j)
        +f (x_i,\, y)
        +
        \sum_{j=\ell+1}^m f (x_i,\, y_{j-1})        =f (x_i,\, y)
        +
        \sum_{j=1}^{m-1} f (x_i,\, y_{j}),
    \end{split}
\end{equation*}
implying 
\begin{equation}\label{eq:fxiy}
    \begin{split}
          f (x_i,\, y)
        =\sum_{j=1}^m f_{0,j}\Big( x_i, \, \phi_j(y)\Big)-
        \sum_{j=1}^{m-1} f (x_i,\, y_{j}).
    \end{split}
\end{equation}
Therefore, for any $(x,y)\in [x_{k-1}, x_k]\times [y_{\ell-1}, y_\ell]$, by \eqref{eq:fi0:psiixy}  and \eqref{eq:fxiy}, we have
\begin{equation*}
    \begin{split}          
        f( x, \, y)
        &=\sum_{i=1}^n f_{i,0}\Big( \psi_i(x), \, y\Big)-  \sum_{i=1}^{n-1} f( x_i, \, y)
        \\& =\sum_{i=1}^n \sum_{j=1}^m f_{i,j}\Big( \psi_i(x), \, \phi_j(y)\Big)-
        \sum_{i=1}^n\sum_{j=1}^{m-1} f_{i,0}\Big(\psi_i(x),\, y_{j}\Big)
           \\& \quad\  -  \sum_{i=1}^{n-1} \sum_{j=1}^m f_{0,j}\Big( x_i, \, \phi_j(y)\Big)+
        \sum_{i=1}^{n-1}\sum_{j=1}^{m-1} f (x_i,\, y_{j}).
    \end{split}
\end{equation*}
Since $k$ and $\ell$ are arbitrary, the above equation holds for all
$(x,y)=\cup_{k=1}^n\cup_{\ell=1}^m[x_{k-1}, x_k]\times [y_{\ell-1}, y_\ell]= [x_0,x_n]\times [y_0,y_m]$.
\end{proof}

\subsection{Related work}
\label{sec:related:work}
Several lines of research are closely related to this work, including approximation theory, low-rank methods, random feature models, and architectures inspired by the Kolmogorov–Arnold representation.
\paragraph{Approximation}
Extensive research has examined the approximation capabilities of neural networks, focusing on various architectures to approximate diverse target functions. Early studies concentrated on the universal approximation power of single-hidden-layer networks~\cite{Cybenko1989ApproximationBS,HORNIK1991251,HORNIK1989359}, which demonstrated that sufficiently large neural networks could approximate specific functions with arbitrary precision mathematically, without quantifying the error relative to network size. Subsequent research, such as~\cite{yarotsky18a,yarotsky2017,doi:10.1137/18M118709X,ZHOU2019,10.3389/fams.2018.00014,2019arXiv190501208G,2019arXiv190207896G,MO,shijun:NonlineArpprox,shijun:Characterized:by:Numer:Neurons,shijun:smooth:functions,shijun:arbitrary:error:with:fixed:size,shijun:thesis,shijun:intrinsic:parameters,shijun:RCNet,JMLR:v25:23-0912,shijun:net:arc:beyond:width:depth,shijun:optimal:rate:in:width:and:depth}, analyzed the approximation error for different networks in terms of size characterized by width, depth, or the number of parameters. 
Those studies have primarily concentrated on the mathematical theory that supports the existence theory for such neural networks. However, there has been limited focus on determining the parameters within these networks computationally and the numerical errors, particularly those arising from finite precision in computer simulations. This gap motivated our current investigation, which considers practical training processes and numerical errors. Specifically, the balanced structure of MMNN, the choice of training parameters, and the associated learning strategy discussed here are intended to facilitate a smooth decomposition of the function, thereby promoting an efficient training process.

\paragraph{Low-rank methods}
Low-rank structures in the weight matrix $\bmW$ of a fully connected neural network have been investigated by various groups. For example, the methods proposed in \cite{2022arXiv220913569R,9157223,6638949} focus on accelerating training and reducing memory requirements while maintaining final performance. The concept of low-rank structures is further extended to tensor train decomposition in~\cite{novikov2015tensorizing}.
The MMNN proposed here differs in two key aspects. First, each layer contains two matrices: $\bmA$ outside and $\bmW$ inside the activation functions. Each row of $\bmA$ represents the weights for a linear combination of a set of random basis functions, forming a component in each layer. The number of rows in $\bmA$, which equals the number of components, is selected based on the complexity of the function and is typically much smaller than the number of columns, corresponding to the number of basis functions. Each row of $(\bmW, \bmb)$ represents a random parameterization of a basis function, with the number of rows in $\bmW$ corresponding to the number of basis functions, usually much larger than the number of columns in $\bmW$, which is the input dimension.
Secondly, in our MMNN, only $\bmA$ is trained while $\bmW$ remains fixed with randomly initialized values. Theoretical studies and numerical experiments demonstrate that the architecture of MMNN, combined with the learning strategy, is effective in approximating complex functions.


\paragraph{Random features} 
Fixing $(\bmW, \bmb)$ of each layer and use of random basis functions in the MMNNs is inspired by a previous approach known as random features \cite{NIPS2007_013a006f,NIPS2016_e7061188,9495136,NIPS2017_61b1fb3f,peng2021random}. In typical random feature methods, only the linear combination parameters at the output layer are trained which also leads to the issue of ill-conditioning of the representation. While in MMNNS matrix $\bmA$ and vector $\bmc$ of each layer are trained. 
Our MMNN employs a composition architecture and learning mechanism that enhances the approximation capabilities compared to random feature methods while achieving a more effective training process than a standard fully connected network of equivalent size. Extensive experiments demonstrate that our approach can strike a satisfactory balance between approximation accuracy and training cost.

\paragraph{Komogolrov-Arnold (KA) representation} 
The KA representation theorem \cite{kolmogorov1957} states that any multivariate continuous function on a hypercube can be expressed as a finite composition of continuous univariate functions and the binary operation of addition. However, this elegant mathematical representation may result in compositions of non-smooth or even fractal univariate functions in general, a computational challenge one has to address in practice. 
KA representation has been explored in several studies \cite{2024arXiv240419756L,MAIOROV199981,shijun:arbitrary:error:with:fixed:size,ISMAYILOVA2024106333}. A recently proposed network known as the KA network (KAN) utilizes spline functions to approximate the univariate functions in the KA representation. The proposed MMNN is motivated by a multi-component and multi-layer smooth decomposition, or a ``divide and conquer" approach, employing distinct network architectures, activation functions, and training strategies.

\section{Numerical experiments}
\label{sec:experiments}

We perform extensive experiments to validate our analysis and demonstrate the effectiveness of MMNNs through multi-component and multi-layer decomposition studied in Section~\ref{sec:decomposition}. In particular, our tests show its ability in (1) adaptively capturing localized high-frequency features in Section~\ref{sec:local}, (2) approximating highly oscillatory functions in Section~\ref{sec:oscillatory}, 
(3) approximating discontinuous functions with porous structures in Section~\ref{sec:discontinuous:func:porous}, (4) some interesting learning dynamics in Section~\ref{sec:dynamics}, and 
(5) solving problems in three and higher dimensions in Section~\ref{sec:higher}.
All our experiments involve target functions that include high-frequency components in various ways and are difficult to handle by shallow networks (no matter how wide) as shown in our previous work \cite{ZZZZ-23}. Moreover, our experience on these tests shows that using a fully connected deep neural network would require many more parameters and is much harder (if possible) to train to get a comparable result. This is mainly due to a balanced and structured network design of MMNN in terms of (1) the network width $w$, which is the number of hidden neurons or random basis functions in each component, (2) the rank $r$, which is the number of components in each layer, and (3) the network depth $l$, which is the number of layers in the network. The use of a controllable number of collective components (through $\bmA$) in each layer instead of a large number of individual neurons and the use of fixed and randomly chosen weights $(\bmW,\bmb)$ make the training process more effective.

In all tests, (1) data are sampled enough to resolve fine features in the target function, (2) the Adam optimizer is used in training, (3) the mean squared error (MSE) is the loss function, (4) the default activation function used is \ReLU, (5)
all computation and training use single precision in PyTorch,
(6) 
all parameters are initialized according to the PyTorch default initialization (Section~\ref{sec:MMNN:learning:stategy}) unless otherwise specified,
(7) $\bmW$'s and $\bmb$'s (the parameters inside the activation functions, see Section~\ref{sec:MMNN:architecture}) are fixed and only $\bmA$'s and $\bmc$'s (the parameters outside the activation functions) are trained,
(8)
computations are conducted on a \textit{NVIDIA RTX 3500 Ada Generation Laptop GPU (power cap 130W)}, with most experiments concluding within a range from a few dozen to several thousand seconds.
All our MMNN setups are specified by three parameters $(w,r,l)$ which depend on the function complexity. Another tuning parameter is the learning rate which is guided by the following criteria: (1) not too large initially due to stability, (2) a decreasing rate with iterations such that the learning rate becomes small near the equilibrium to achieve a good accuracy while not decreasing too fast (especially during a long training process for more difficult target functions) so that the training is stalled. 

\subsection{Localized rapid changes}\label{sec:local}

We begin with two one-dimensional examples. The first is $f(x) = \arctan(100x+20)$, which is smooth but features a rapid transition at zero. While demonstrated in our previous work \cite{ZZZZ-23}, a shallow network struggles to capture such a simple local fast transition which contains high-frequencies, we show that this function can be approximated easily by a composition of a smooth function on top of a (repeated) spatial decomposition and local rescaling using MMNN structure in Section~\ref{sec:MMNN:architecture}.  Our test indeed verifies that our new architecture can effectively capture a localized fast transition rather easily using a very small network of size $(16,4,3)$ as shown in Figure~\ref{fig:arctan:100x}. 
For this test, a total of 1000 data points are uniformly sampled in the range $[-1, 1]$, with a mini-batch size of 100, a learning rate of $10^{-3}$, and the number of epochs set to 2000. Figure~\ref{fig:arctan:100x:error} gives the error plot.

\begin{figure}
\begin{minipage}[b]{0.70\linewidth}
    \centering
    \begin{subfigure}[c]{0.32431\textwidth}
        \centering
        \includegraphics[width=0.998055\textwidth]{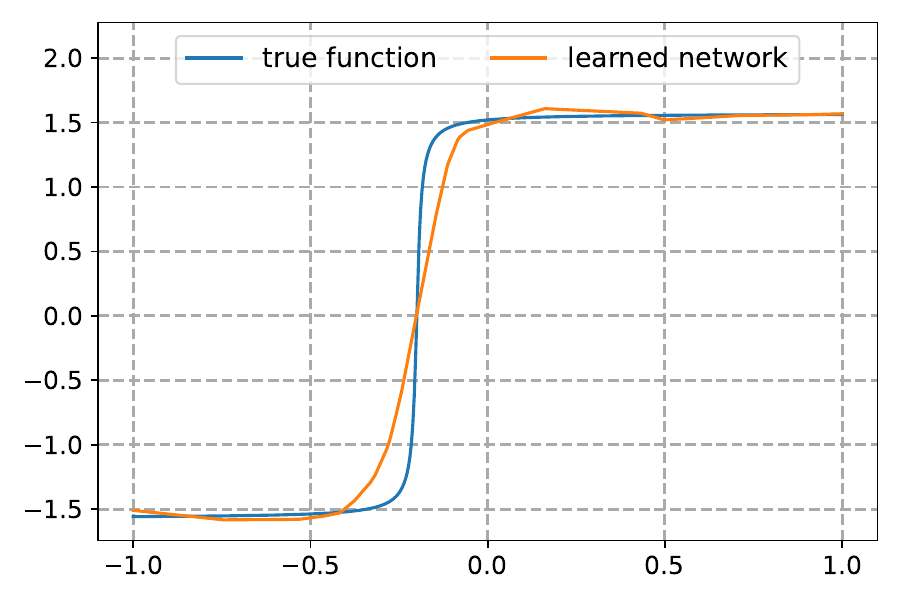}
        \subcaption{Epoch 100.}
    \end{subfigure}
    \hfill
    \begin{subfigure}[c]{0.32431\textwidth}
        \centering
        \includegraphics[width=0.998055\textwidth]{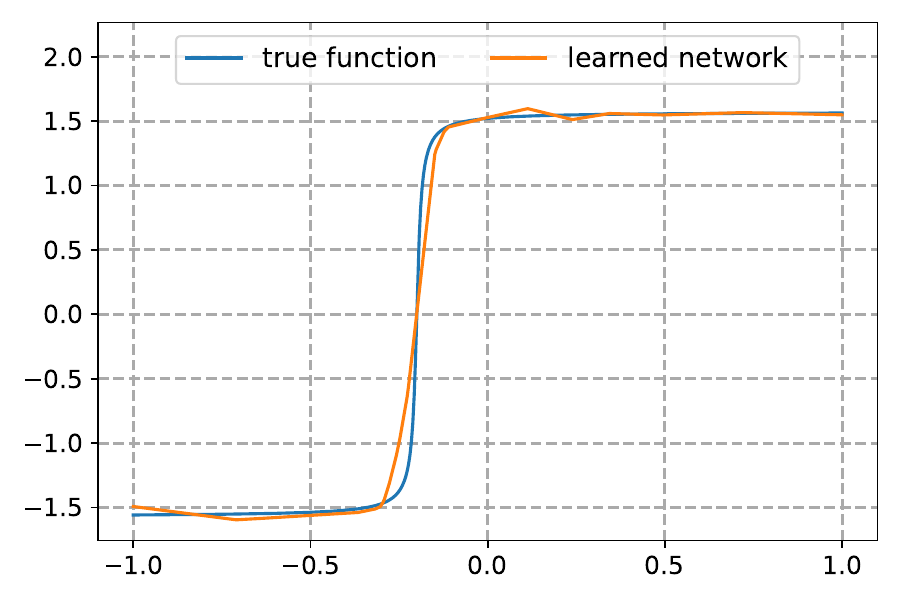}
        \subcaption{Epoch 200.}
    \end{subfigure}
    \hfill
    \begin{subfigure}[c]{0.32431\textwidth}
        \centering
        \includegraphics[width=0.998055\textwidth]{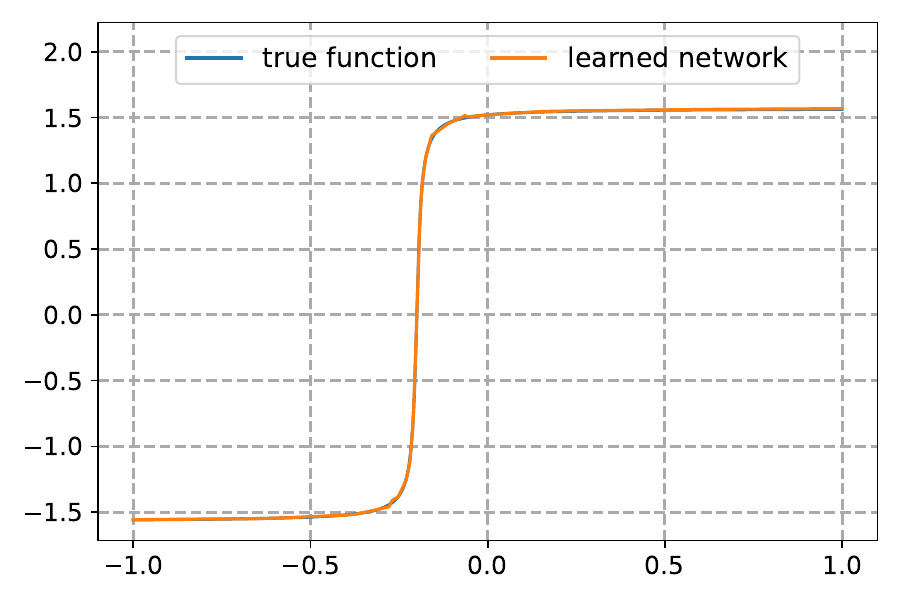}
        \subcaption{Epoch 2000.}
    \end{subfigure}
    \caption{Illustrations of the training process.}
    \label{fig:arctan:100x}
    \end{minipage}
    \hfill
    \begin{minipage}[b]{0.27\linewidth}
   \centering       \includegraphics[width=0.998055\textwidth]{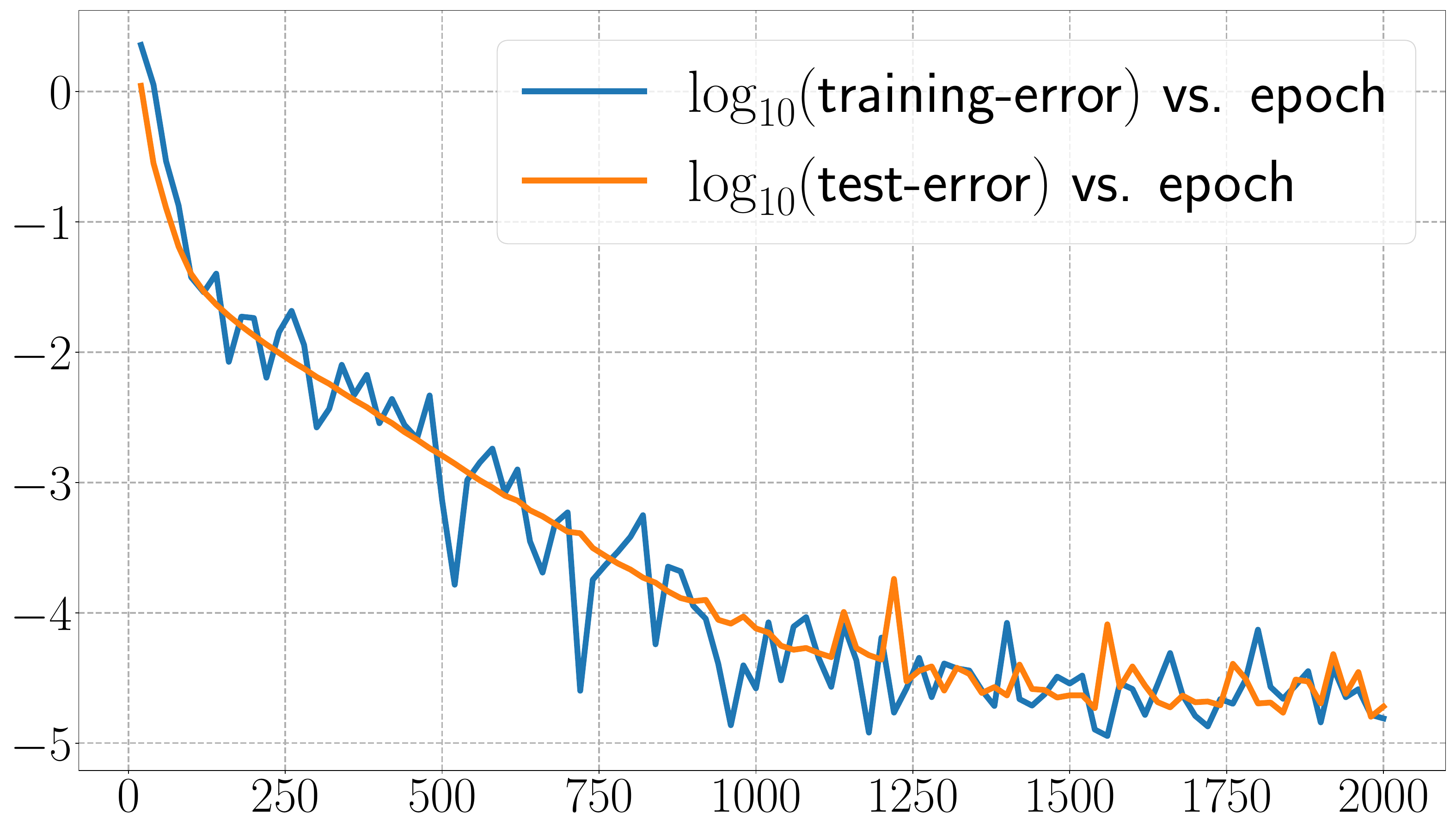}
    \caption{Training and test errors (MSE).}
    \label{fig:arctan:100x:error} 
    \end{minipage}
\end{figure}

Next, we consider a more complicated target function, $f(x) = \one_{\{|x+0.2| < 0.02\}} \cdot \sin(50\pi x)$, which represents a localized fast oscillation.
 For this example, we will conduct two tests. The first one is to show the flexibility of MMNN to automatically adapt to local features. The network has a small size as above $(16, 4, 3)$. Each layer has a network width of 16. In other words, each component is a linear combination of 16 \texttt{ReLU} functions which has no way to approximate such a target function well. However, with a multi-layer and multi-component decomposition with parameters appropriately trained by Adam, MMNN can adapt to the behavior of the target function as shown in Figure~\ref{fig:sin:50pi:one:period:sample1}. Figure~\ref{fig:sin:50pi:one:period:error} gives the error plot. Also, the test shows that this example is more difficult to train.  For this test, there are a total of 1000 uniformly sampled points in $[-1,1]$ with a mini-batch size of 100 and a learning rate of $0.002\times 0.95^{\lfloor k/1000 \rfloor}$, where $\lfloor \cdot \rfloor$ denotes floor operation and $k=1,2,\cdots,20000$ is the epoch number. 
It should be noted that in this test, we initialize the biases $\bmb$'s to $\bm{0}$ and use the PyTorch default initialization method for the weights $\bmW$. This approach, inspired by Xavier initialization, is chosen because the target function is locally oscillatory 
and the MMNN size is quite small, necessitating a setup adaptive to the target function to facilitate the training. For other experiments, both the biases and weights use the PyTorch default initialization.
We then compare with least square approximation using uniform finite element method (FEM) basis with the same degrees of freedom. As shown in Figure~\ref{fig:sin:50pi:FEM:vs:low:rank:NN}, MMNN renders a better approximation due to automatic adaptation through the training process. 
We remark that when training an MMNN with an extreme compact size with respect to the target function complexity, due to the lack of flexibility/redundancy, the training may become more subtle and need more careful calibration, such as initialization, learning rate, min-batch size, and etc. 
However, introducing slight redundancy into an MMNN, such as by increasing its size marginally, enhances its flexibility and makes training more tractable.
On the other hand, when the network becomes too large, then training a large number of parameters and over-redundancy will lead to potential difficulties for optimization. This also shows that there is a trade-off between representation and optimization one needs to balance in practice.


\begin{figure}
    \centering
    \begin{subfigure}[c]{0.231\textwidth}
        \centering
        \includegraphics[width=0.998055\textwidth]{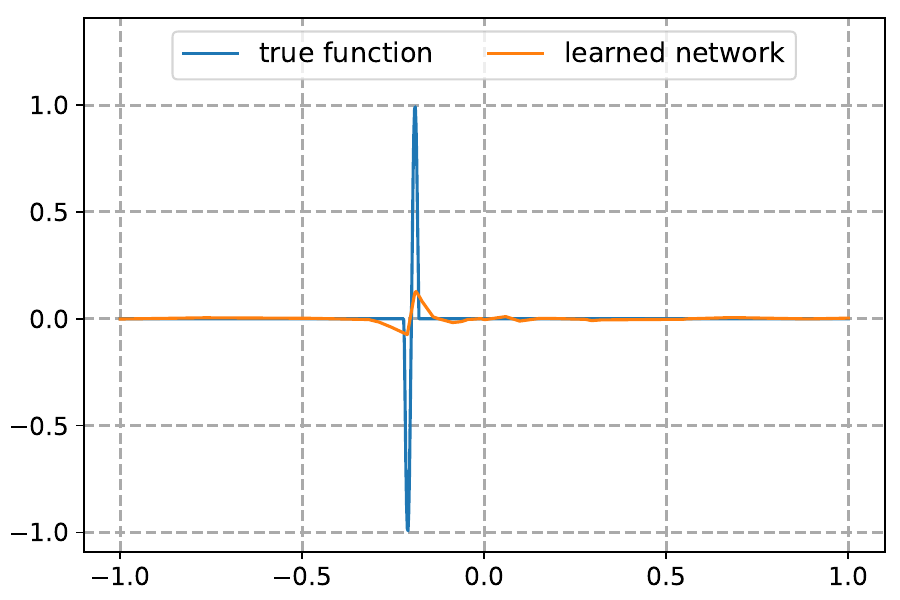}
        \subcaption{Epoch 500.}
    \end{subfigure}
    \hfill
    \begin{subfigure}[c]{0.231\textwidth}
        \centering
        \includegraphics[width=0.998055\textwidth]{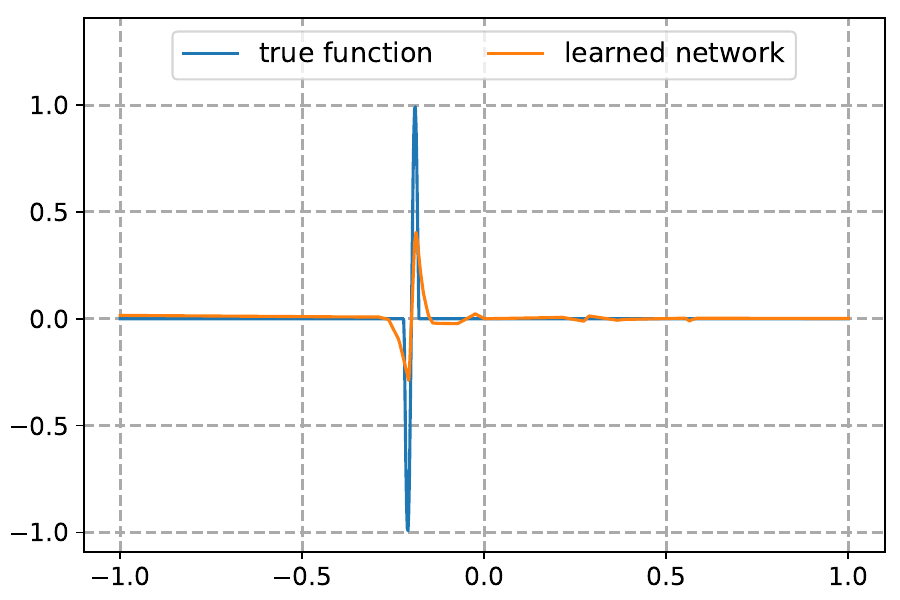}
        \subcaption{Epoch 1000.}
    \end{subfigure}
    \hfill
    \begin{subfigure}[c]{0.231\textwidth}
        \centering
        \includegraphics[width=0.998055\textwidth]{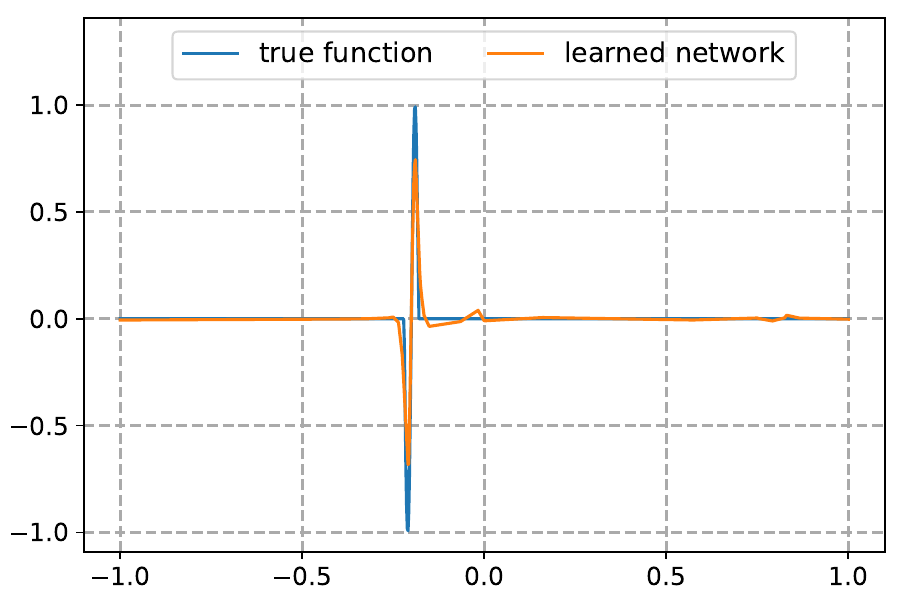}
        \subcaption{Epoch 2000.}
    \end{subfigure}
    \hfill
    \begin{subfigure}[c]{0.231\textwidth}
        \centering
        \includegraphics[width=0.998055\textwidth]{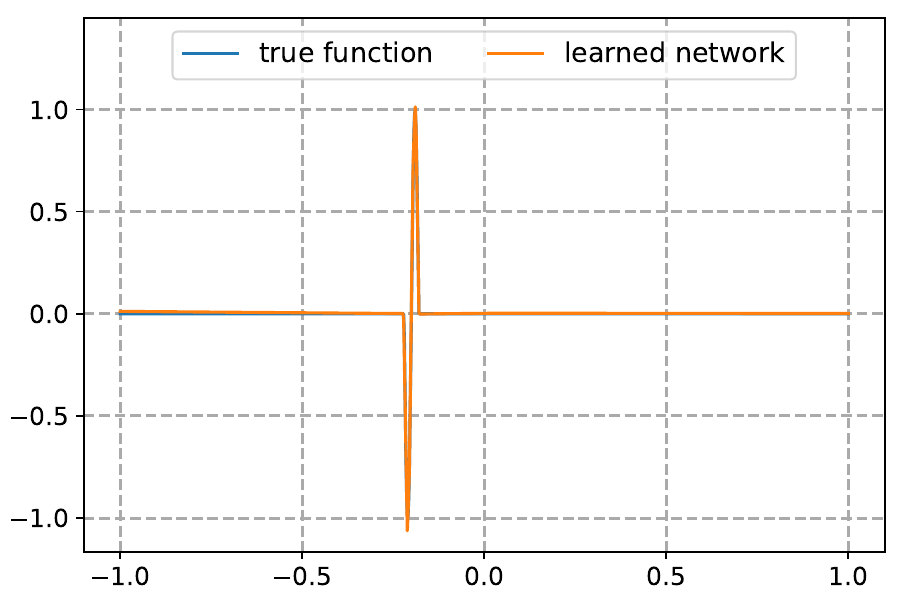}
        \subcaption{Epoch 20000.}
    \end{subfigure}
    \caption{Illustrations of the training process.}
    \label{fig:sin:50pi:one:period:sample1}
\end{figure}


\begin{figure}
    \begin{minipage}[b]{0.32\linewidth}
   \centering       \includegraphics[width=0.75\textwidth]{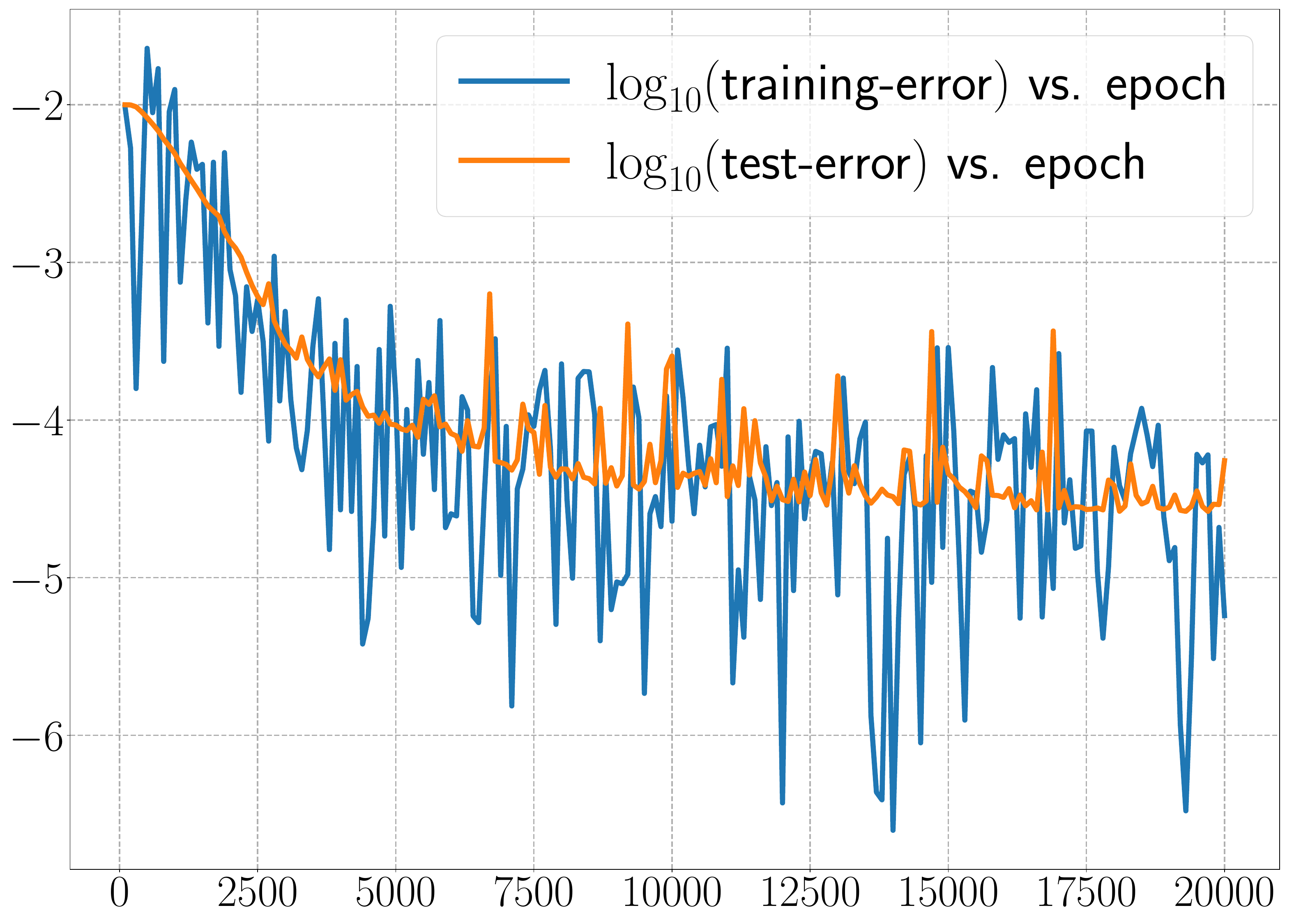}
    \caption{Training and test errors (in MSE) vs. epoch.}    \label{fig:sin:50pi:one:period:error} 
    \end{minipage}
    \hfill
\begin{minipage}[b]{0.64\linewidth}
    \centering	
            \begin{subfigure}[c]{0.458\textwidth}
    \centering            \includegraphics[width=0.8998055\textwidth]{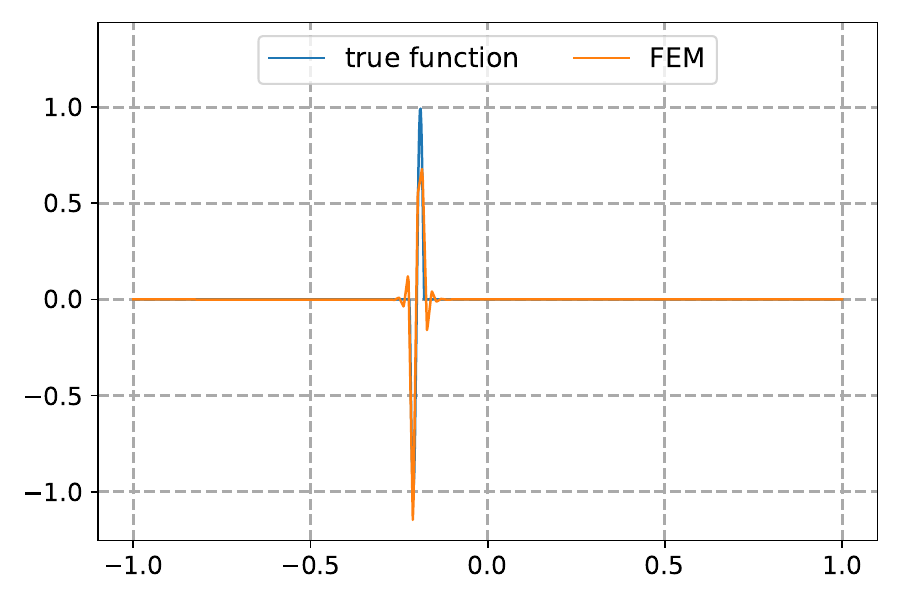}
    \end{subfigure}
    \hspace{5pt}
    \begin{subfigure}[c]{0.458\textwidth}
    \centering            \includegraphics[width=0.8998055\textwidth]{figures/Local1D/sin1P/epoch20000_idx0_w16r4d3.pdf}
    \end{subfigure}
\caption{Left: Least square using equally spaced 153 FEM bases. Right: MMNN with $153$ free parameters.} 	\label{fig:sin:50pi:FEM:vs:low:rank:NN}
    \end{minipage}
\end{figure}

Finally, we show an example in the two-dimensional  case shown in~Figure~\ref{fig:f3D3:local:HF} and defined in polar coordinates by
\begin{equation*}
    f(r,\theta)=\begin{cases}
        0 & \tn{if  } 0.5+25\rho-25r\le 0,\\
        1 & \tn{if  } 0.5+25\rho-25r\ge 1,\\
        0.5+25\rho-25r & \tn{otherwise},\\
    \end{cases}
    \quad \tn{where}\ \rho=0.1+0.02\cos(8\pi  \theta).
\end{equation*}
Again a rather compact MMNN of size $(100,10,6)$ can produce a good approximation. 
Figure~\ref{fig:2D-3} shows the $\log$ plot of training and testing errors in MSE. For this test there are a total of $400^2$ uniformly sampled points in $[-1,1]^2$ with mini-batch size of $1000$ and a learning rate of $10^{-3}\times 0.9^{\lfloor k/25 \rfloor}$, where $k=1,2,\cdots,1000$ is the epoch number. We compare the result with piecewise linear interpolation and least square approximation using FEM basis on a uniform grid with the same number of degrees of freedom in Figure~\ref{fig:comparison}. As observed before, MMNN renders the best result due to its adaptivity through training.
    When adaptive finite element is applicable, it is hard to beat. However, adaptive FEMs are more humanly involved, while MMNNs based on training are more automatic. The key message here is to demonstrate that MMNN has adaptive features through training/optimization, which will be useful when an adaptive finite element method becomes difficult or impossible in applications. 

\begin{figure}
\centering
\begin{minipage}[b]{0.5427161\linewidth}
   \centering	
    \begin{subfigure}[c]{0.48538\textwidth}
    \centering            \includegraphics[height=0.75\textwidth]{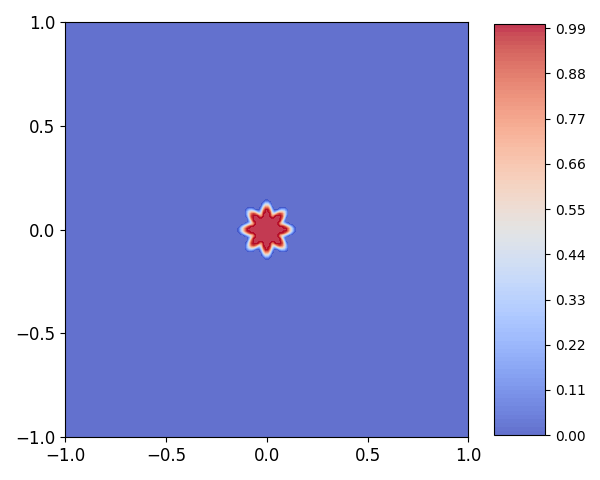}
    \end{subfigure}
    \hfill
    \begin{subfigure}[c]{0.48538\textwidth}
    \centering            \includegraphics[height=0.75\textwidth]{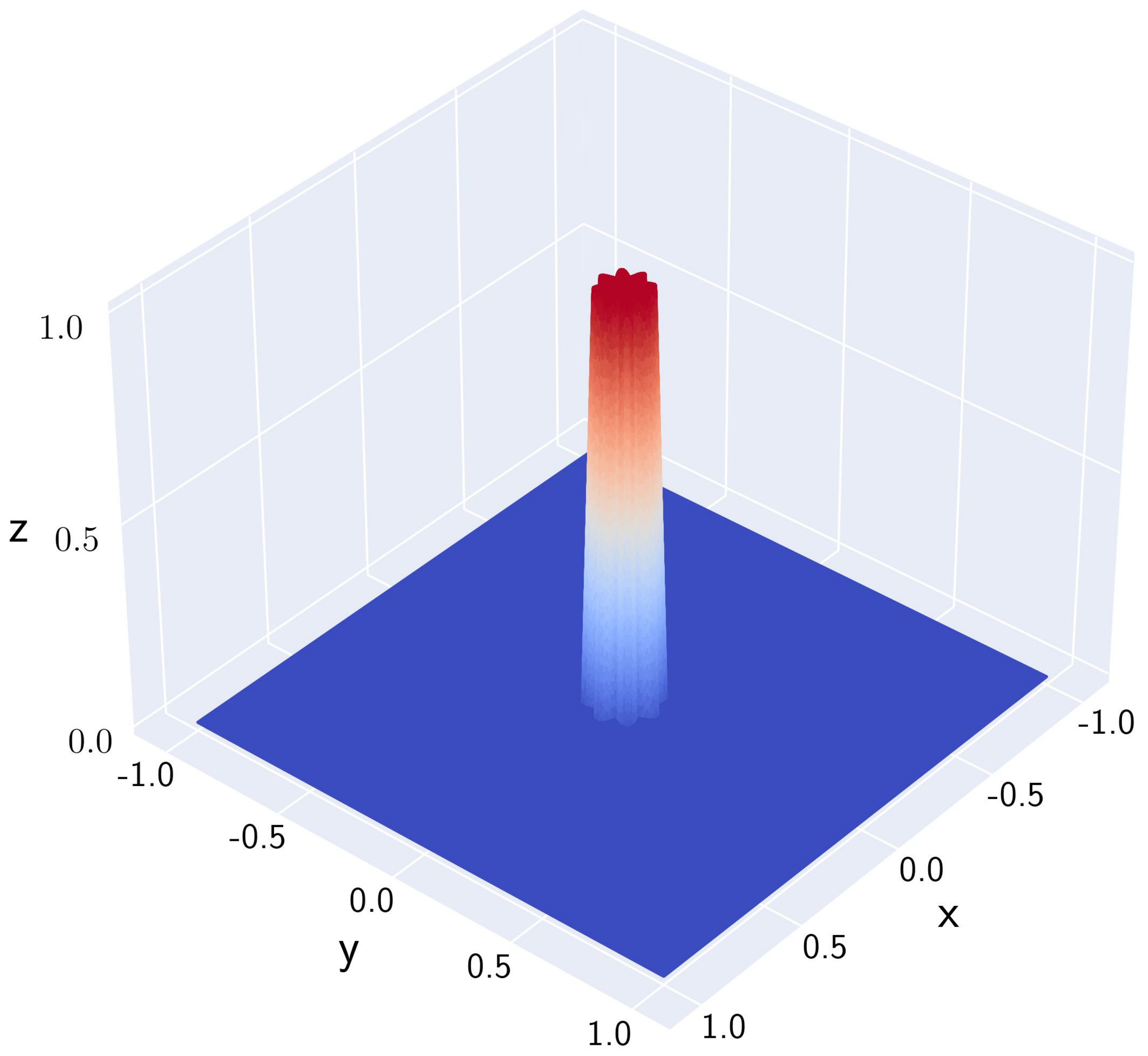}
    \end{subfigure}
    \caption{Target function.}
    	\label{fig:f3D3:local:HF}
\end{minipage}
\hspace{0.1\linewidth}
\begin{minipage}[b]{0.2963\linewidth}
    \centering	    \includegraphics[width=0.95\textwidth]{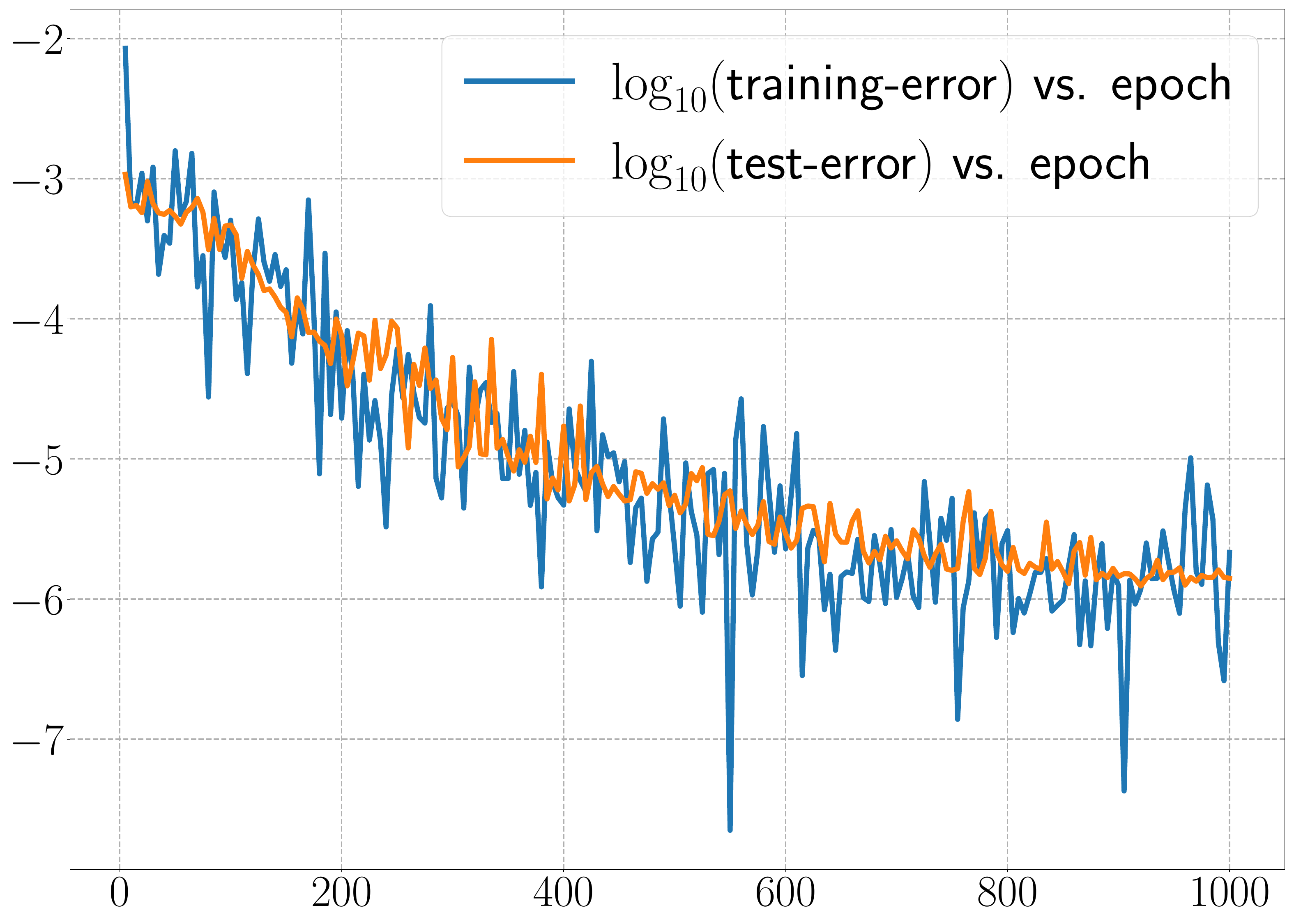}
    \caption{Errors (in MSE).}
    	\label{fig:2D-3}
\end{minipage}
\end{figure}

\begin{figure}
    \centering	
            \begin{subfigure}[c]{0.312\textwidth}
    \centering            \includegraphics[width=0.75859598055\textwidth]{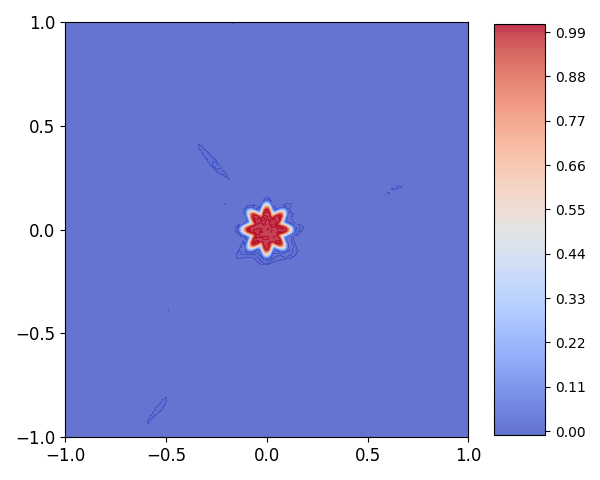}
    \subcaption{Network.}
    \end{subfigure}
    \hfill
    \begin{subfigure}[c]{0.312\textwidth}
    \centering            \includegraphics[width=0.75859598055\textwidth]{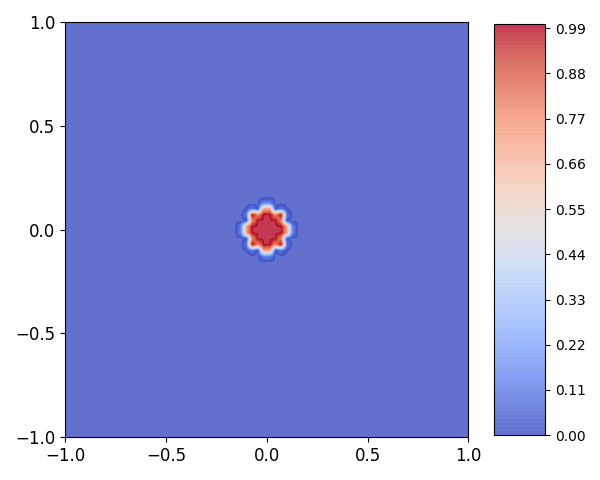}
    \subcaption{Interpolation.}
    \end{subfigure}
    \hfill 
 \begin{subfigure}[c]{0.312\textwidth}
    \centering            \includegraphics[width=0.75859598055\textwidth]{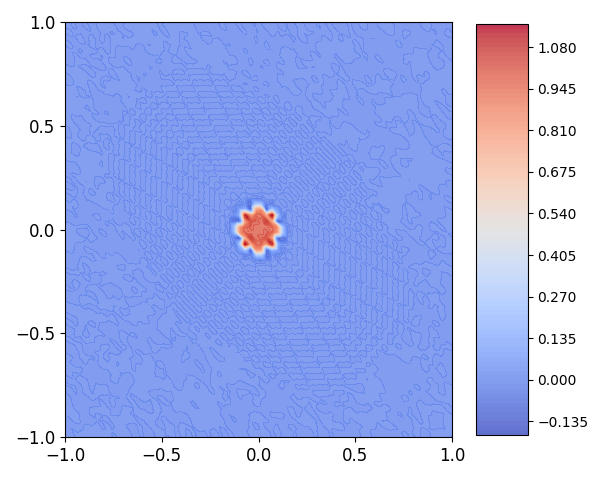}
    \subcaption{FEM.}
    \end{subfigure}\hfill
    \begin{subfigure}[c]{0.32\textwidth}
    \centering            \includegraphics[width=0.75859598055\textwidth]{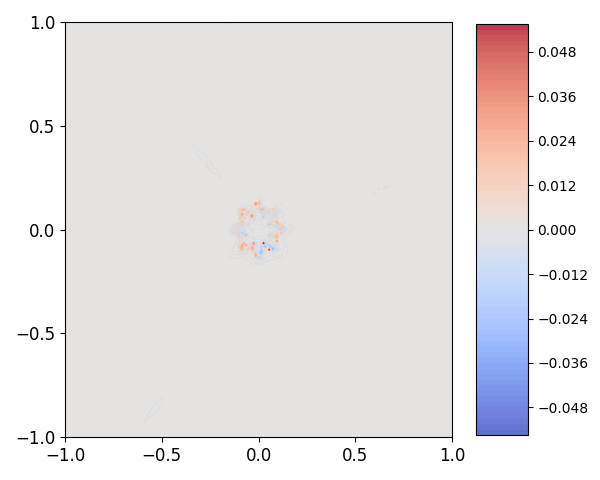}
    \subcaption{Network (error).}
    \end{subfigure}                
    \hfill
    \begin{subfigure}[c]{0.32\textwidth}
    \centering            \includegraphics[width=0.75859598055\textwidth]{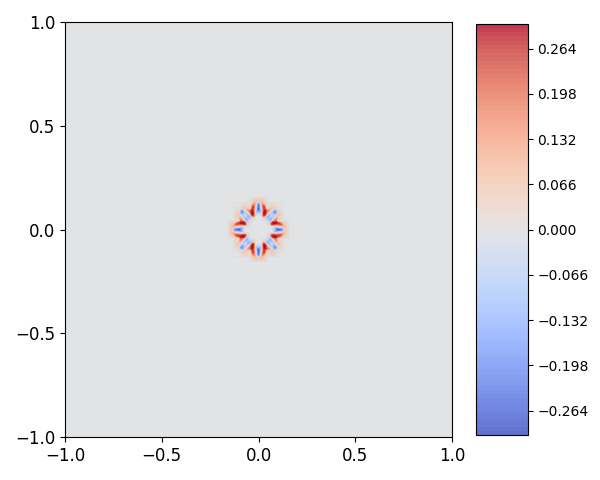}
    \subcaption{Interpolation (error).}
    
    \end{subfigure}     
    \hfill
    \begin{subfigure}[c]{0.32\textwidth}
    \centering            \includegraphics[width=0.75859598055\textwidth]{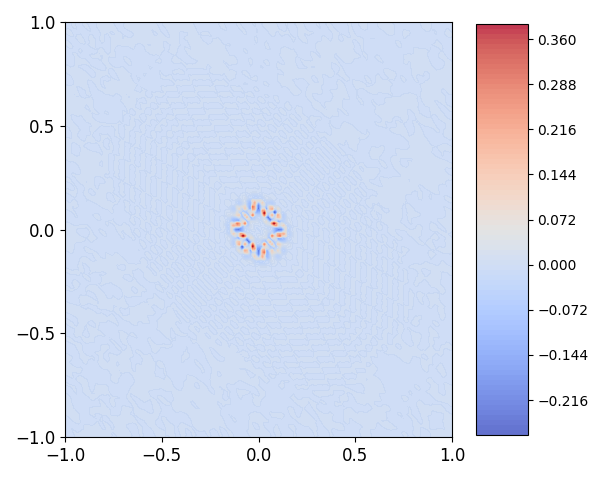}
    \subcaption{FEM (error).}    
    \end{subfigure}
    \caption{Comparison among different approximations using MMNN, interpolation, and least square FEM. The interpolation and FEM are all based on a $72\times 72=5184$ uniform grid. MMNN has $(100+1) \times 10 \times (6-1) + (100+1) = 5151$ free parameters. The maximum error is approximately $0.05$ for MMNN, $0.31$ for interpolation, and $0.38$ for FEM. The corresponding MSE errors are $0.85 \times 10^{-6}$, $1.95 \times 10^{-4}$, and $1.45 \times 10^{-4}$, respectively.}
\label{fig:comparison}
\end{figure}

\subsection{Highly oscillatory functions}\label{sec:oscillatory}

Globally oscillatory functions with significant high-frequency components can not be approximated well by a shallow network when a global bounded activation function of the form $\sigma(\bmW\cdot \bmx-\bmb)$, such as \texttt{ReLU}, is used. Due to almost orthogonality or high decorrelation (in terms of the inner product) between $\sigma(\bmW\cdot \bmx-\bmb)$ and oscillatory functions with high likelihood (in terms of a random choice of $(\bmW, \bmb)$), the set of parameters that can render a good approximation, namely the \emph{Rashomon set}~\cite{semenova2022existence}, becomes smaller and smaller (in terms of relative measure) and hence harder and harder to find as the target function becomes more and more oscillatory (see~\cite{ZZZZ-23}). Although this difficulty can be alleviated by complexity decomposition using MMNN as shown in Section~\ref{sec:MMNN}, it still requires a larger network in terms of width, rank, and layers and more training. Here we limit our tests to oscillatory functions in one-dimensional and two-dimensional cases due to the dramatic increase of complexity with dimensions, or the curse of dimensions, in general. 

We again start with a one-dimensional example, $f(x)=\sin(50\pi x), x\in[-1,1]$. An MMNN of size $(800,40,15)$ produces a good approximation of this highly oscillatory function, as illustrated by the error plot in Figure~\ref{fig:50pi-error}, with a smaller learning rate and a longer training process compared to previous examples with localized fine features. Due to the significant depth, we consider using ResMMNN as discussed in Section~\ref{sec:MMNN:architecture}. For this test, a total of $1000$ uniformly sampled points in $[-1,1]$ are used with a mini-batch size of $100$ and a learning rate of $10^{-4}\times 0.9^{\lfloor k/800 \rfloor}$, where $k=1,2,\cdots,40000$ is the epoch number. Also, an interesting learning dynamics for Adam is observed from Figure~\ref{fig:sin:50pi:NN}. In the beginning, nothing seems to happen until about epoch 3600 when learning starts from the boundary. Then more and more features are captured from the boundary to the inside gradually. Eventually, all features are captured and then fine-tuned together to improve the overall approximation.

\begin{figure}
\begin{minipage}[b]{0.59\linewidth}
    \centering	
    \begin{subfigure}[c]{0.32\textwidth}
        \centering       \includegraphics[width=0.99\textwidth]{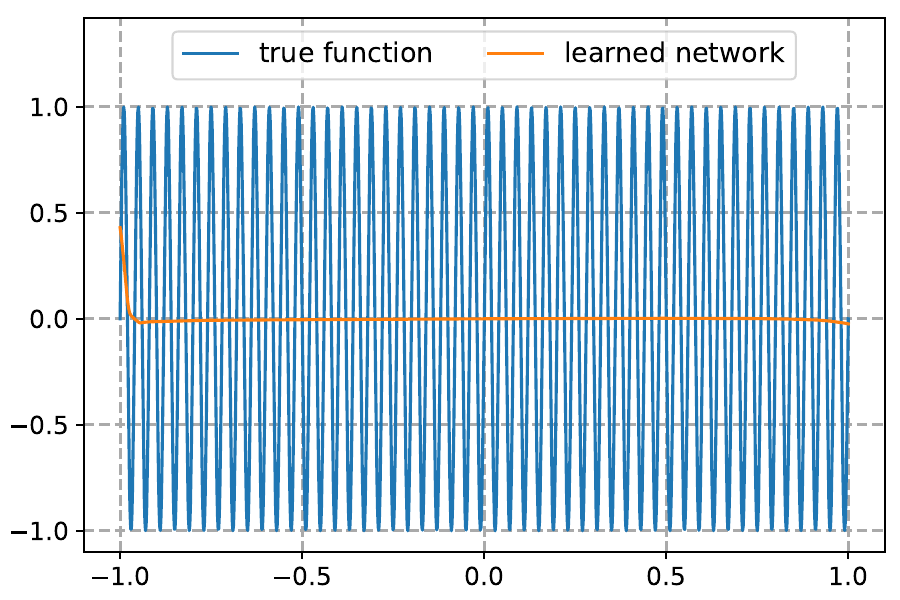}
        \subcaption{Epoch 3600.}
    \end{subfigure}
    \hfill
    \begin{subfigure}[c]{0.32\textwidth}
        \centering
        \includegraphics[width=0.99\textwidth]{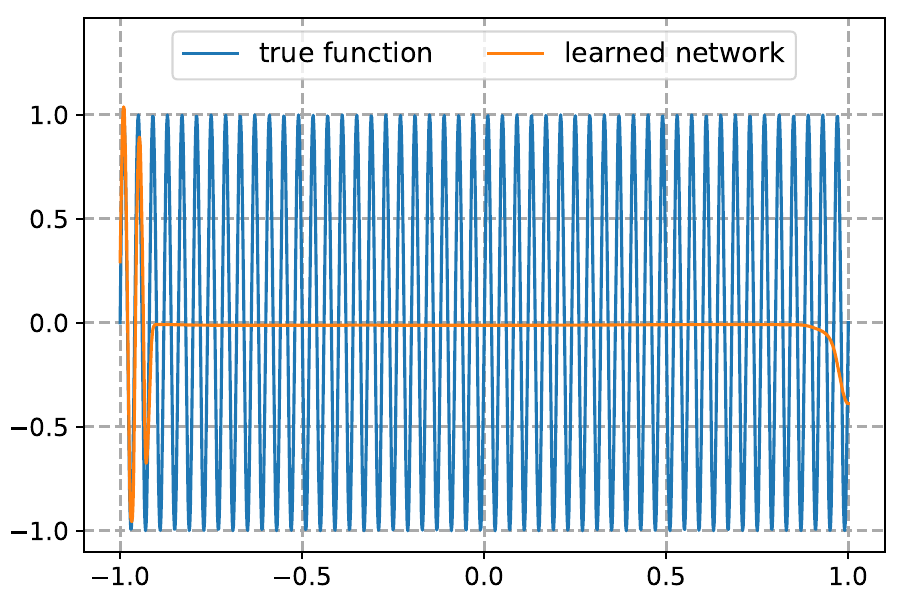}
        \subcaption{Epoch 3800.}
    \end{subfigure}
    \hfill
    \begin{subfigure}[c]{0.32\textwidth}
        \centering
        \includegraphics[width=0.99\textwidth]{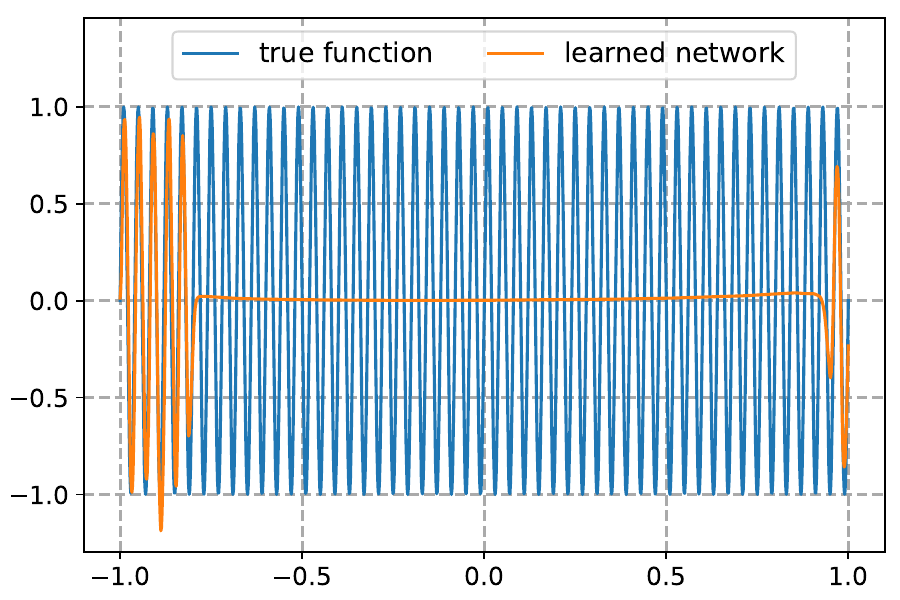}
        \subcaption{Epoch 4200.}
    \end{subfigure}
\\  
    \begin{subfigure}[c]{0.32\textwidth}
        \centering
        \includegraphics[width=0.99\textwidth]{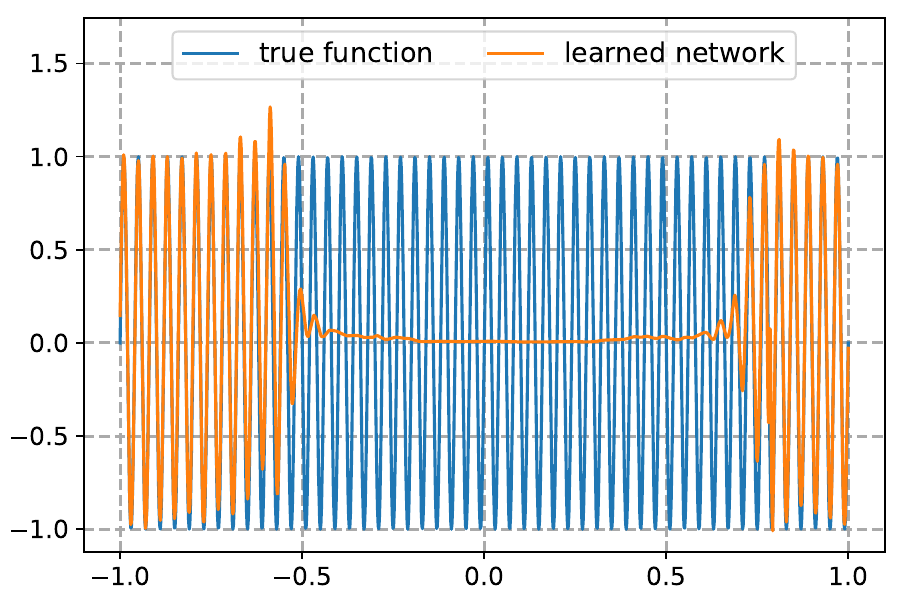}
        \subcaption{Epoch 5000.}
    \end{subfigure}
    \hfill
    \begin{subfigure}[c]{0.32\textwidth}
        \centering
        \includegraphics[width=0.99\textwidth]{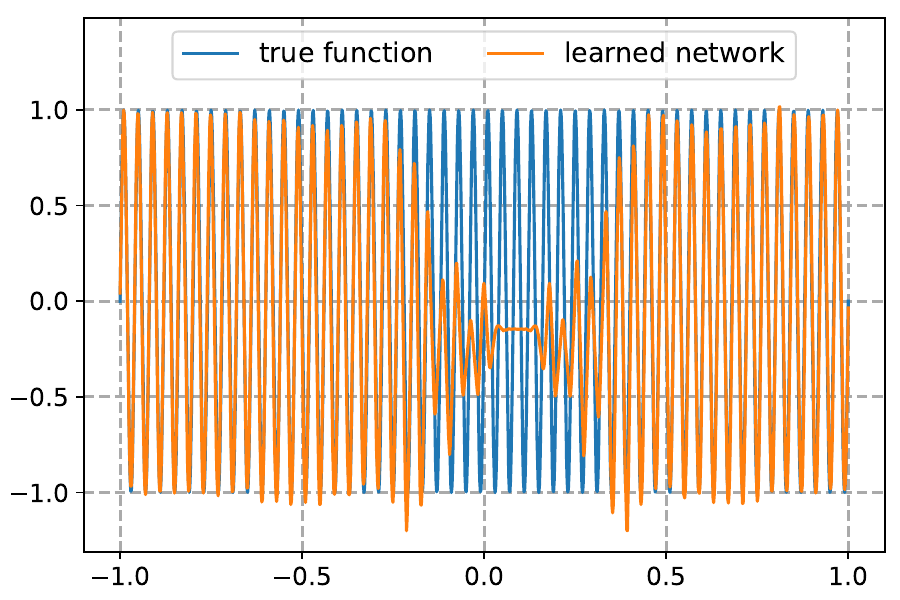}
        \subcaption{Epoch 10000.}
    \end{subfigure}
    \hfill
    \begin{subfigure}[c]{0.32\textwidth}
        \centering
        \includegraphics[width=0.99\textwidth]{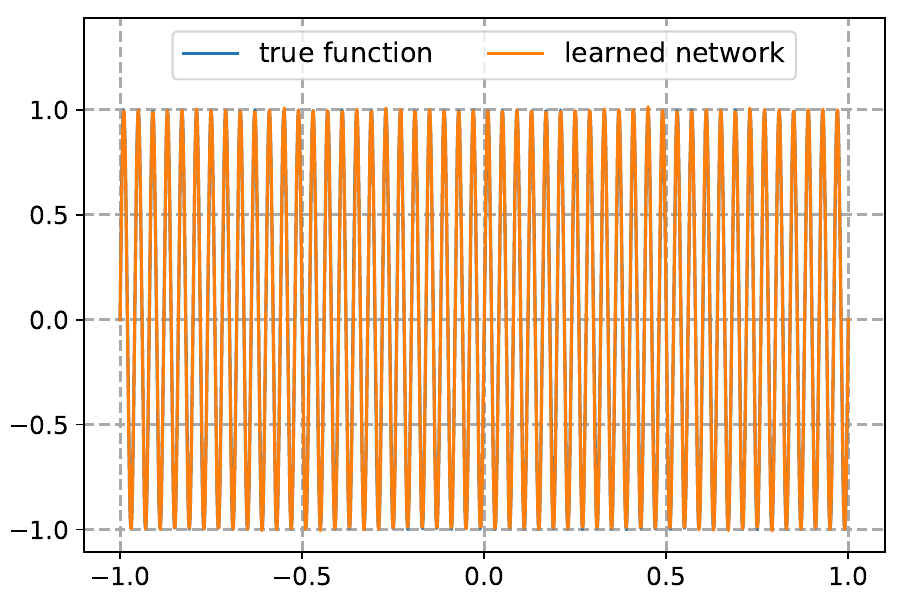}
        \subcaption{Epoch 40000.}
    \end{subfigure}
    \caption{Illustrations of the training process.}
    	\label{fig:sin:50pi:NN}
    \end{minipage}
    \hfill
    \begin{minipage}[b]{0.36\linewidth}
    \centering	
    \includegraphics[width=0.999\textwidth]{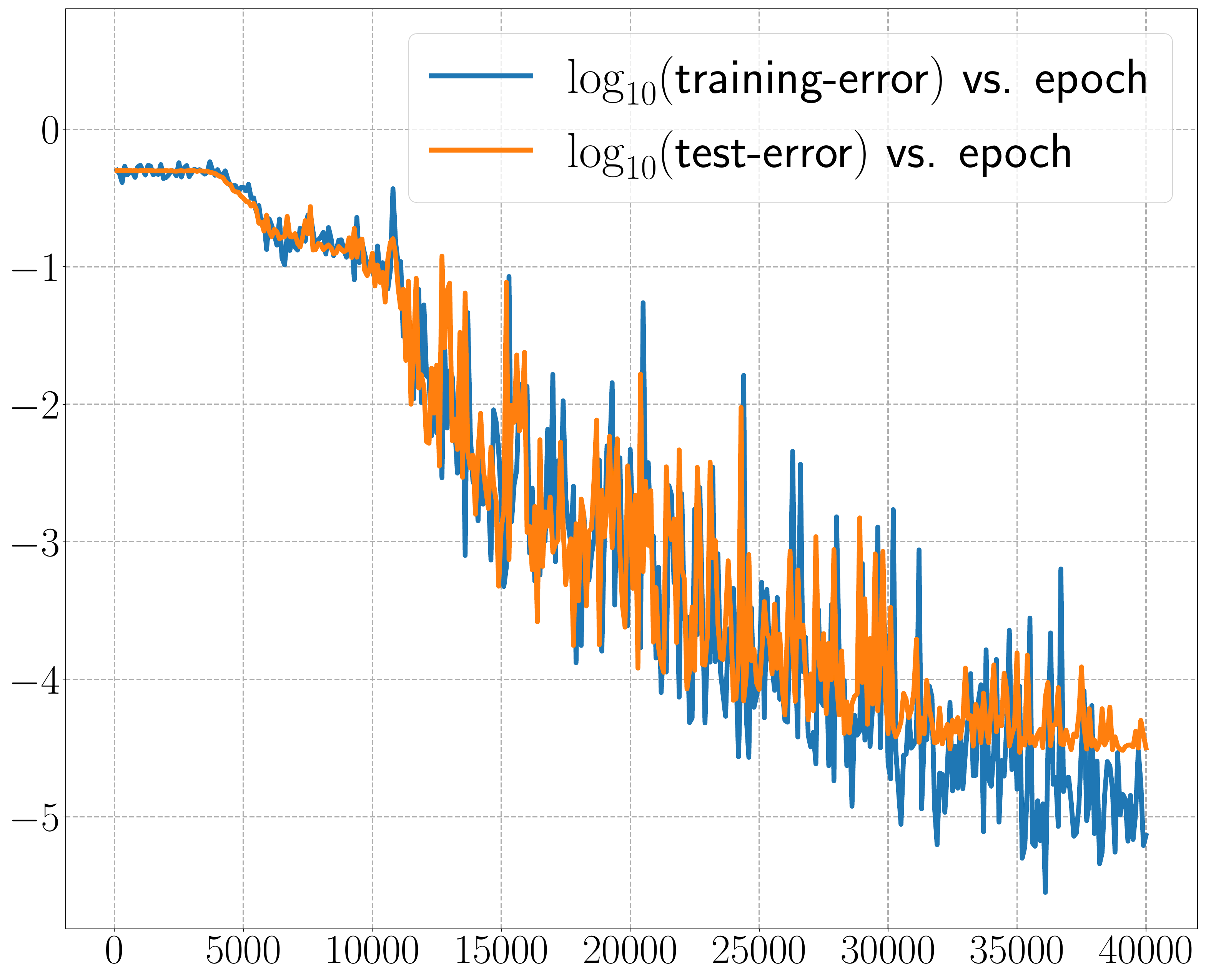}
    \caption{Training and test errors (in MSE) vs. epoch.}
    	\label{fig:50pi-error}
    \end{minipage}
\end{figure}



Next, we consider a two-dimensional target function of the following form:
\begin{equation*}
	f_{s}(x_1,x_2)= \sum_{i=1}^2\sum_{j=1}^2 a_{ij} \sin(s b_i x_i+s c_{ij} x_i x_j) \cos (s b_j x_j + s d_{ij} x_i^2),
\end{equation*}
 where
\begin{equation*}
	(a_{i,j})=	\begin{bmatrix*}
		0.3 & 0.2 \\ 0.2 & 0.3
	\end{bmatrix*}, \qquad 
	(b_{i})=	\begin{bmatrix*}
		2\pi \\ 4\pi
	\end{bmatrix*},
	\qquad
	(c_{i,j})=	\begin{bmatrix*}
		2\pi & 4\pi \\ 8\pi & 4\pi
	\end{bmatrix*},  \quad 
	(d_{i,j})=	\begin{bmatrix*}
		4\pi & 6\pi \\ 8\pi & 6\pi
	\end{bmatrix*}.
\end{equation*}
In our test, we choose $s=3$ to ensure the function exhibits significant oscillations and contains diverse Fourier modes as illustrated by Figure~\ref{fig:2D-oscillation}. Given the complexity of the function, we employ an MMNN with size $(600, 30, 15)$. Again, ResMMNN is used due to the depth. For this test, a total of $400^2$ data are sampled on a uniform grid in $[-1,1]^2$ with a mini-batch size of $1000$ and a learning rate of $10^{-3}\times 0.9^{\lfloor k/40 \rfloor}$, where $k=1,2,\cdots,2000$ is the epoch number. The training process is illustrated by Figure~\ref{fig:2D-oscillation-difference:and:NN}. Figure~\ref{fig:2D-oscillation-error} shows $\log$-error plot.

\begin{figure}
\begin{minipage}[b]{0.63597166\linewidth}
    \centering	
    \begin{subfigure}[c]{0.48736\textwidth}
    \centering            \includegraphics[height=0.7850\textwidth]{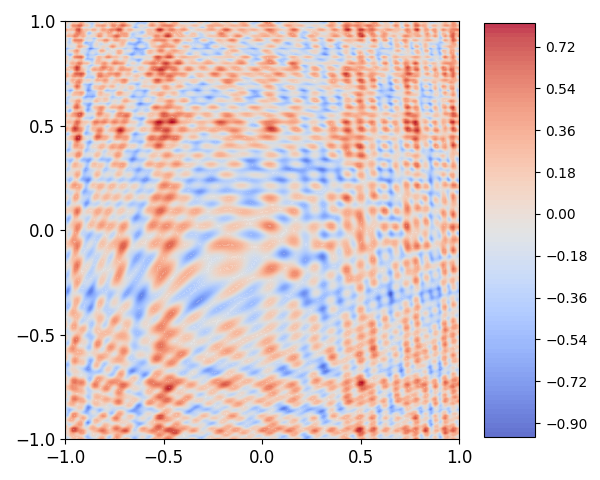}
    \end{subfigure}
    \hfill
    \begin{subfigure}[c]{0.48736\textwidth}
    \centering            \includegraphics[height=0.7850\textwidth]{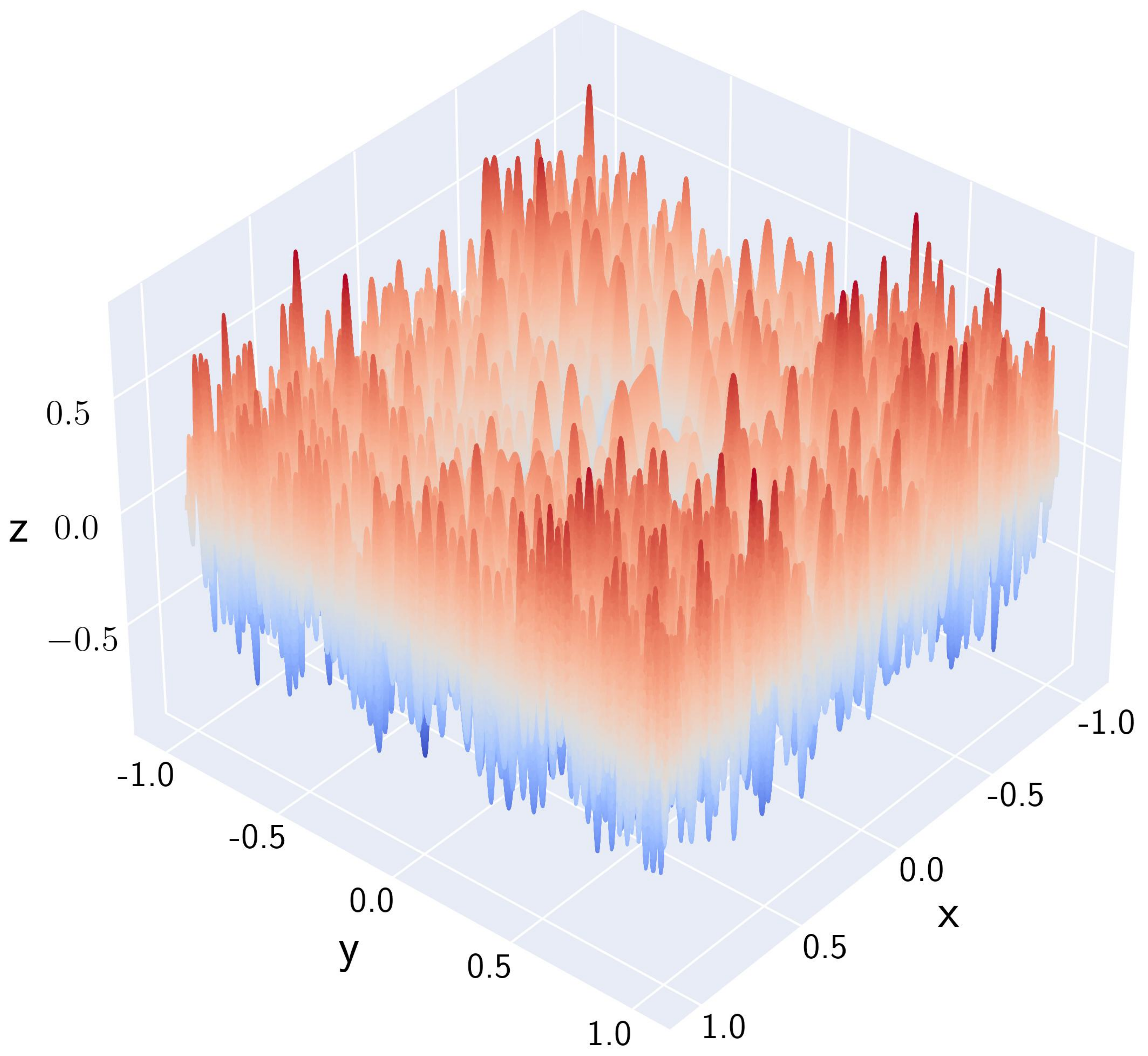}
    \end{subfigure}
    \caption{Illustrations of the target function.}
    	\label{fig:2D-oscillation}
\end{minipage}
\hfill
\begin{minipage}[b]{0.318\linewidth}
    \centering	
    \includegraphics[width=0.9985\textwidth]{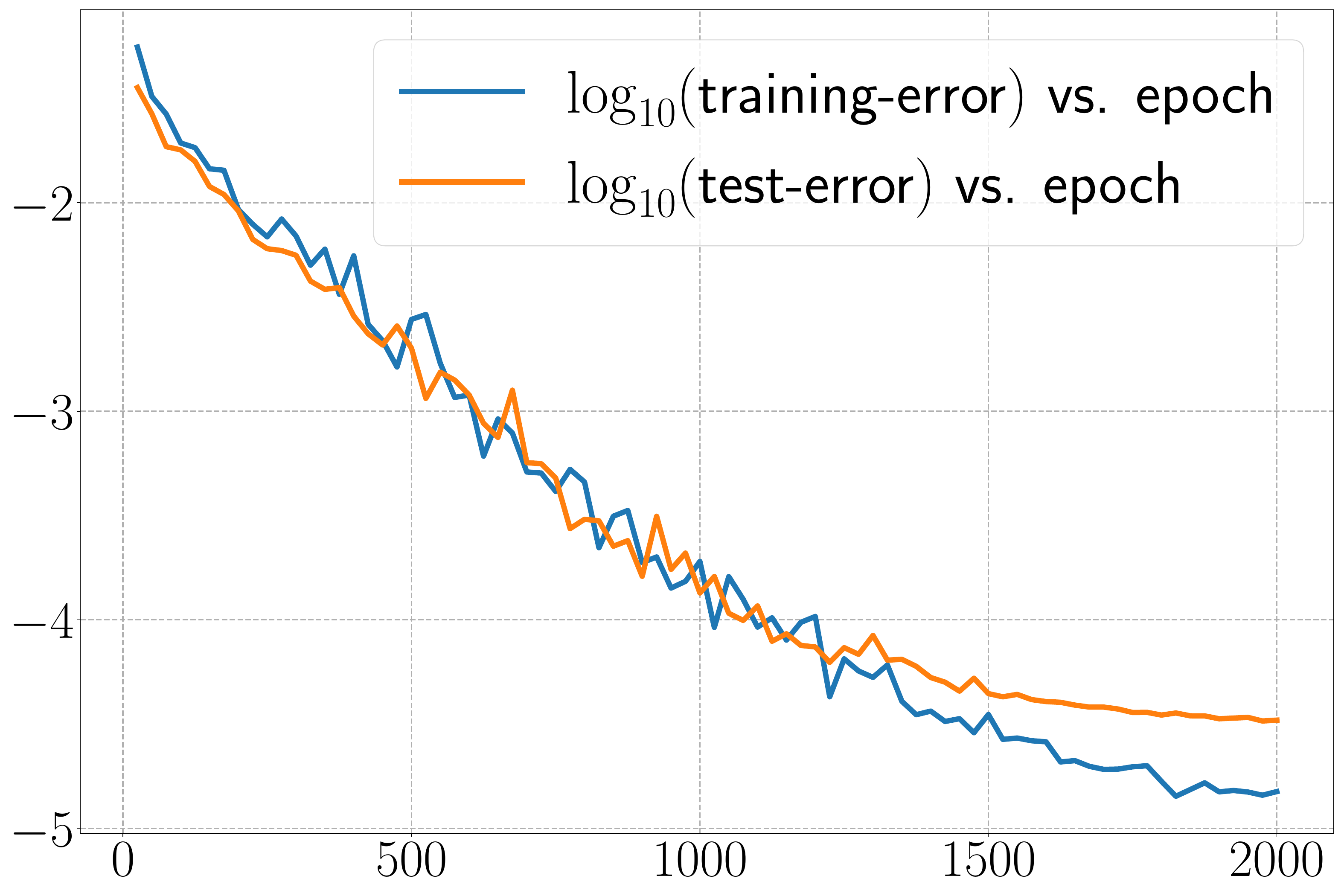}
    \caption{Training and test errors  (in MSE) vs. epoch.}
    	\label{fig:2D-oscillation-error}
\end{minipage}
\end{figure}



\begin{figure}
    \centering
    \begin{subfigure}[c]{0.234\textwidth}
        \centering
        \includegraphics[width=0.998055\textwidth]{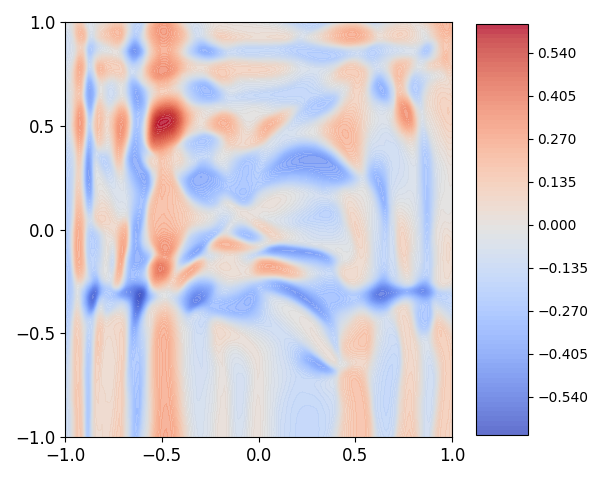}
        \subcaption{Epoch 25.}
    \end{subfigure}
    \hfill
    \begin{subfigure}[c]{0.234\textwidth}
        \centering
        \includegraphics[width=0.998055\textwidth]{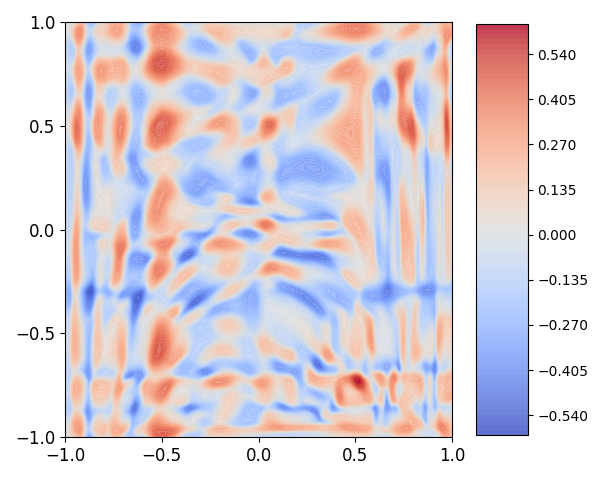}
        \subcaption{Epoch 50.}
    \end{subfigure}
    \hfill
    \begin{subfigure}[c]{0.234\textwidth}
        \centering
        \includegraphics[width=0.998055\textwidth]{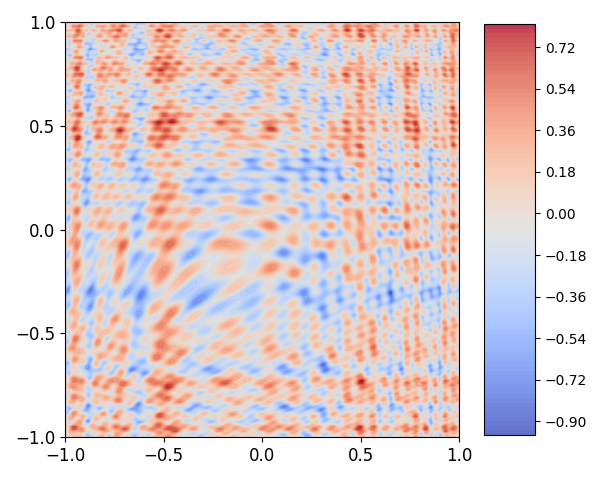}
        \subcaption{Epoch 1000.}
    \end{subfigure}
    \hfill
    \begin{subfigure}[c]{0.234\textwidth}
        \centering
        \includegraphics[width=0.998055\textwidth]{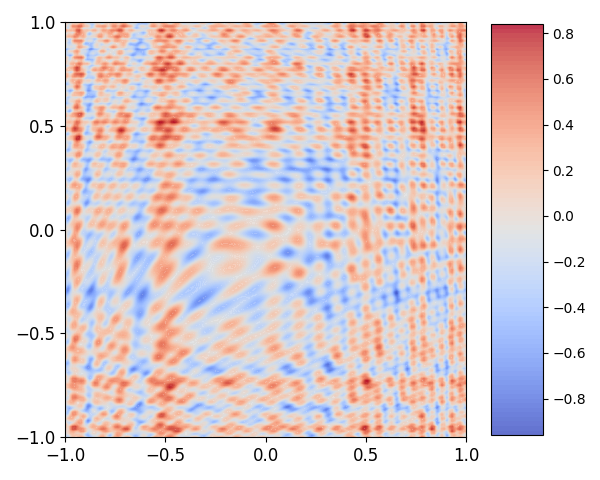}
        \subcaption{Epoch 2000.}
    \end{subfigure}
\\
    \begin{subfigure}[c]{0.234\textwidth}
        \centering
        \includegraphics[width=0.998055\textwidth]{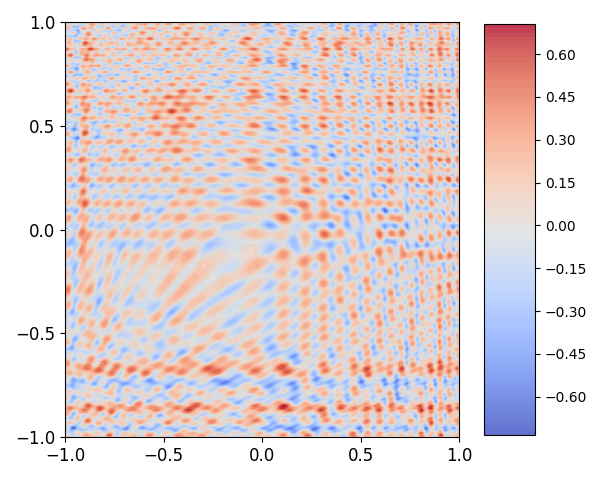}
        \subcaption{Epoch 25.}
    \end{subfigure}
    \hfill
    \begin{subfigure}[c]{0.234\textwidth}
        \centering
        \includegraphics[width=0.998055\textwidth]{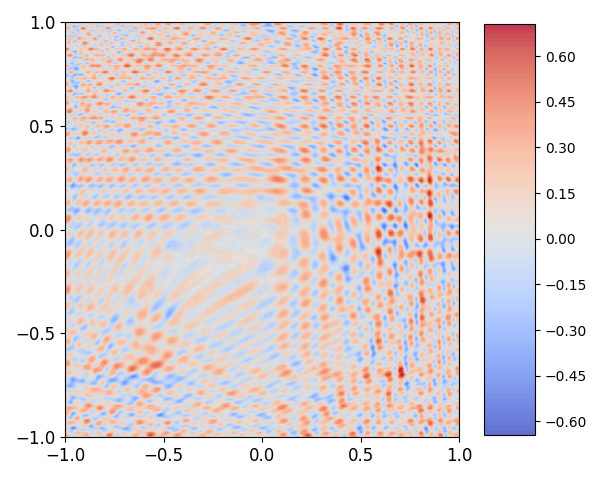}
        \subcaption{Epoch 50.}
    \end{subfigure}
    \hfill
    \begin{subfigure}[c]{0.234\textwidth}
        \centering
        \includegraphics[width=0.998055\textwidth]{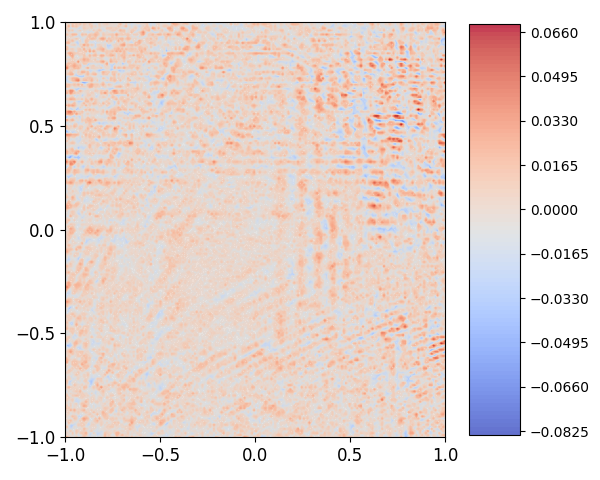}
        \subcaption{Epoch 1000.}
    \end{subfigure}
    \hfill
    \begin{subfigure}[c]{0.234\textwidth}
        \centering
        \includegraphics[width=0.998055\textwidth]{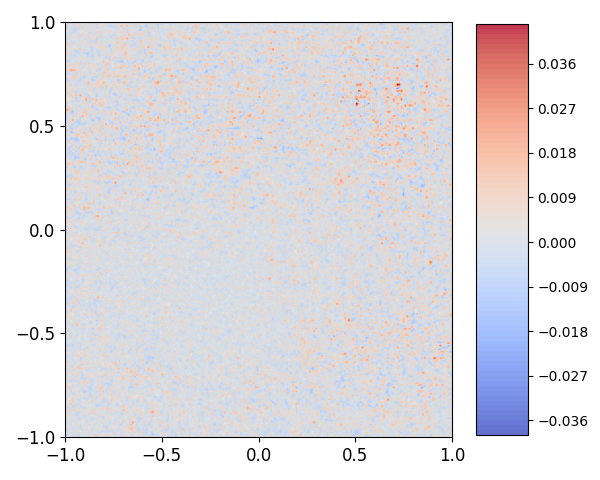}
        \subcaption{Epoch 2000.}
    \end{subfigure}
    
    \caption{The top row: the learned neural network;  the bottom row: error.
    }
    \label{fig:2D-oscillation-difference:and:NN}
\end{figure}


We trained the same function using identical network settings, except we limited the domain of interest to a unit disc. We sampled $452^2$ data points uniformly distributed over the $[-1,1]^2$ area, then filtered to retain only those points that fall within the unit disk, totaling approximately $159692$ $(\approx 400^2)$ samples. As illustrated in Figure~\ref{fig:2D-oscillation-unitBall}, our network successfully learned the target function in the disc with no adjustments or modifications. This test highlights the network's flexibility for domain geometry, an advantage over traditional mesh or grid-based methods, especially in higher dimensions.

 \begin{figure}
    \centering	
    \begin{subfigure}[c]{0.31246\textwidth}
    \centering            \includegraphics[height=0.6790\textwidth]{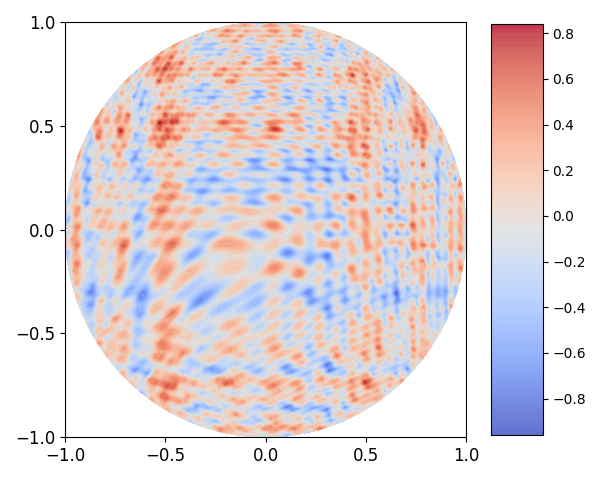}
    \subcaption{Approximation.}
    \end{subfigure}
    \hfill
    \begin{subfigure}[c]{0.31246\textwidth}
    \centering            \includegraphics[height=0.6790\textwidth]{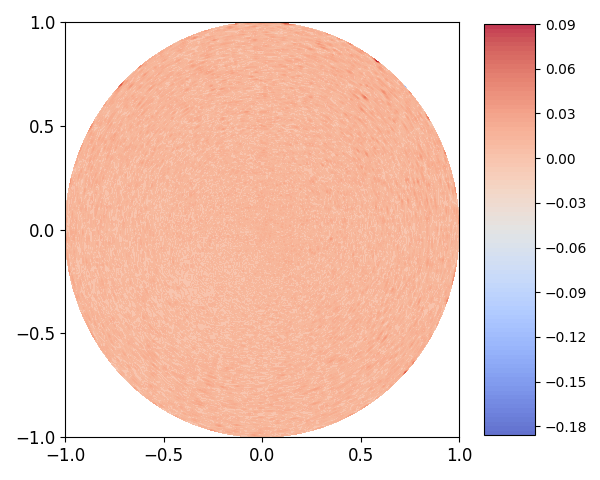}
    \subcaption{Error.}
    \end{subfigure}
    \hfill
    \begin{subfigure}[c]{0.31246\textwidth}
    \centering            \includegraphics[height=0.6675\textwidth]{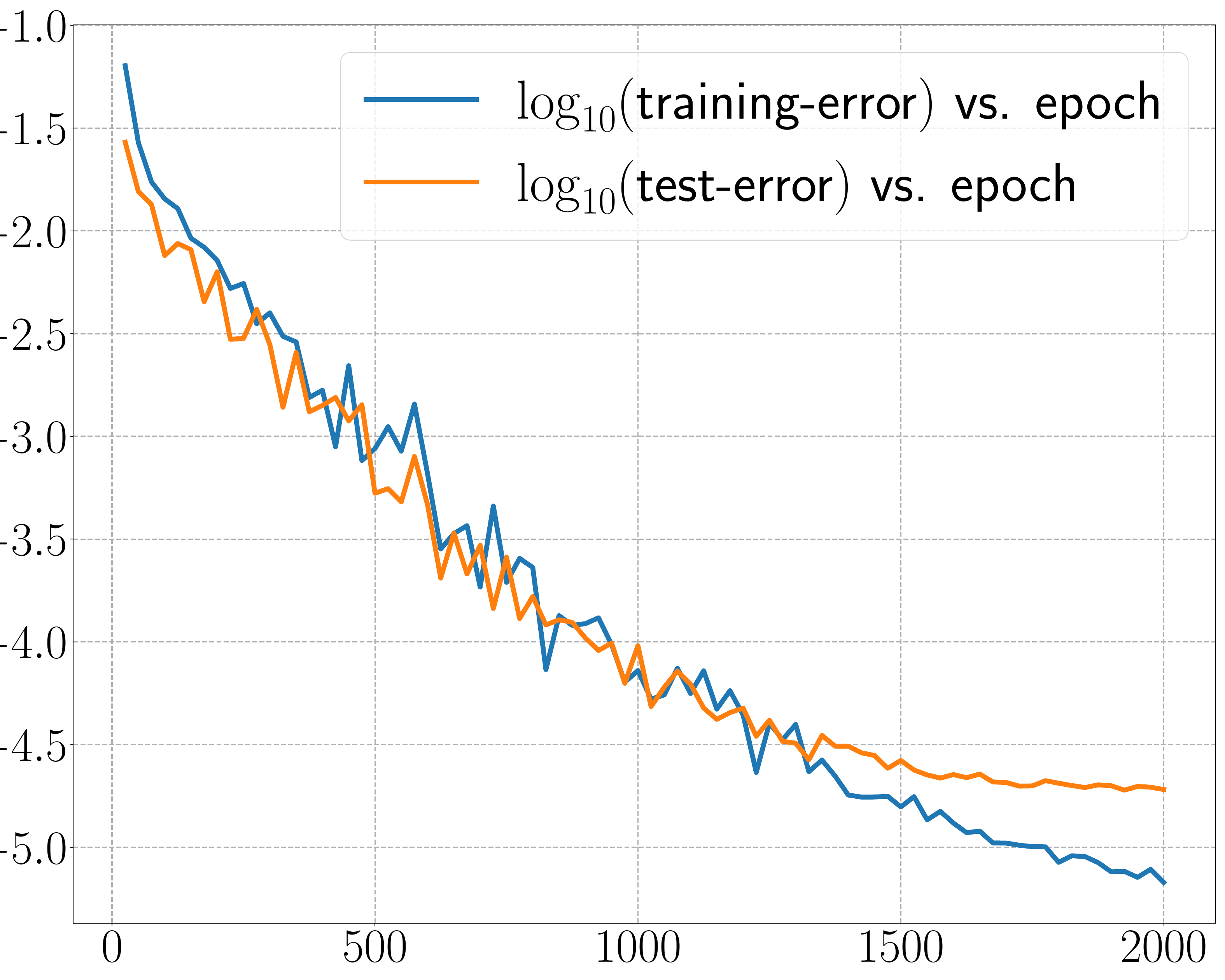}
    \subcaption{Errors (in MSE).
    }
    \end{subfigure}
\caption{MMNN approximation in a unit disk.}
    	\label{fig:2D-oscillation-unitBall}
\end{figure}

\subsection{Discontinuous functions with porous structures}
\label{sec:discontinuous:func:porous}

Earlier, we tested highly oscillatory functions that are still continuous. To be more inclusive in our experiments, we now consider piecewise smooth functions with porous structures and complicated interfaces. Two target functions are shown in Figure~\ref{fig:discont:funcs}(a,d). The first is a constant function with holes of various shapes removed (piecewise constant), while the second is based on the function in Figure~\ref{fig:vsFCNN:f2D} with holes introduced (piecewise smooth). We choose an MMNN of size (256, 12, 6), denoted MMNN1, and another of size (1024, 32, 6), denoted MMNN2, to learn these two functions, respectively. For training, we sample \(600^2\) data points on a uniform grid in \([-1,1]^2\), using a mini-batch size of 1000 and a learning rate of \(0.001 \times 0.9^{\lfloor k/20 \rfloor}\) for epochs \(k = 1, 2, \dots, 1600\).

\begin{figure}
    \centering	
    \begin{subfigure}[b]{0.294311\textwidth}
    \centering            \includegraphics[width=0.998055\textwidth]{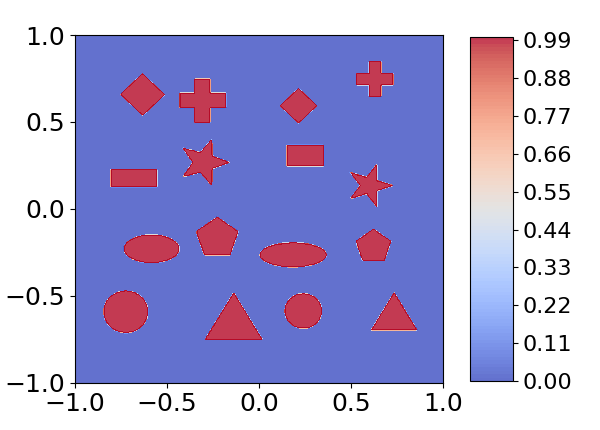}
    \subcaption{Truth.}
    \label{subfig:discont:fun1}
    \end{subfigure}
        \hfill
              \begin{subfigure}[b]{0.294311\textwidth}
    \centering            \includegraphics[width=0.985\textwidth]{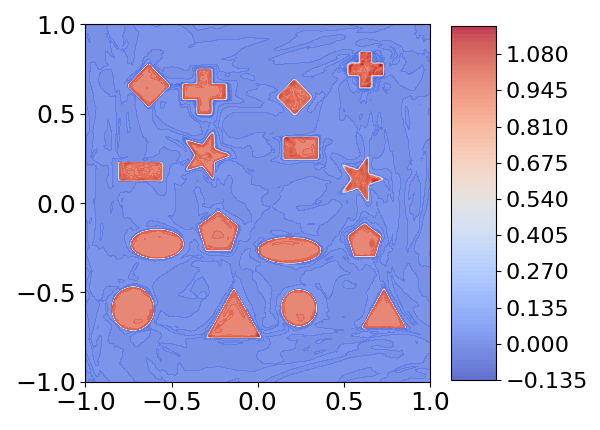}
    \subcaption{MMNN1.}
    \end{subfigure}
    \hfill
    \begin{subfigure}[b]{0.294311\textwidth}
    \centering            \includegraphics[width=0.985\textwidth]{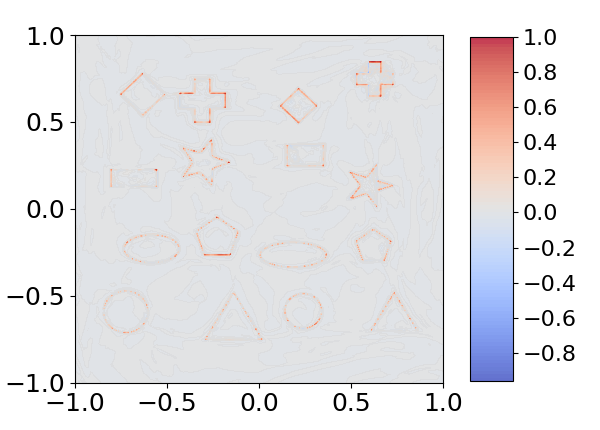}
    \subcaption{Error.}
    \end{subfigure}
    \\
         \begin{subfigure}[b]{0.294311\textwidth}
    \centering            \includegraphics[width=0.998055\textwidth]{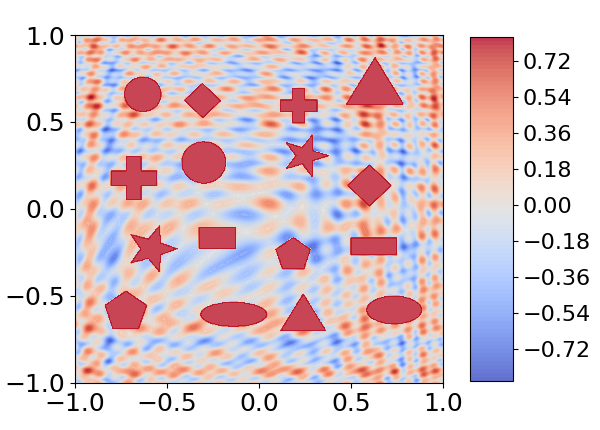}
    \subcaption{Truth.}
    \label{subfig:discont:fun2}
    \end{subfigure}
    \hfill
    \begin{subfigure}[b]{0.294311\textwidth}
    \centering            \includegraphics[width=0.998055\textwidth]{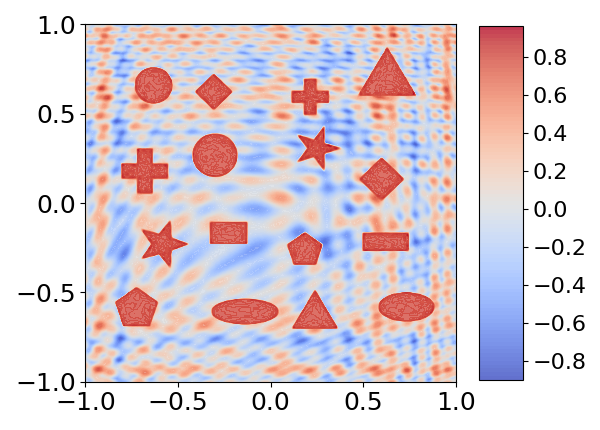}
    \subcaption{MMNN2.}
    
    \end{subfigure}
    \hfill
    \begin{subfigure}[b]{0.294311\textwidth}
    \centering            \includegraphics[width=0.998055\textwidth]{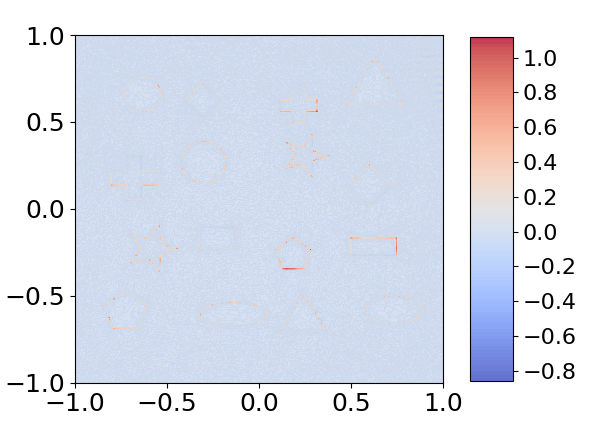}
    \subcaption{Error.}
    
    \end{subfigure}
    \caption{
   True functions, corresponding learned networks, and their differences (errors). The top and bottom rows correspond to two different target functions.}
  \label{fig:discont:funcs}
\end{figure}

\begin{table}[htbp!]
	\centering  
 \setlength{\tabcolsep}{0.68em} 
 \renewcommand{\arraystretch}{1.15}
\caption{
Test errors for two approximation results in Figure~\ref{fig:discont:funcs}.
}
\label{tab:discont:porous}
	\resizebox{0.948\textwidth}{!}{ 
		\begin{tabular}{ccccccccc} 
			\toprule
			    target function & network &  (width, rank, depth) & {\#parameters (trained / all)}  &      {test error (MSE)} 
    & test error (MAX)  \\
			\midrule
			 \rowcolor{mygray}
			Figure~\ref{fig:discont:funcs}(a) &  MMNN1  & (256, 12, 6) &  15677 / 33085 & 
$1.35 \times 10^{-3}$  &  $9.94 \times 10^{-1}$ \\	
   		Figure~\ref{fig:discont:funcs}(d) &	 MMNN2  & (1024, 32, 6) & 165025 / 337057 & 
$1.14 \times 10^{-3}$  &  $1.12\times 10^{0}$
   \\	
			\bottomrule
		\end{tabular} 
	}
\end{table} 

As shown in Figure~\ref{fig:discont:funcs}, MMNNs demonstrate an impressive ability to simultaneously localize and capture discontinuities, geometric features, and oscillatory behaviors. This indicates that MMNNs are adaptive in both spatial and frequency domains. 
While the errors presented in Table~\ref{tab:discont:porous} are somewhat larger, they are primarily concentrated near the discontinuous parts, as illustrated in Figure~\ref{fig:discont:funcs}, which is reasonable.

\subsection{Learning dynamics}\label{sec:dynamics}

Here,
we show some interesting learning dynamics observed during the training process. As the first example in Section~\ref{sec:oscillatory} and the following examples show, the training process not just learns from low frequency first but can also learn feature by feature, i.e., can be localized in both frequency domain and spatial domain. We believe this is due to the combination of MMNN's ``divide and conquer" ability and the Adam optimizer which utilizes momentum. More understanding is needed and will be studied in our future research.

We again start with a one-dimensional example, $f(x)=\sin\big(36\pi |x|^{1.5}\big), x\in[-1,1]$. 
An MMNN of size $(600,30,8)$ produces a good approximation of this highly oscillatory function, as illustrated by the error plot in Figure~\ref{fig:36pi:power1.5-error}. For this test, a total of $1000$ uniformly sampled points in $[-1,1]$ are used with a mini-batch size of $100$ and a learning rate of $10^{-3}\times 0.9^{\lfloor k/200 \rfloor}$, where $k=1,2,\cdots,10000$ is the epoch number. 
As illustrated in Figure~\ref{fig:36pi:power1.5-derivative}, the function is less oscillatory near 0. Therefore, we might anticipate that the network will initially learn the part near 0 and then feature by feature from the middle to the boundary. The experimental results presented in Figure~\ref{fig:sin:36pi:power:1.5:learning:process} agree with our expectations.

\begin{figure}[h]
\centering
\begin{minipage}[b]{0.4568\linewidth}
    \centering	    \includegraphics[width=0.59\textwidth]{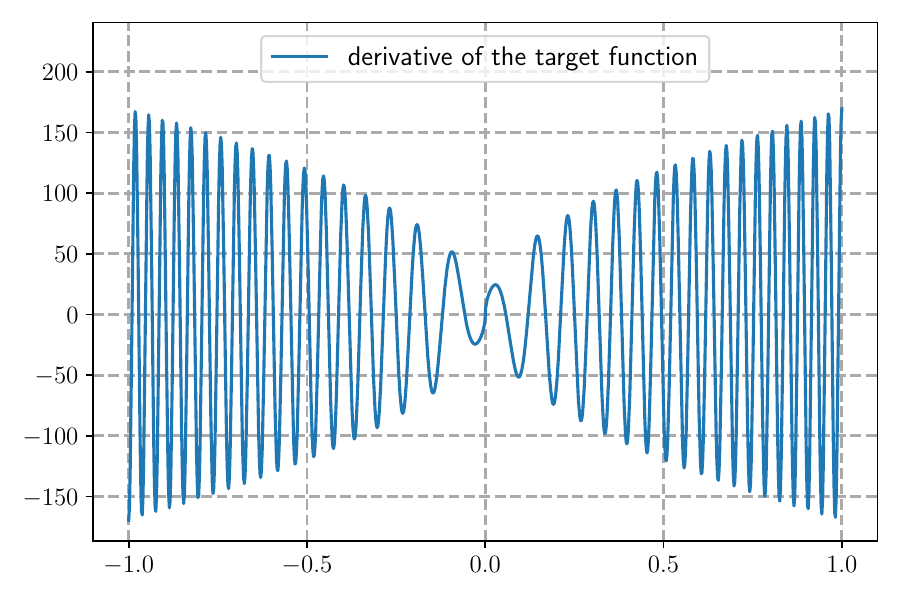}
    \caption{Derivative of $f$.
    }
    	\label{fig:36pi:power1.5-derivative}
    \end{minipage}
    \hspace{8pt}
    \begin{minipage}[b]{0.4568\linewidth}        \centering\includegraphics[width=0.59\textwidth]{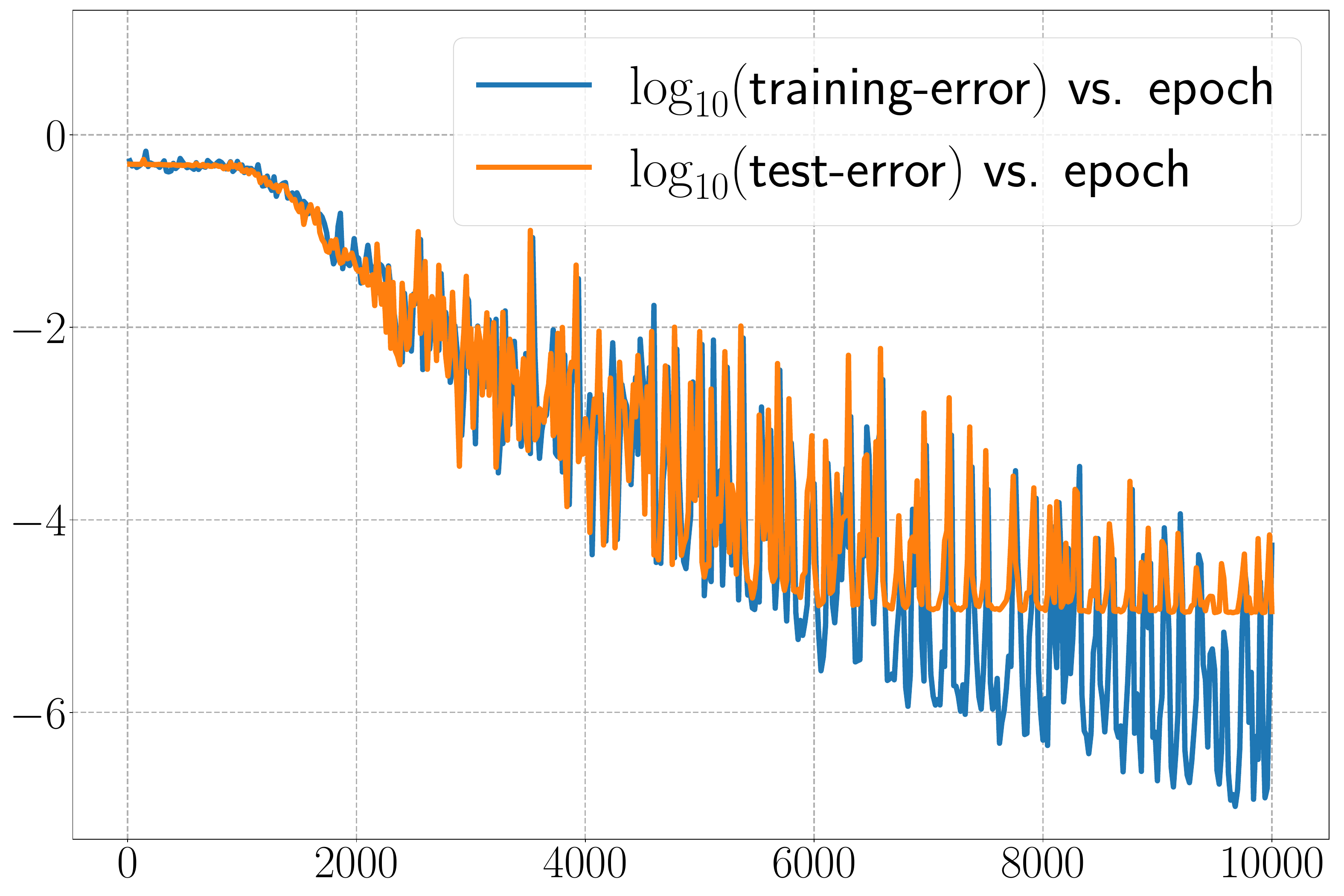}
    \caption{Errors (in MSE) vs. epoch.}
    	\label{fig:36pi:power1.5-error}
    \end{minipage}
\end{figure}

\begin{figure}
    \centering
    \begin{subfigure}[c]{0.2431\textwidth}
        \centering
        \includegraphics[width=0.99\textwidth]{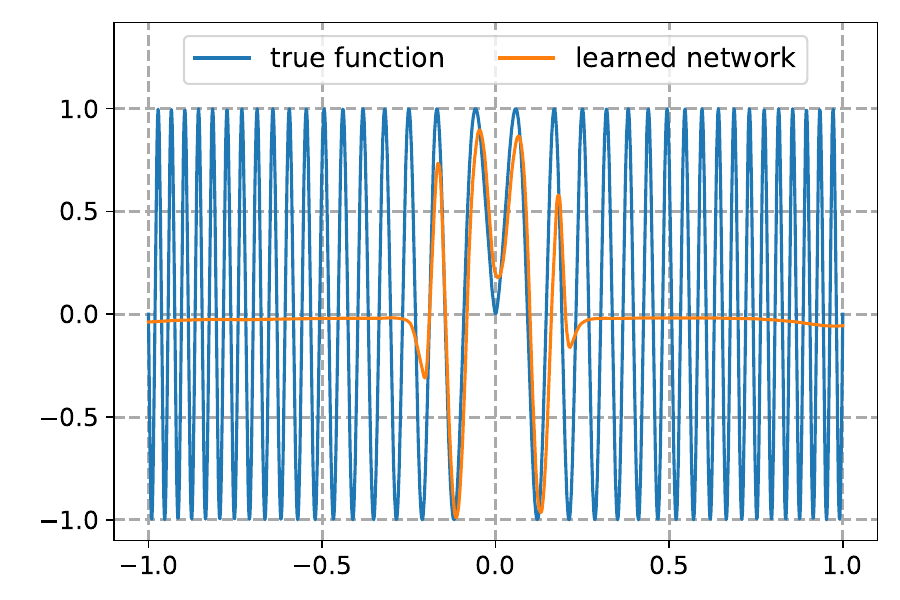}
        \subcaption{Epoch 400.}
    \end{subfigure}
    \hfill
    \begin{subfigure}[c]{0.2431\textwidth}
        \centering
        \includegraphics[width=0.99\textwidth]{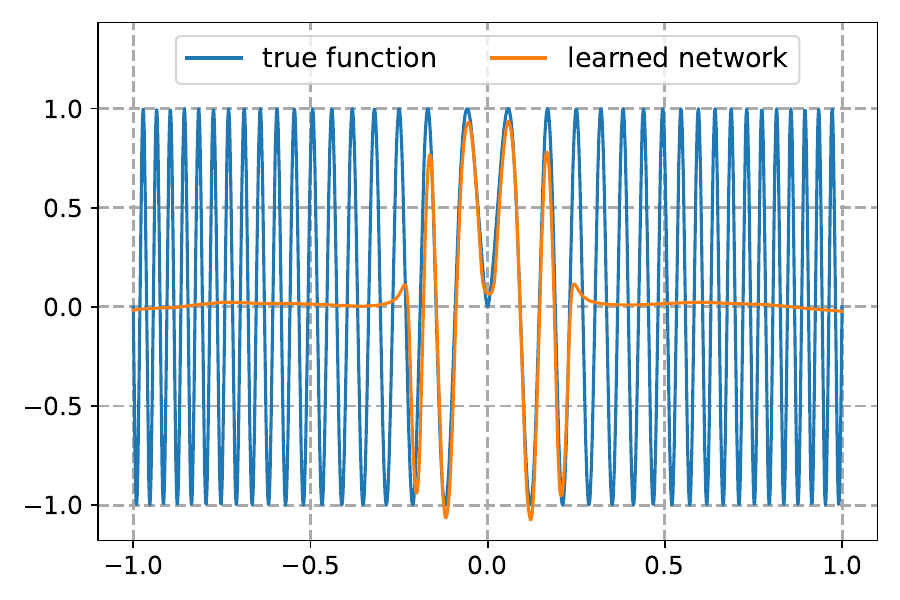}
        \subcaption{Epoch 500.}
    \end{subfigure}
    \hfill
    \begin{subfigure}[c]{0.2431\textwidth}
        \centering
        \includegraphics[width=0.99\textwidth]{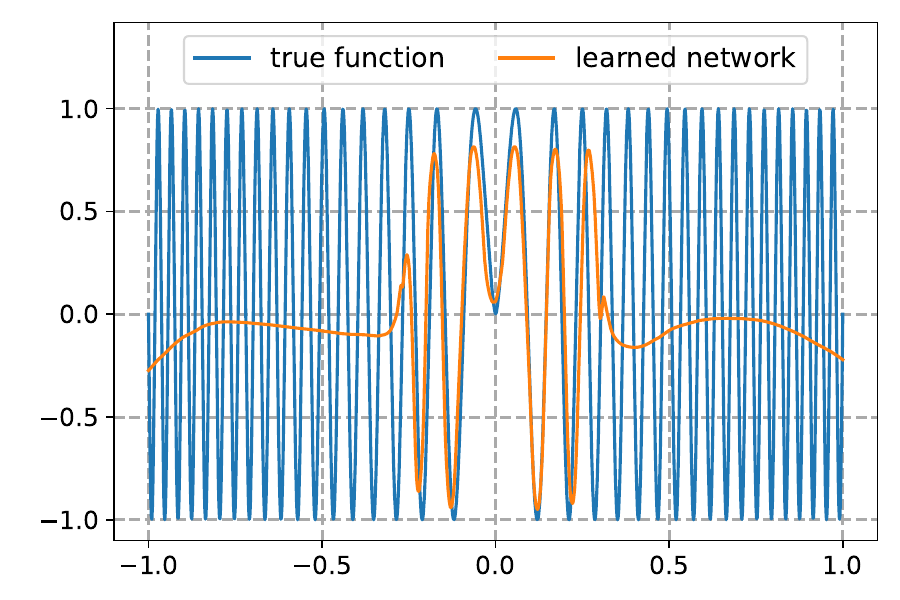}
        \subcaption{Epoch 600.}
    \end{subfigure}
    \hfill
    \begin{subfigure}[c]{0.2431\textwidth}
        \centering
        \includegraphics[width=0.99\textwidth]{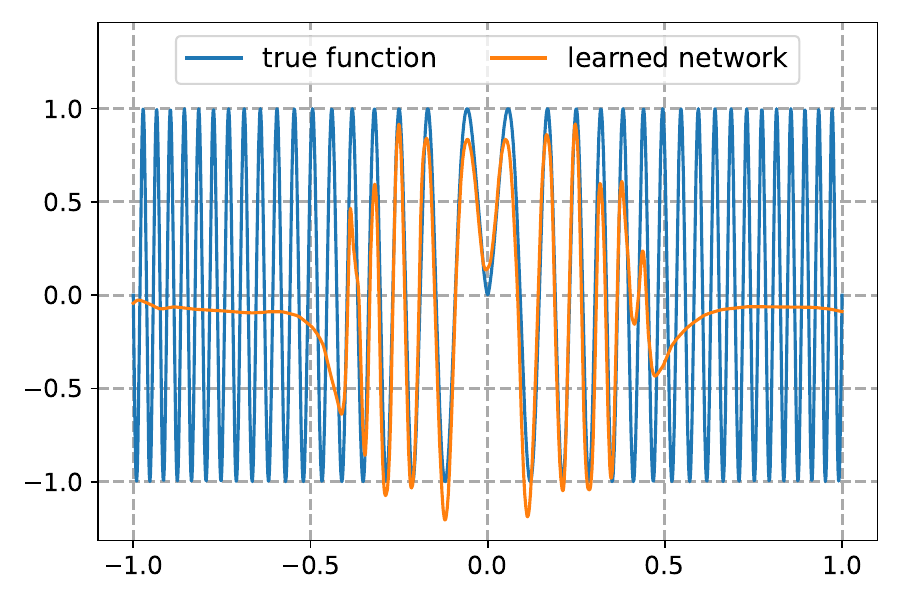}
        \subcaption{Epoch 700.}
    \end{subfigure}
\\
    \begin{subfigure}[c]{0.2431\textwidth}
        \centering
        \includegraphics[width=0.99\textwidth]{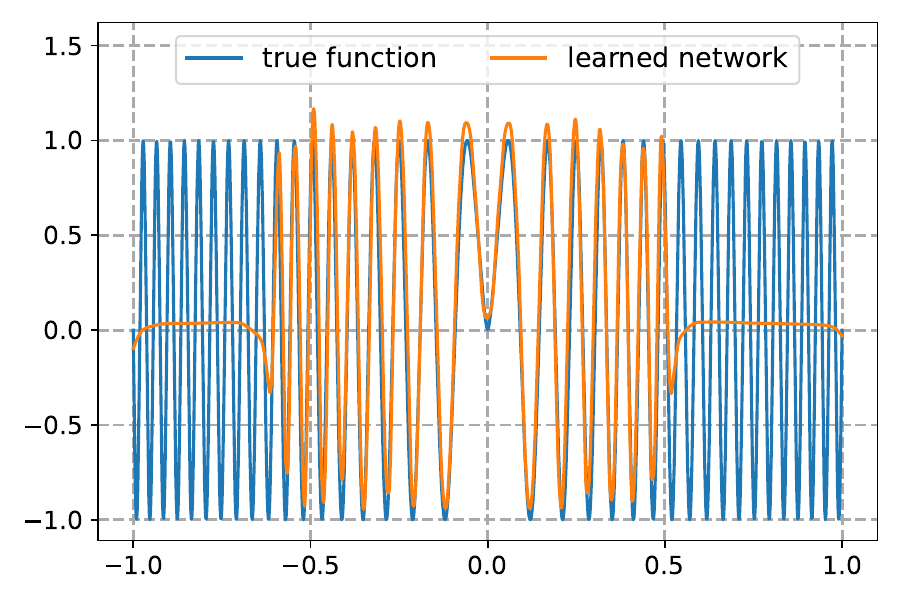}
        \subcaption{Epoch 1000.}
    \end{subfigure}
    \hfill
    \begin{subfigure}[c]{0.2431\textwidth}
        \centering
        \includegraphics[width=0.99\textwidth]{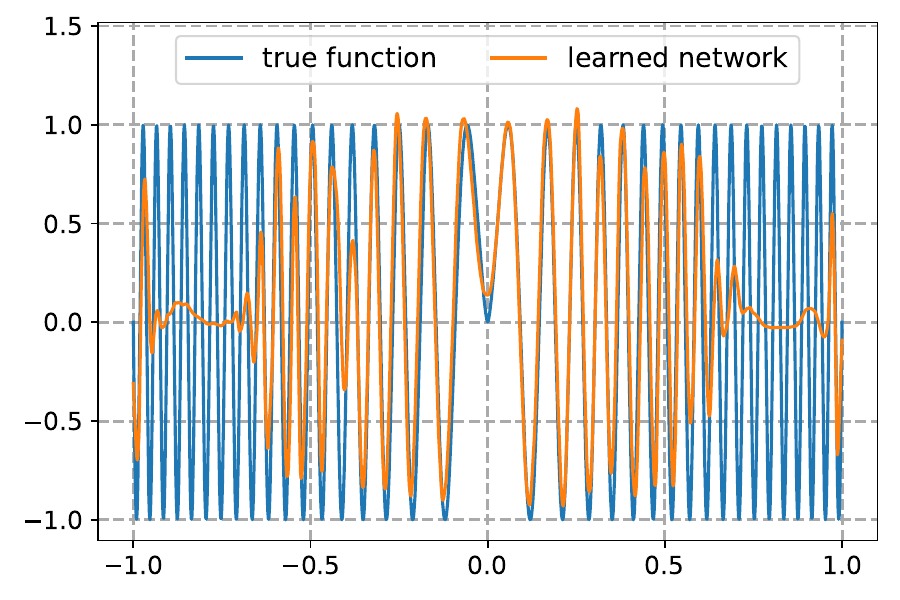}
        \subcaption{Epoch 1700.}
    \end{subfigure}
    \hfill
    \begin{subfigure}[c]{0.2431\textwidth}
        \centering
        \includegraphics[width=0.99\textwidth]{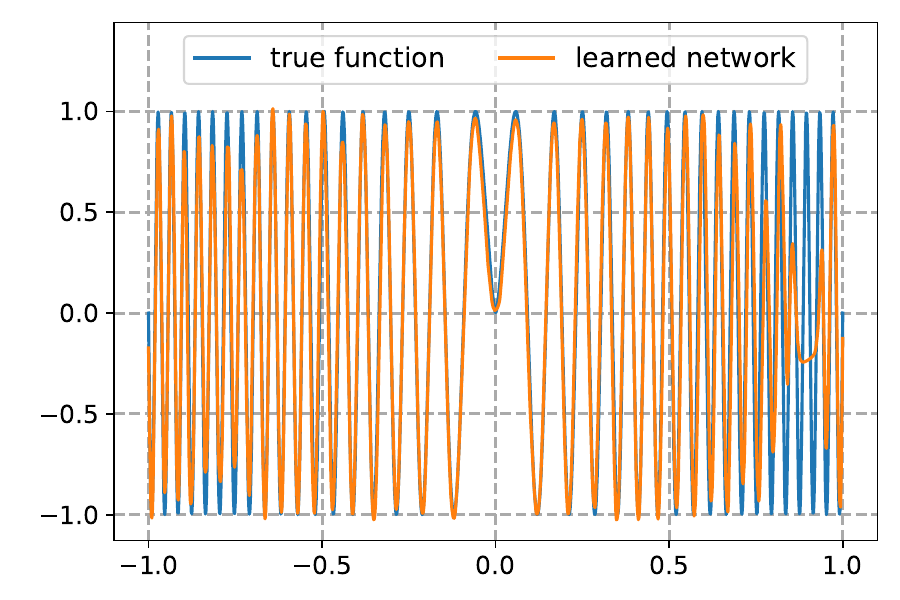}
        \subcaption{Epoch 2400.}
    \end{subfigure}
    \hfill
    \begin{subfigure}[c]{0.2431\textwidth}
        \centering
        \includegraphics[width=0.99\textwidth]{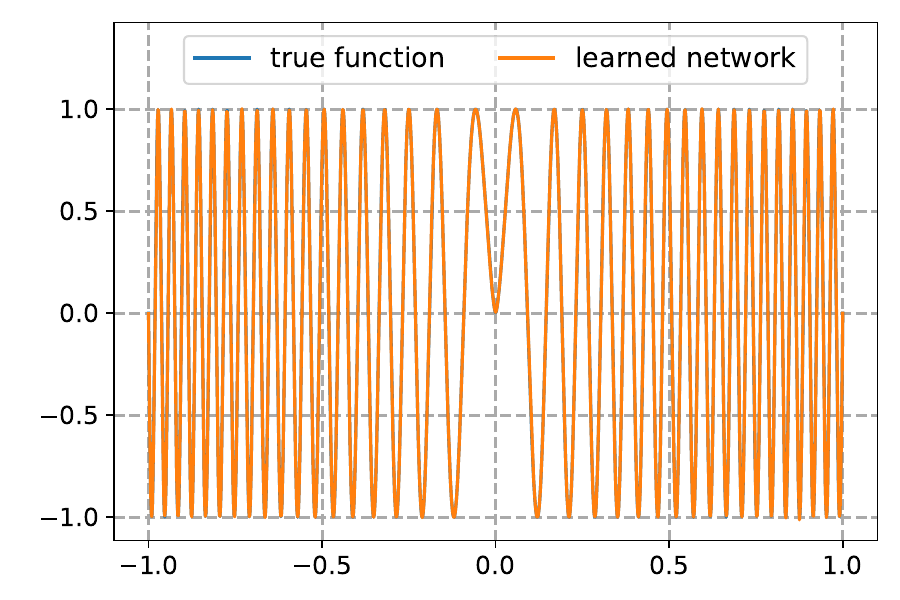}
        \subcaption{Epoch 10000.}
    \end{subfigure}
    
    \caption{Illustration of the training process.}
    \label{fig:sin:36pi:power:1.5:learning:process}
\end{figure}

Now we show an example of two-dimensional function $f(r, \theta)$ (see Figure~\ref{fig:LD:fun2D}) defined in polar coordinates $(r, \theta)$ as
\begin{equation*}
    f(r,\theta)=\begin{cases}
        0 & \tn{if  } 0.5+5\rho-5r\le 0,\\
        1 & \tn{if  } 0.5+5\rho-5r\ge 1,\\
        0.5+5\rho-5r & \tn{otherwise},\\
    \end{cases}
    \quad \tn{where}\quad \rho=0.5+0.1\cos(\pi^2 \theta^2).
\end{equation*}

\begin{figure}
\begin{minipage}[b]{0.6357604\linewidth}
    \centering	
    \begin{subfigure}[c]{0.458\textwidth}
    \centering            \includegraphics[height=0.795\textwidth]{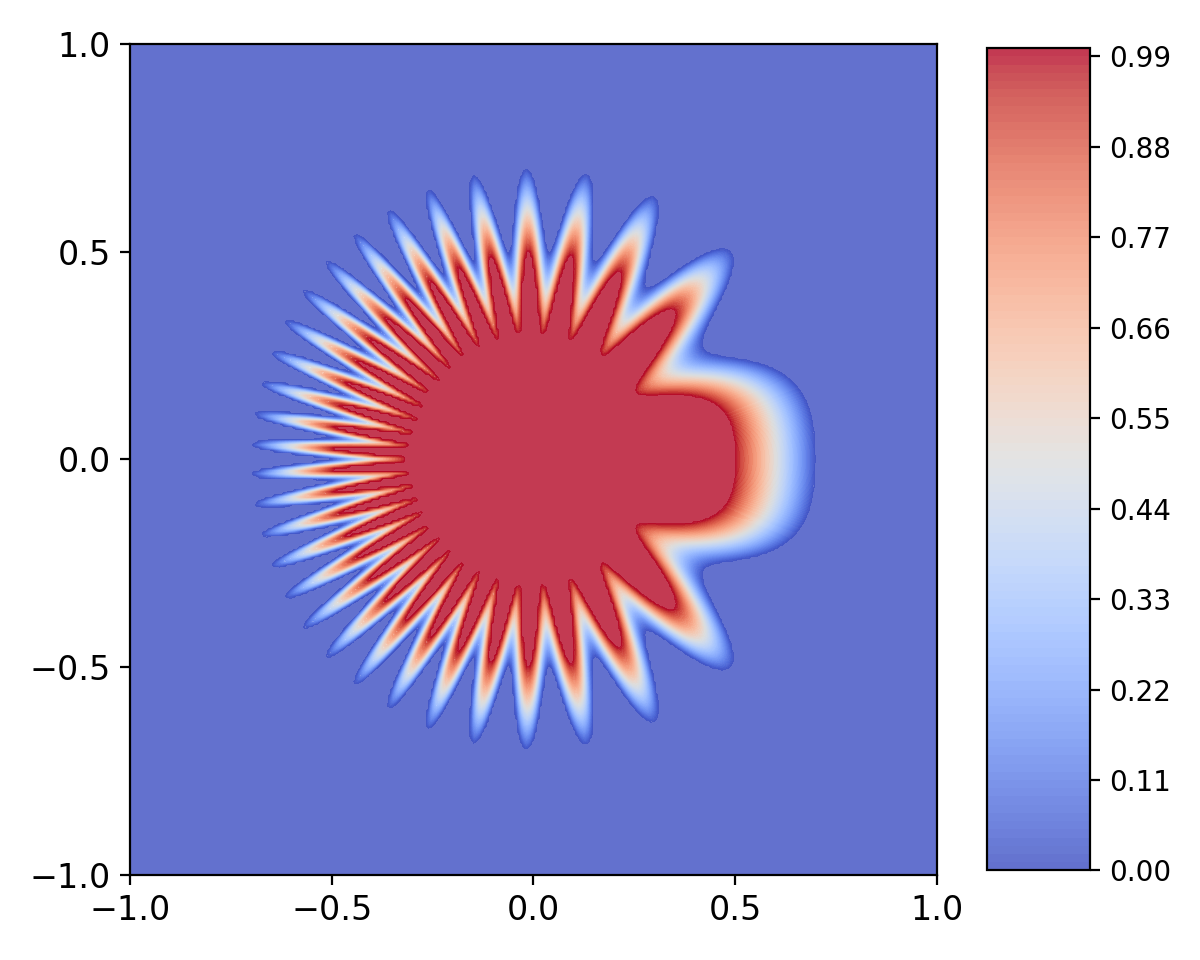}
    \end{subfigure}
    \hfill
    \begin{subfigure}[c]{0.458\textwidth}
    \centering            \includegraphics[height=0.7925\textwidth]{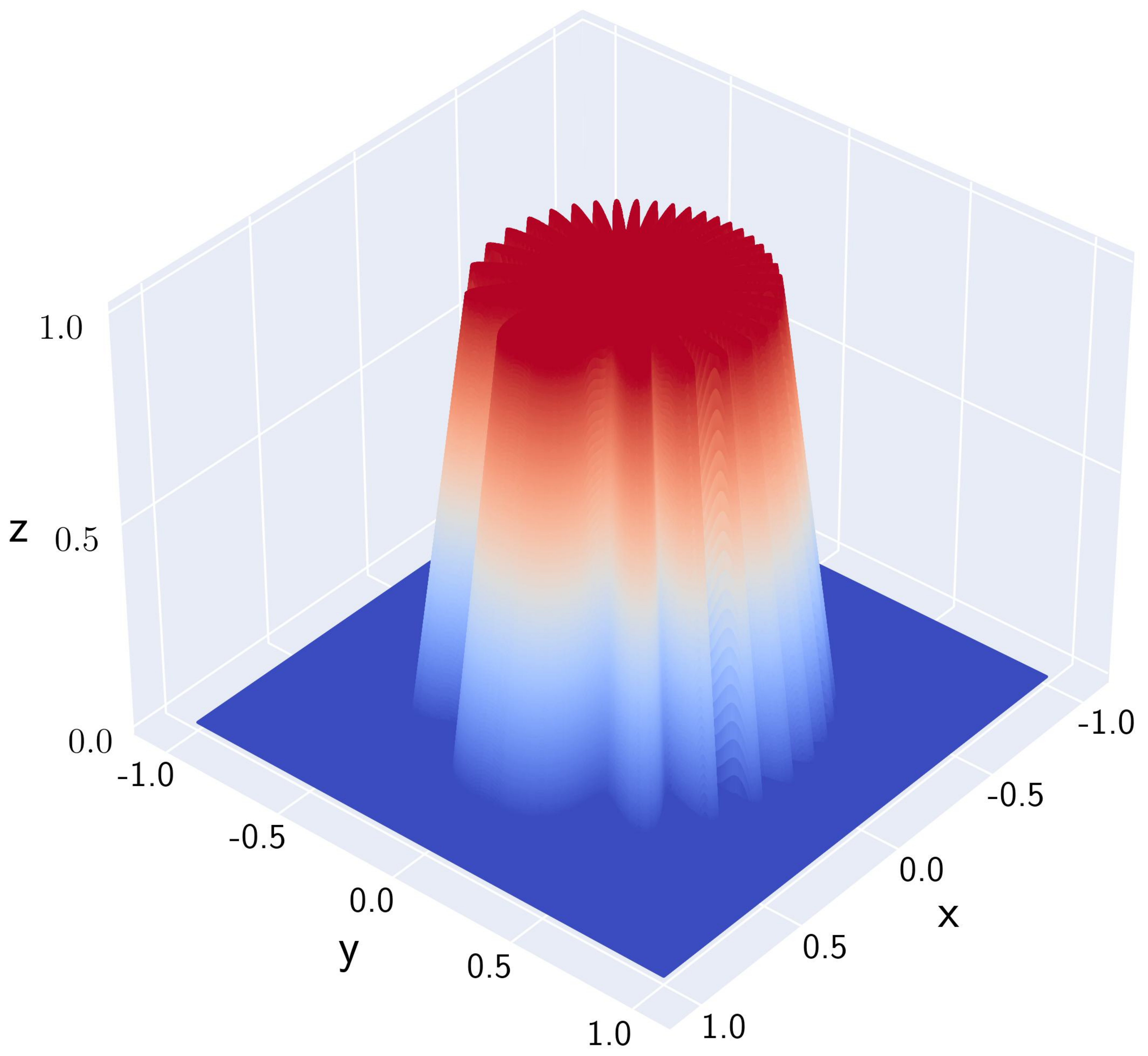}
    \end{subfigure}
    \caption{Illustration of the target function.}
    	\label{fig:LD:fun2D}
    \end{minipage}
    \hfill
\begin{minipage}[b]{0.318\linewidth}
    \centering	
    \includegraphics[width=0.95\textwidth]{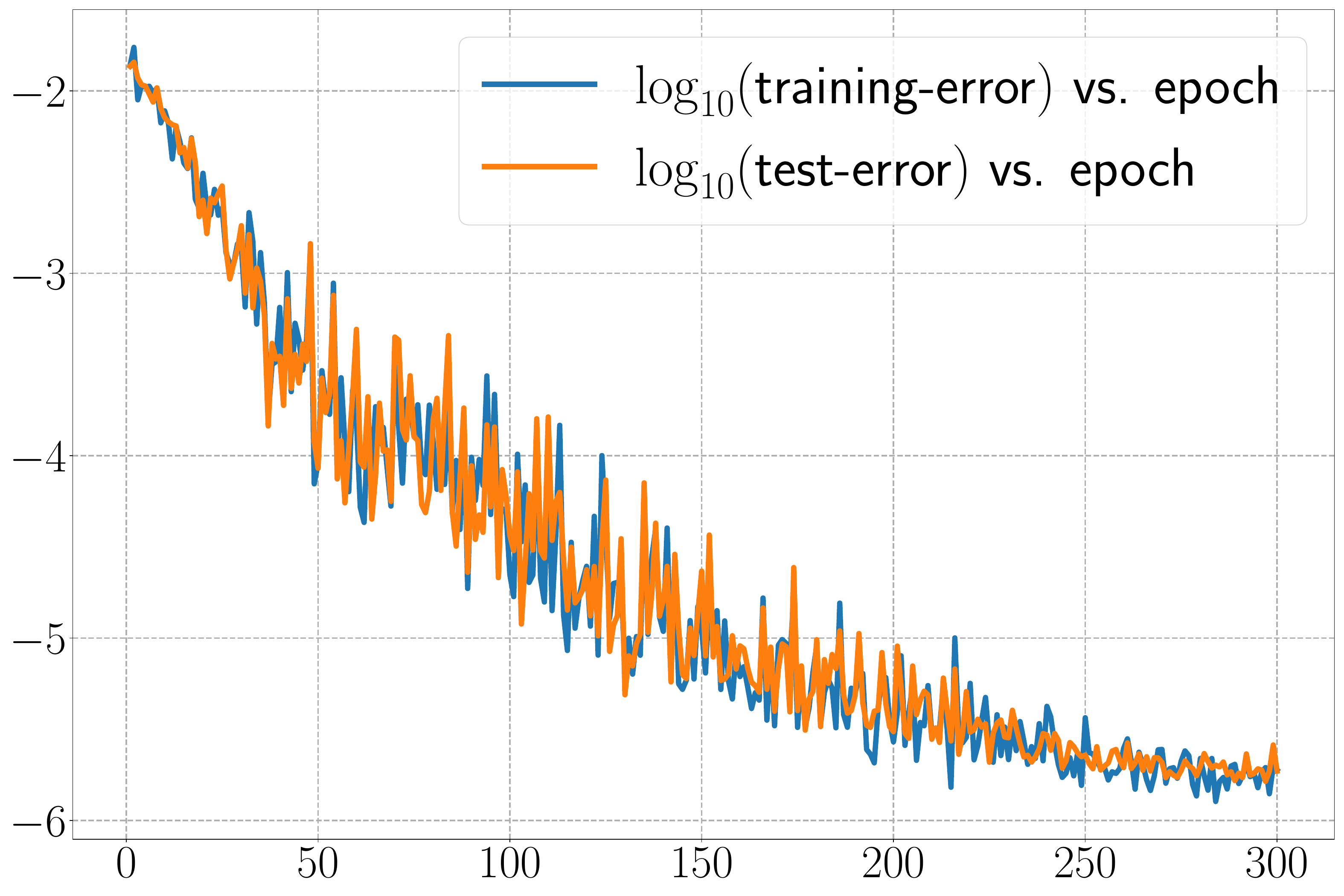}
    \caption{Training and test errors (in MSE) vs. epoch.}
    	\label{fig:errors:LD:fun2D}
    \end{minipage}
\end{figure}

Our MMNN is of a compact size $(500, 20, 8)$. For this test, a total of $600^2$ data are sampled on a uniform grid in $[-1,1]^2$ with a mini-batch size of $1000$ and a learning rate of $0.001 \times 0.9^{\lfloor k/6 \rfloor}$ for epochs $k=1,2,\cdots,300$.
Figure~\ref{fig:errors:LD:fun2D} gives the error plot. The training process shown in Figure~\ref{fig:2D-dynamics} illustrates that an overall coarse scale or low-frequency component of the shape is learned first and then localized features are learned one by one from coarse to fine.

\subsection{Tests in three dimensions and higher}\label{sec:higher}
Here,
we test a few examples in three and four dimensions. Even sampling an interesting function becomes challenging as the dimension becomes higher. Although our examples are limited by our computation power using a laptop, our tests show that MMNN performs well and is more effective than a fully connected network. 

The first example is a three-dimensional function a level set of which is shown in Figure~\ref{fig:f3D:spikyBall}. Using polar coordinates \((r, \theta, \phi)\), $\theta \in [0,\pi]$, $\phi\in[0,2\pi)$,
the target function \( f(x, y, z) \) is defined as:
\begin{equation*}
    f(r, \theta, \phi) = \begin{cases}
        0 & \text{if } 0.5 + 5\rho - 5r \leq 0, \\
        1 & \text{if } 0.5 + 5\rho - 5r \geq 1, \\
        0.5 + 5\rho - 5r & \text{otherwise},
    \end{cases}
\end{equation*}
where
\begin{equation*}
    \rho=\rho(\theta,\phi)=0.5+0.2\sin(6  \theta)\cos(6 \phi)\sin^2(\theta).
\end{equation*}

\begin{figure}
    \begin{minipage}[b]{0.30\linewidth}
    \centering    \includegraphics[width=0.9585\textwidth]{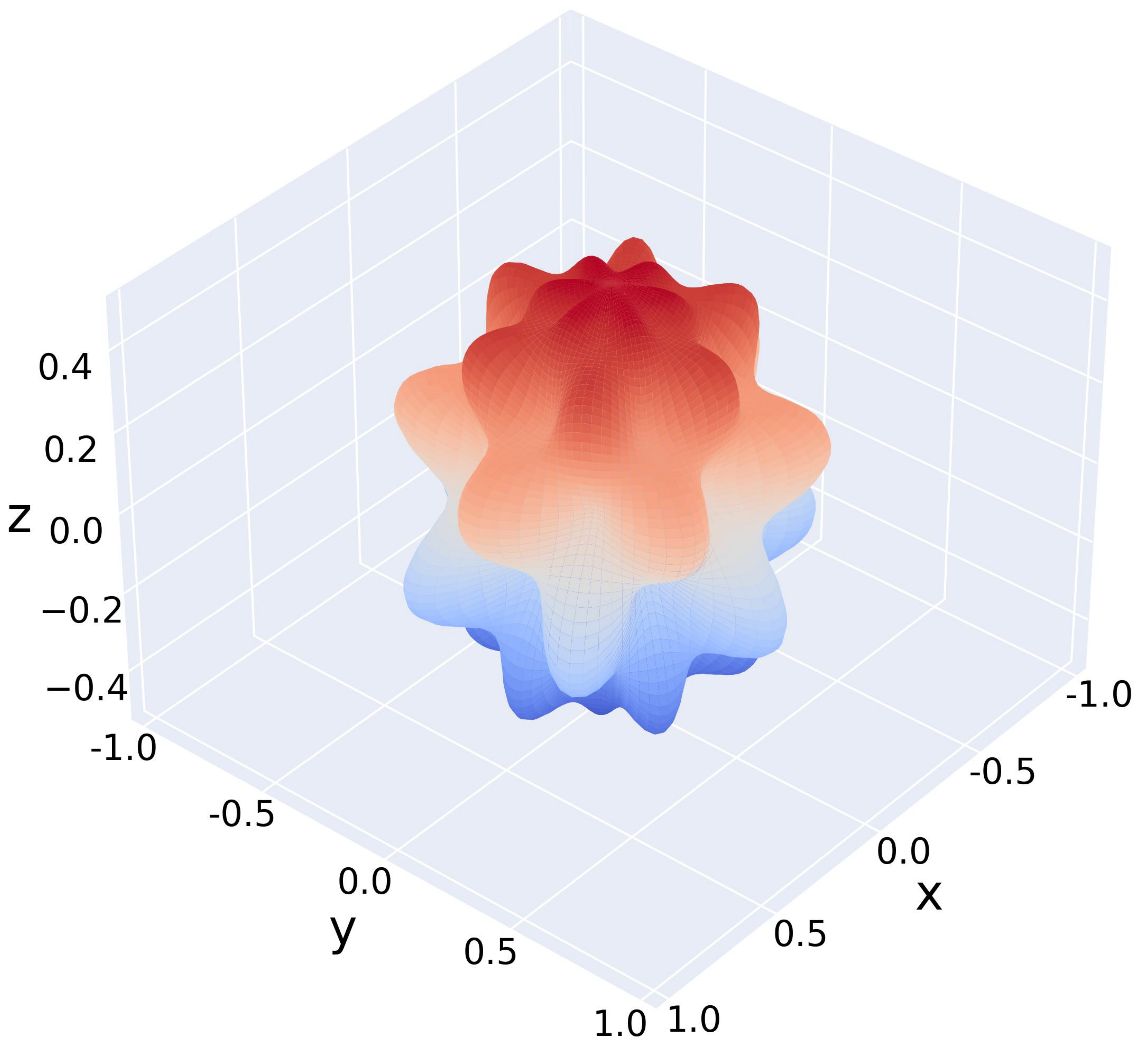}
    \caption{Surface of the levelset $f(r,\theta,\phi)=0.5$.}
    \label{fig:f3D:spikyBall}
\end{minipage}
\hfill
    \begin{minipage}[b]{0.30\linewidth}
    \centering    \includegraphics[width=0.9585\textwidth]{figures/HighDim/f3DspikyBall.pdf}
    \caption{Surface of the levelset $h(r,\theta,\phi)=0.5$.}
    \label{fig:h3D:spikyBall}
\end{minipage}
\hfill
\begin{minipage}[b]{0.33\linewidth}
    \centering
    \includegraphics[width=0.969985\textwidth]{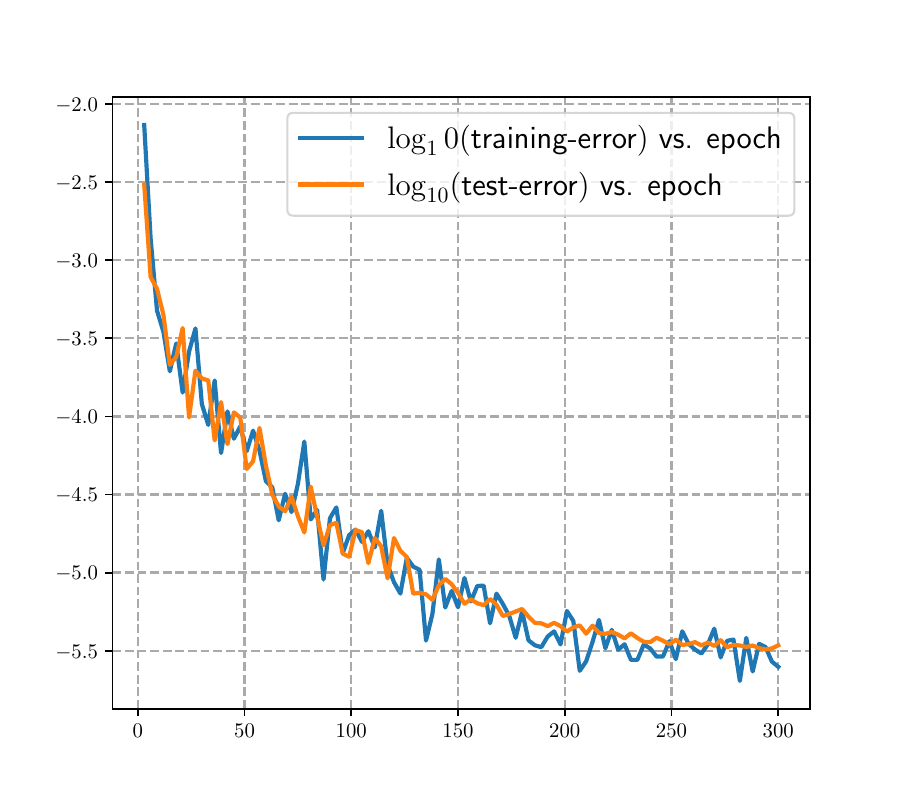}
 \caption{Training and test errors (MSE) vs. epoch.}
    \label{fig:3DlogErrorVsEpoch1}
    \end{minipage}
\end{figure}

Our MMNN is of a compact size $(600, 20, 8)$. For this test, a total of $111^3$ data are sampled on a uniform grid in $[-1,1]^3$ with a mini-batch size of $999$ and a learning rate of $0.0005 \times 0.9^{\lfloor k/6 \rfloor}$ for epochs $k=1,2,\cdots,300$.
Figure~\ref{fig:3DlogErrorVsEpoch1} gives the error plot. 
As shown in Figures~\ref{fig:f3D:spikyBall} and \ref{fig:h3D:spikyBall}, the levelsets corresponding to the target function $f$ and the learned MMNN approximation $h$ are nearly identical.
To visually demonstrate the quality of the approximation and complex structure of the three-dimensional function, we present several slices of the target function and the MMNN approximation by fixing either $x$, $y$, or $z$ in Figure~\ref{fig:3Dslices}.

\begin{figure}
    \centering	
       \begin{subfigure}[c]{0.3248\textwidth}
    \centering            \includegraphics[width=0.9869598055\textwidth]{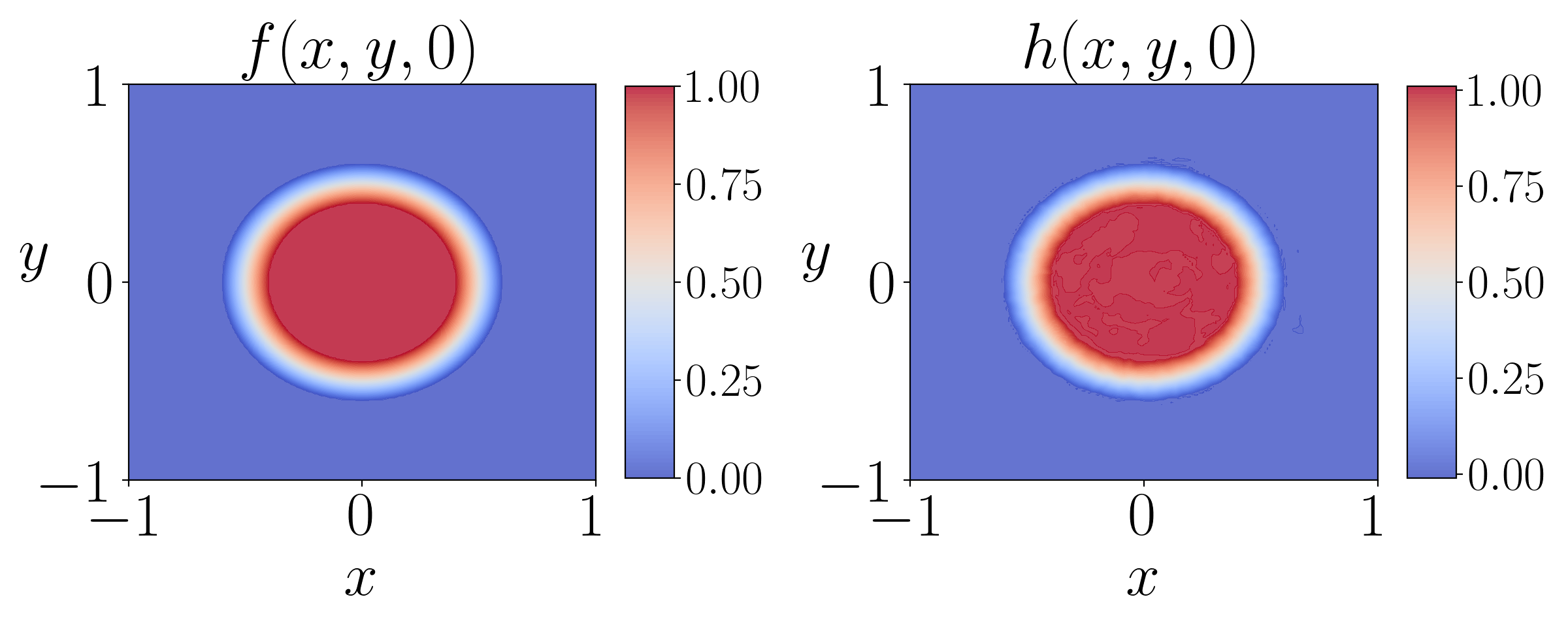}
    \subcaption{$z=0$.}
    \end{subfigure}
    \hfill
    \begin{subfigure}[c]{0.3248\textwidth}
    \centering            \includegraphics[width=0.9869598055\textwidth]{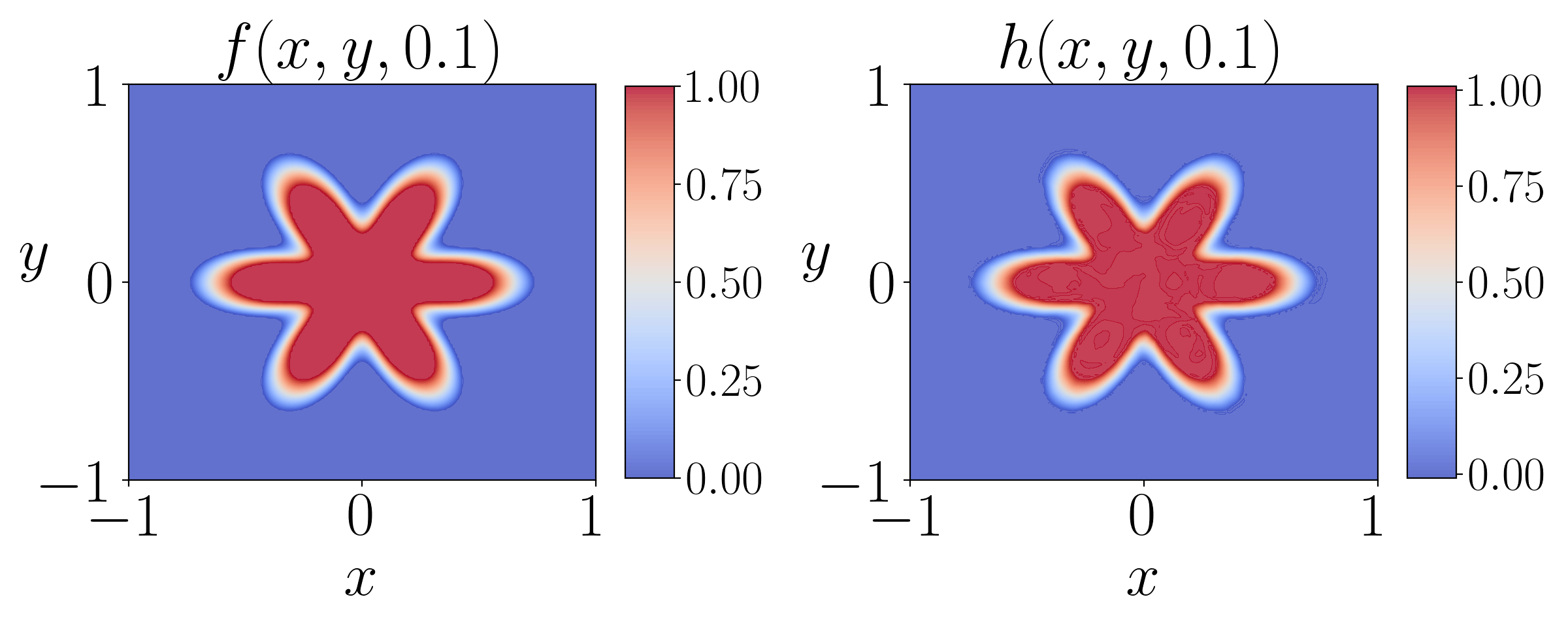}
    \subcaption{$z=0.1$.}
    \end{subfigure}\hfill             
    \begin{subfigure}[c]{0.3248\textwidth}
    \centering            \includegraphics[width=0.9869598055\textwidth]{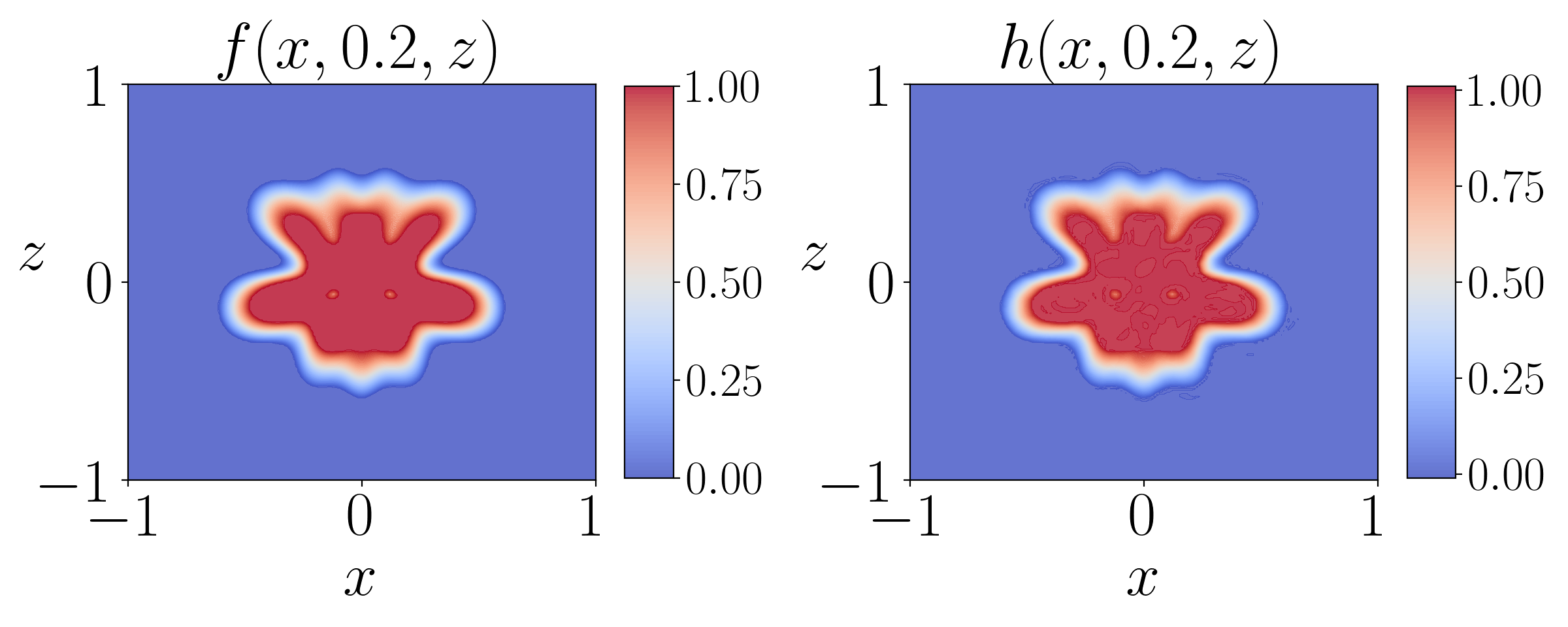}
    \subcaption{$y=0.2$.}
    \end{subfigure}\\
       \begin{subfigure}[c]{0.3248\textwidth}
    \centering            \includegraphics[width=0.9869598055\textwidth]{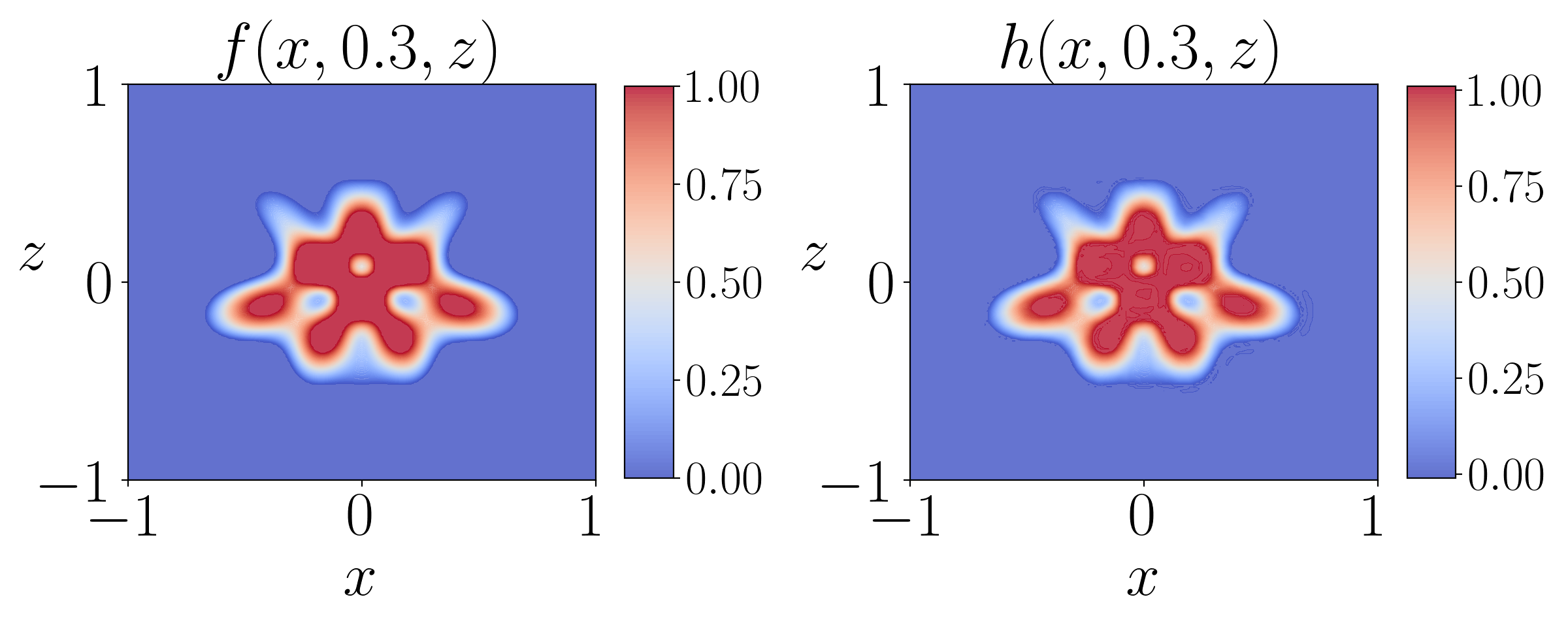}
    \subcaption{$y=0.3$.}
    \end{subfigure}
    \hfill
        \begin{subfigure}[c]{0.3248\textwidth}
    \centering            \includegraphics[width=0.9869598055\textwidth]{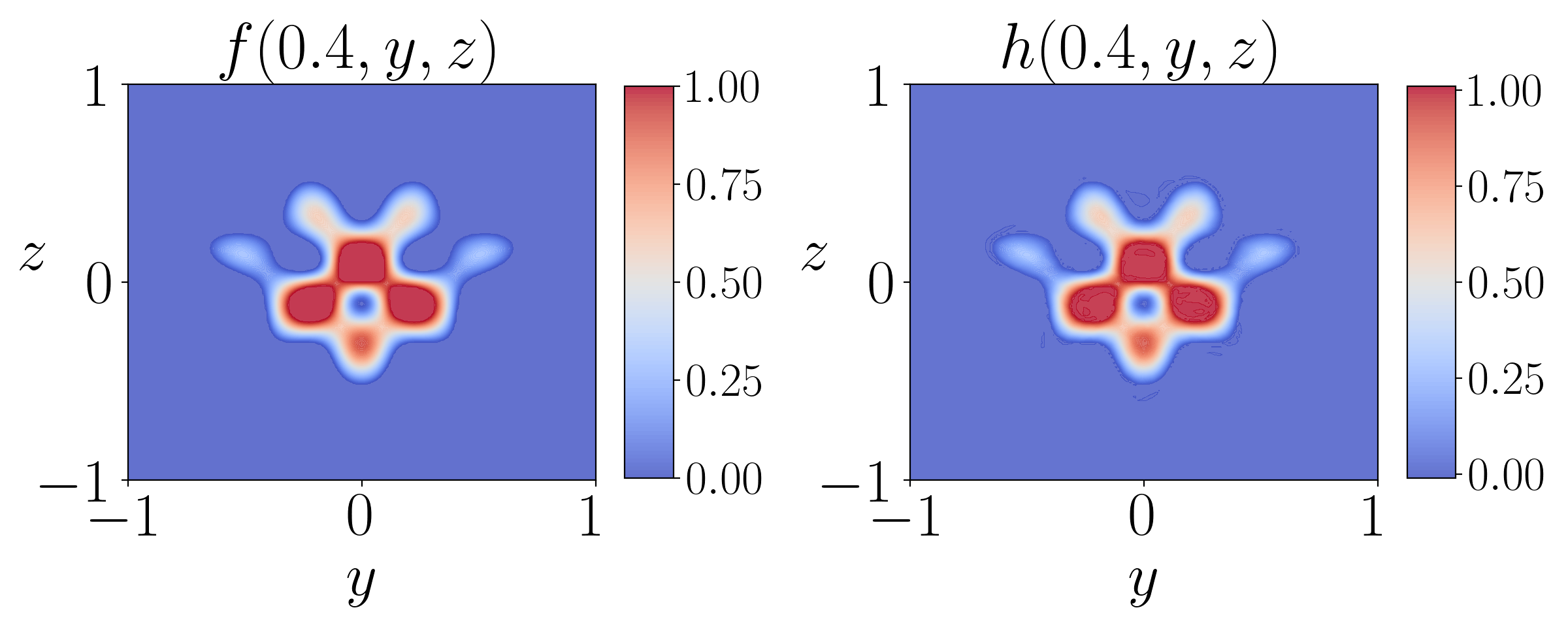}
    \subcaption{$x=0.4$.}
    \end{subfigure}
    \hfill
                \begin{subfigure}[c]{0.3248\textwidth}
    \centering            \includegraphics[width=0.9869598055\textwidth]{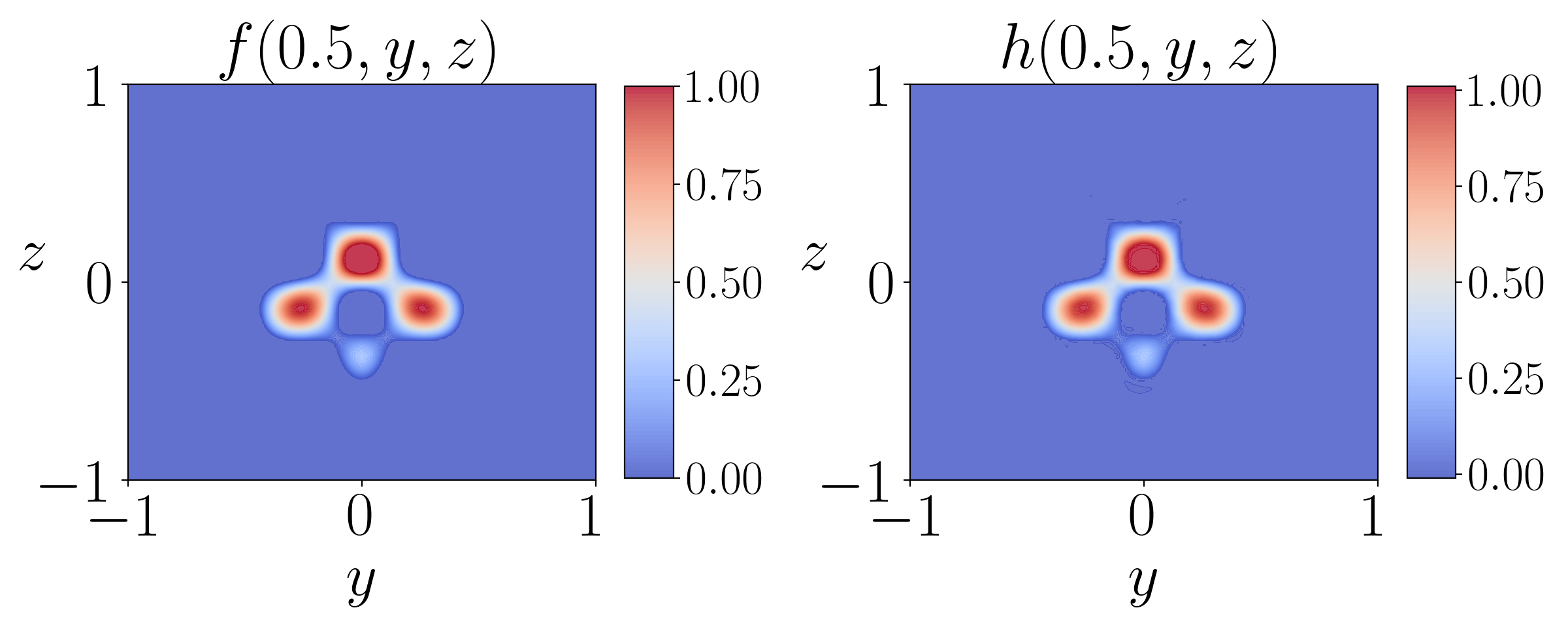}
    \subcaption{$x=0.5$.}
    \end{subfigure}
    \caption{Slices of the true function $f(x, y, z)$ vs. those of the MMNN approximation $h(x, y, z)$.}   \label{fig:3Dslices}
\end{figure}




Next, we consider the probability density function (PDF) of a Gaussian (normal) distribution in four dimensions
\[
f(\bm{x}) = f(x_1, \dots, x_4) = \frac{\exp\left(-\frac{1}{2} (\bm{x}-\boldsymbol{\mu})^\ts  \bmSigma^{-1} (\bm{x}-\bm{\mu})\right)}{\sqrt{(2\pi)^k \det(\bmSigma)}}
\]
where \(\bmSigma\) is the covariance matrix. 
We set \(\bm{\mu} = \bm{0}\)  and
\begin{equation*}
    \bmSigma^{-1}=20\begin{bmatrix}
    1.0 & 0.9 & 0.8 & 0.7 \\
    0.9 & 2.0 & 1.9 & 1.8 \\
    0.8 & 1.9 & 3.0 & 2.9 \\
    0.7 & 1.8 & 2.9 & 4.0 \\
\end{bmatrix}.
\end{equation*}
We remark that the eigenvalues of 
$\bmSigma^{-1}$ are $6.82,  9.93, 25.28, 158.05$ which means that the distribution is quite anisotropic and concentrated near the center.

A compact MMNN with size of $(500,12,6)$ produces a good approximation as shown in the error plot Figure~\ref{fig:logErrorVsEpoch2}. Figure~\ref{fig:4D:f:vs:h} compares the true function $f(x, y, z, u)$ and  the MMNN approximation $h(x, y, z, u)$ with $z=u=0.2$. For this test a total of $35^4$ data are sampled on a uniform grid in $[-1,1]^4$ with a mini-batch size of $35^2$ and a learning rate at $10^{-3} \times 0.9^{\lfloor k/6 \rfloor}$ for epochs $k=1,2,\cdots,300$.

\begin{figure}
\begin{minipage}[b]{0.5861\linewidth}
    \centering
    \includegraphics[width=0.94569965\textwidth]{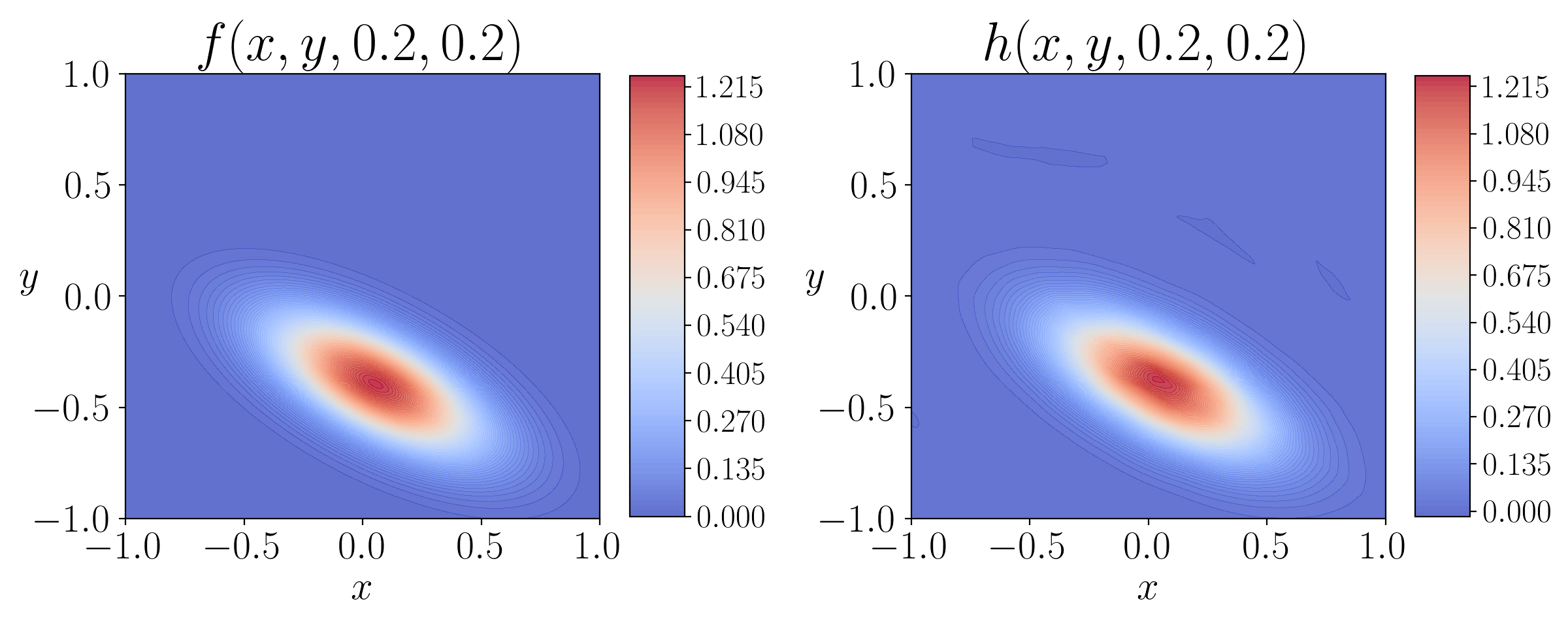}
    \caption{True function $f(x, y, z, u)$ versus the learned network $h(x, y, z, u)$ with $z=u=0.2$.}  
    \label{fig:4D:f:vs:h}
\end{minipage}
\hfill
\begin{minipage}[b]{0.3565\linewidth}
    \centering
    \includegraphics[width=0.97965\textwidth]{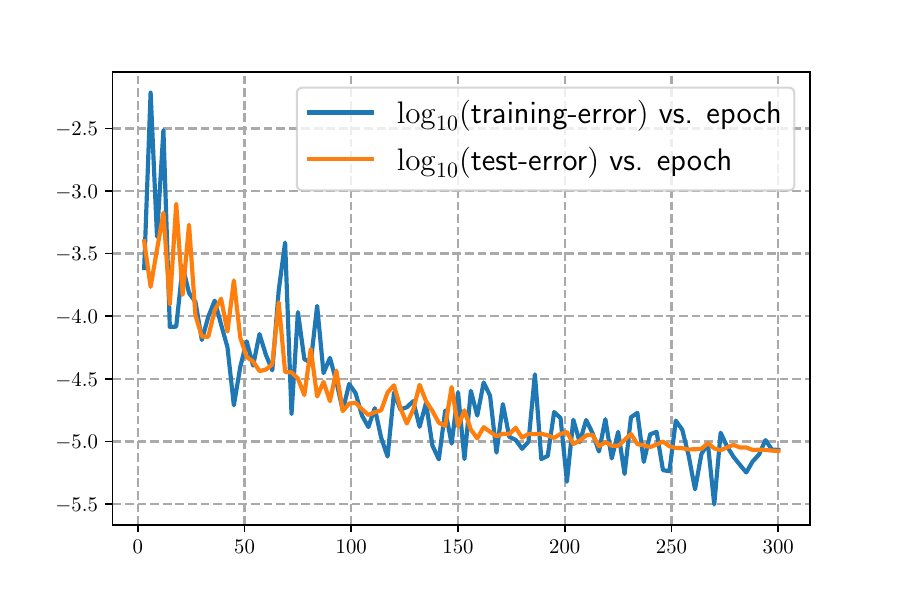}
 \caption{Training and test errors (MSE) vs. epoch.}
    \label{fig:logErrorVsEpoch2}
    \end{minipage}
\end{figure}


\begin{figure}[htbp!]
    \centering
    \begin{subfigure}[c]{0.23\textwidth}
        \centering
        \includegraphics[width=0.998055\textwidth]{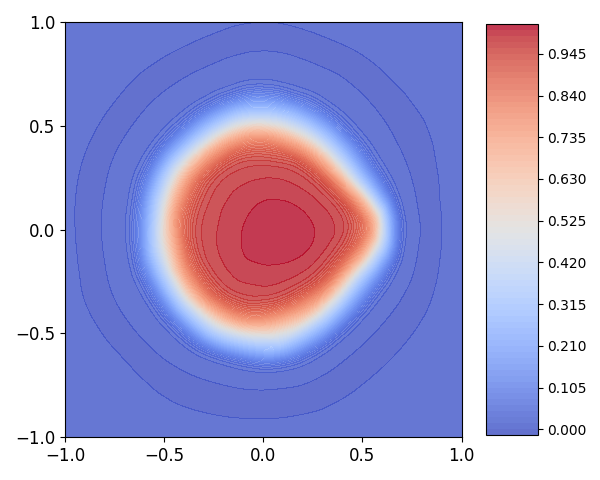}
        \subcaption{Epoch 1.}
    \end{subfigure}
    \hfill
    \begin{subfigure}[c]{0.23\textwidth}
        \centering
        \includegraphics[width=0.998055\textwidth]{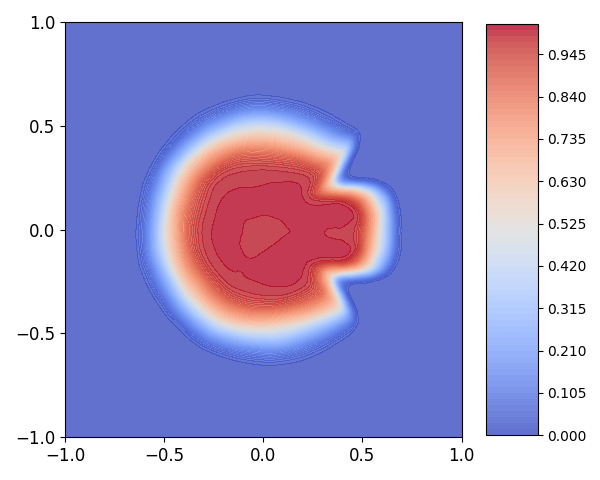}
        \subcaption{Epoch 3.}
    \end{subfigure}
    \hfill
    \begin{subfigure}[c]{0.23\textwidth}
        \centering
        \includegraphics[width=0.998055\textwidth]{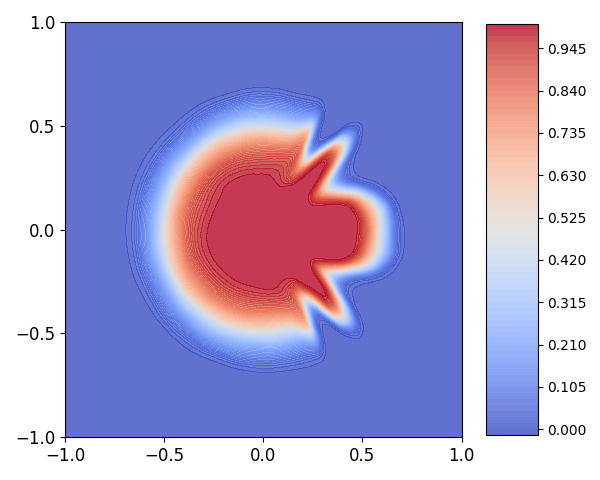}
        \subcaption{Epoch 5.}
    \end{subfigure}
    \hfill
    \begin{subfigure}[c]{0.23\textwidth}
        \centering
        \includegraphics[width=0.998055\textwidth]{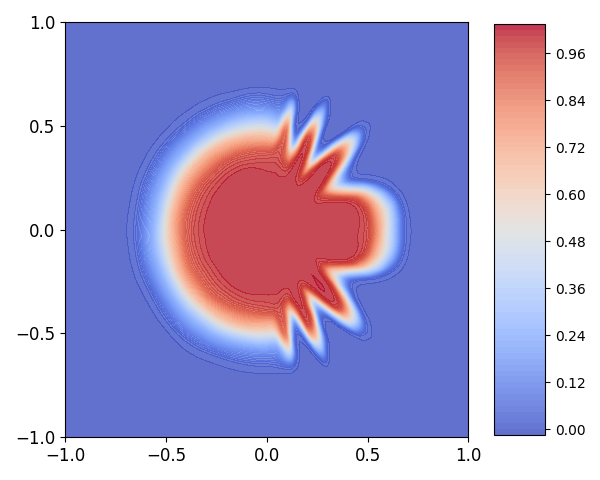}
        \subcaption{Epoch 7.}
    \end{subfigure}
\\
\begin{subfigure}[c]{0.23\textwidth}
        \centering
        \includegraphics[width=0.998055\textwidth]{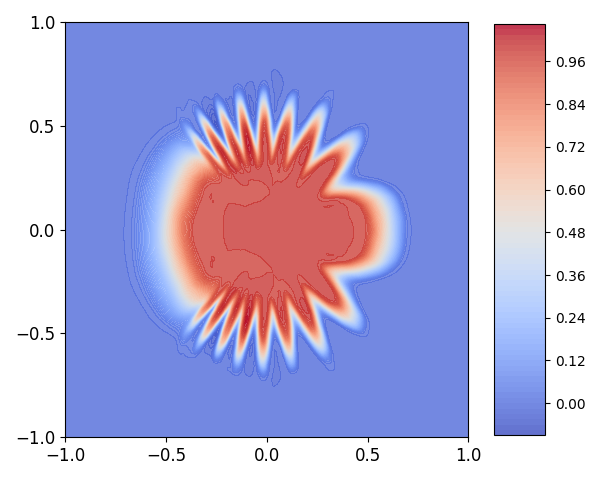}
        \subcaption{Epoch 14.}
    \end{subfigure}
    \hfill
    \begin{subfigure}[c]{0.23\textwidth}
        \centering
        \includegraphics[width=0.998055\textwidth]{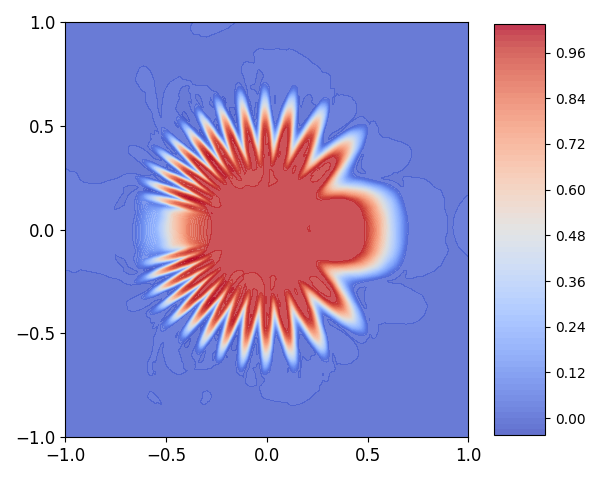}
        \subcaption{Epoch 22.}
    \end{subfigure}
    \hfill
    \begin{subfigure}[c]{0.23\textwidth}
        \centering
        \includegraphics[width=0.998055\textwidth]{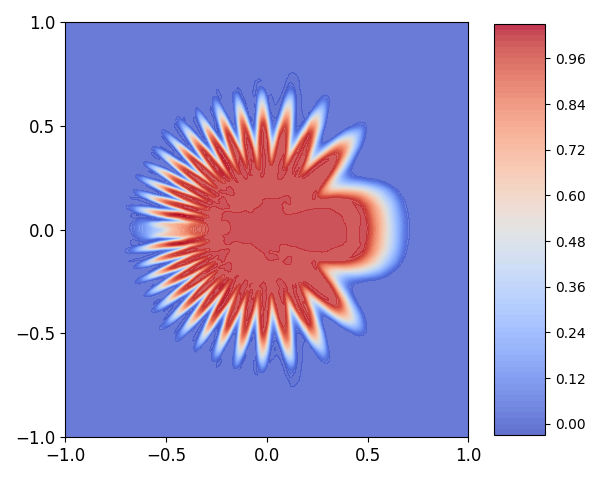}
        \subcaption{Epoch 30.}
    \end{subfigure}
    \hfill
    \begin{subfigure}[c]{0.23\textwidth}
        \centering
        \includegraphics[width=0.998055\textwidth]{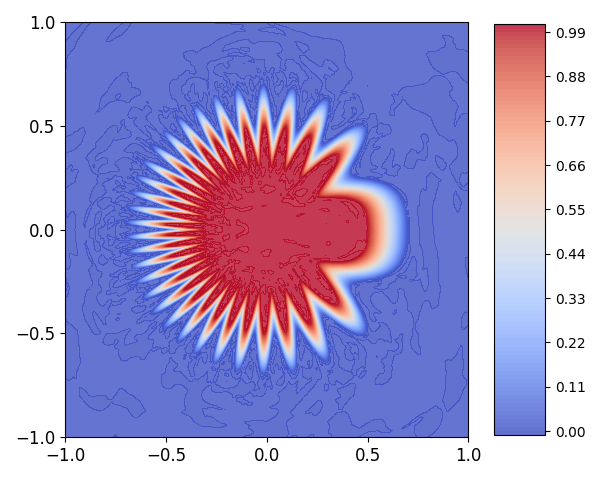}
        \subcaption{Epoch 300.}
    \end{subfigure}
    
    \caption{Illustration of the learning dynamics.}
    \label{fig:2D-dynamics}
\end{figure}

\section{Further discussion}
\label{sec:further:discussion}

In this section, we provide a few more comments about MMNNs. First, in Section~\ref{sec:advantages:MMNN:vs:FCNN}, we discuss the advantages of MMNNs over fully connected networks (FCNNs) or multi-layer perceptrons (MLPs). Next, in Section~\ref{sec:guideline:MMNN:size}, we offer practical guidelines for determining the appropriate MMNN size based on our theoretical understanding and extensive numerical experiments. Finally in Section~\ref{sec:other:activation:functions}, we discuss the use of alternative activation functions beyond \ReLU\ in MMNNs.

\subsection{Advantages compared to FCNNs or MLPs}
\label{sec:advantages:MMNN:vs:FCNN}

The two key differences between a standard FCNN or MLP  and an MMNN are (1) the introduction of the weights $\bmA, \bmc$ for different linear combinations of hidden neurons (or perceptrons) as the multi-components in each layer, and (2) the training strategy that fixes those randomly initialized $\bmW,\bmb$ (random features) in the hidden neurons. Hence it is extremely easy to modify a FCNN or MLP to an MMNN. 

    MMNNs are much more effective than FCNNs in terms of representation, training, and accuracy especially for complex functions. In comparison, as shown in those experiments in Section~\ref{sec:comparison}, MMNNs (1) have much fewer training parameters, (2) converge much faster in training, (3) achieve much better accuracy. Moreover, experiments show that training process of MMNNs converges not only faster but also with a steady rate while FCNNs saturates pretty early to a quite low accuracy, as commonly observed in practices. These nice  behaviors of MMNNs are due to their balanced structure for smooth decomposition as well as the training strategy. In practice, 
    the introduction of $\bmA, \bmc$ in MMNNs provides an important balance between the network width, which is the number of hidden neurons (basis functions) and can be very large, and the dimension of the input space, which is the number of components from the previous layer and can be much smaller than the network width. In other words, using a few linear combinations of the basis functions can capture smooth structures in the input space well. On the other hand, for FCNNs the two are the same and no balance is exerted.  

\subsection{Practical guidelines for MMNNs}
\label{sec:guideline:MMNN:size}

    There are three hyperparameters for the configuration of MMNN sizes, the network width, the number of components (rank), and the number of layers (depth). Here are the general guidelines based on our mathematical construction and extensive experiments:
    \begin{enumerate}
   \item The network width should provide enough resolution to capture fine details of the target function. This means that the width should be at least comparable to the size of an adaptive mesh that can approximate the target function well. 
\item The number of components (rank) is related to the overall complexity of the target function which depends on its spatial domain and Fourier domain representation as well as the input dimension. As indicated by our mathematical multi-component construction, it is related to the ``divide and conquer" strategy. 
\item The number of layers (depth) is also related to the overall complexity of the target function as for the number of components. Rank and depth are complementary but work together effectively for a smooth decomposition of the target function. The rule of thumb for depth is similar to that for the rank. 
    \end{enumerate}

    Here we use more concrete examples to illustrate the guidelines. For simplicity we fix the input dimension and domain of interest. As the domain size and dimension increases, the network size needs to increase correspondingly. For a smooth target function, a compact MMNN in terms of width, rank, and depth is enough and easy training process can render accurate results. Larger MMNNs are needed for target functions with localized rapid changes. Even with a relative compact size, the training process can allocate resources adaptive to the target function and render good approximation. The most difficult situation is to approximate globally highly oscillatory functions with diverse Fourier modes for which large MMNNs are needed. For instance, if the oscillation frequency doubles, the network width should increase by $2^d$ where $d$ is the dimension. In general the network width needs to deal with the curse of dimensionality just like a mesh based method. However, the growth of the number of components and layers with the increase of complexity seems to be relative mild (maybe polylogarithmic suggested by our mathematical construction).

    Overall, for a given target function, MMNNs can work well with quite a large range of configuration with a trade-off between the network size and training process. For example, the training process for a network more on the compact size with respect to the complexity of a given target function may become more subtle and challenging, e.g., choosing the appropriate learning rate and batch size, due to the lack of flexibility (or redundancy) of the representation. On the other hand, a network of too large size (or redundancy) with respect to the complexity of a given target function requires unnecessarily expensive training cost. There is a trade-off between representation and optimization one needs to balance in practice. An important question for future research is how to develop a self-adaptive strategy to adjust the network size.
    
    The most advantageous situation for using MMNNs is when approximating a function in relative high dimension which is mostly smooth except for localized fine features, e.g., a distribution in high dimensions concentrated on a low dimensional manifold. Through training, MMNNs can provide an automatic adaptive approximation of the underlying structure which can be challenging for a mesh based method. 
 
    We would like to remark that learning rate scheduler can be a subtle and important issue for all training process in practice. For all our training process, the Step Learning Rate suffices. However, one could consider using other learning rate schedulers, such as the Cosine Scheduler \cite{loshchilov2017sgdr} or the gradual warm-up strategy \cite{2017arXiv170602677G}. Exploring and designing a more efficient learning rate scheduler with some automatic restart mechanism is a potential interesting topic for future work.


\subsection{Beyond \texttt{ReLU} to other activation functions}
\label{sec:other:activation:functions}

We also tried using different activation functions for MMNNs, e.g., \GELU\ \cite{2016arXiv160608415H}, \Swish\ \cite{2017arXiv171005941R}, \Sigmoid, and \Tanh. In general, \texttt{ReLU} provides the overall best results for various target functions. 
However, in situations where a smooth (e.g., $C^s$ or real analytic) approximation  is needed, one might consider using smooth alternatives to \ReLU\ such as \GELU\  or \Swish, which generally yield results comparable to \ReLU.

Additionally, other popular S-shaped activation functions like \Sigmoid\ and \Tanh\ have demonstrated poor performance in our tests, possibly due to the vanishing gradient problem. For highly oscillatory target functions, when using \Sigmoid\ or \Tanh\ training errors did not even decrease  during the training process.

\section{Conclusion}
\label{sec:conclusion}

In this work, we introduced the Multi-component and Multi-layer Neural Network (MMNN) and demonstrated its effectiveness in approximating complex functions. By incorporating the principles of structured and balanced decomposition, the MMNN architecture addresses the limitations of shallow networks, particularly in capturing high-frequency components and localized fine features. Our proposed network structure as confirmed by extensive numerical experiments can approximate highly oscillatory functions and functions with rapid transitions efficiently and accurately. Additionally, we highlight the advantages of our training strategy, which optimizes only the linear combination weights of basis functions for each component while keeping the parameters within the activation (basis) functions fixed, leading to a more efficient and stable training process.

The theoretical underpinnings and practical implementations presented in this paper suggest that MMNNs offer a promising direction for constructing neural networks capable of handling complex tasks with fewer parameters and reduced computational overhead. Future research can explore further generalizations and applications of MMNNs, as well as investigate the interplay between representation and optimization in more depth.

\section*{Acknowledgments}

S. Zhang was partially supported by the 
start-up fund P0053092
from The Hong Kong Polytechnic University.
H. Zhao was partially supported by NSF grants DMS-2309551, and DMS-2012860. Y. Zhong was partially supported by NSF grant DMS-2309530. H. Zhou was partially supported by NSF grant DMS-2307465.


{
\small
\let\oldhref\href
\def\href#1#2{\oldhref{#1}{\nolinkurl{#2}}}
\bibliographystyle{plain}   
\bibliography{references}
}

\end{document}